\documentclass[11pt]{article}
\usepackage{format}
\usepackage{
 macros}
\usepackage{paralist}

\newcommand{\GoodV}{\G(\D_V)}

\newcommand{\Good}{\G(\D)}

\title{Publicly-Verifiable Certificates for Statistical Algorithms}
\author{Michael Ngo\thanks{Research completed while at Cornell University, supported by the Bowers Undergraduate Research Experience (BURE) and the Dean Archer Undergraduate Research Program.
This research was supported by a gift from Google.}\\MIT\\\texttt{mingo@mit.edu} \and Michael P. Kim\thanks{This research was supported by a gift from Google.}\\Cornell University\\\texttt{mpk@cs.cornell.edu}}
\date{1 April 2026}

\begin{document}
\maketitle

\begin{abstract}

Following Goldwasser, Rothblum, Shafer, and Yehudayoff, who defined a framework for interactive proofs of learning \cite{goldwasser2021interactive}, we initiate the study of non-interactive proofs of learning.
We define and study a new notion: \emph{Publicly-Verifiable Certificates of Statistical Validity} (pvCSVs), which allow for public, distributionally-robust certification that the result of a learning algorithm is valid.
In a pvCSV, a learner publishes a hypothesis $h$ and corresponding certificate $\pi$; then, \emph{any} user, who holds a user-specific distribution, can read the pair $(h,\pi)$ and determine efficiently whether the hypothesis is valid \emph{according to the user-specific distribution}.

We construct pvCSVs in the context of Adaptive Statistical Query (SQ) Algorithms.
To certify SQ algorithms that makes $k$ adaptive queries, we construct pvCSVs where the sample complexity scales with $O(\log k)$, whereas the sample complexity of the best learning algorithms scale with $\tilde{O}(\sqrt{k})$.
More generally,
we study proof systems for learning in the SQ model, demonstrating the model's strengths as well as its limitations.

 \end{abstract}

\section{Introduction}
\label{sec:intro}

Training large-scale AI models using statistical machine learning is notoriously costly.
Due to the resource demands of running ML algorithms, AI users rely upon pre-trained models from a handful of tech companies.
These companies hold enough data---so they claim---to train general-purpose models that are effective across a wide range of settings.
In this setup, however, users receive no guarantee that the AI models were trained appropriately.
If users fear that the training data does not adequately reflect their setting,
they
must investigate, for themselves, whether the model produces errors (or worse, harms) within their application.

Inspired by these issues, Goldwasser, Shafer, Rothblum, and Yehudayoff \cite{goldwasser2021interactive} introduced and studied the problem of delegating machine learning, through the PAC Verification framework.
Building on the classic model of interactive proofs \cite{gmr,babai1985trading}, PAC Verification formalizes the problem:
a statistically-limited user (the \emph{verifier}) interacts with a powerful, but untrusted learner (the \emph{prover}), who aims to convince the user that a given model (the \emph{hypothesis}) is valid.
As in cryptographic proof systems, PAC Verification requires that protocols for delegating learning satisfy formal notions of \emph{completeness} and \emph{soundness}.
To date, results in the area focus on developing protocols for verifying Agnostic PAC Learning \cite{Valiant1984,Haussler1992,Kearns1994} for specific concept classes \cite{goldwasser2021interactive,mutreja2023pac,gur2024power}.
As a notable exception, Mutreja and Shafer \cite{mutreja2023pac} also introduce a notion of delegation of Statistical Query (SQ) algorithms for loss minimization.

An essential element of prior works on delegation of learning is \emph{interaction}.
To establish whether a given hypothesis is valid, the prover and verifier exchange a sequence of messages in an online fashion, after which the verifier chooses to accept or reject the prover's hypothesis.
As a concrete example, the original work on PAC verification shows how to delegate the Goldreich-Levin algorithm \cite{Goldreich1989} (which requires point query access to the unknown function), when the verifier only has i.i.d.\ labeled samples.
In this proof system, the verifier uses its interactions with the prover to label the point queries, while cleverly hiding some points whose labels are known to the verifier to ensure soundness.
The early works on PAC verification have demonstrated that interactive proof systems provide a powerful tool for efficiently checking the results of an expensive ML computation.\footnote{In these works, and in our paper, ``efficiency'' primarily focuses on statistical resources, rather than computation.}

Interaction, however, also presents challenges.
Most immediately, running an interactive proof requires the verifier and prover to be online at the same time to execute the protocol.
Additionally, each execution of the interactive proof may require the prover to answer execution-specific challenges, including re-running the original ML computation.
Given the immense cost of training ML models in the first place, providers may be unwilling to participate in the interactive proof more than once (if at all).
In such a setting, where the interactive proof is executed once---between the learner and a single verifier---many users would have to place their trust in a single entity.
Even if users agree that the verifier is generally trustworthy, as before, individual users may worry that the verifier's data does not represent their setting and applications.

\paragraph{This Work.}
We initiate the study of non-interactive proofs of learning.
Our study leads us to a new notion:
\emph{Publicly-Verifiable Certificates of Statistical Validity (pvCSVs)}.
pvCSVs allow for public, distributionally-robust certification that the result of a learning algorithm is valid.
In particular, a pvCSV allows a learner to publish a hypothesis $h$ and certificate of validity $\pi$ that allows \emph{any} downstream user to subsequently verify that the hypothesis is statistically valid on a \emph{user-specified} distribution.

We can understand the semantics of a pvCSV by imagining two worlds.
\begin{itemize}
    \item In the first world, a user collects a huge amount of data from a distribution $\D$, relevant to their setting and application.
    Then, they correctly execute a statistical learning algorithm $\A$ on top of this data to obtain a hypothesis $h_\mathrm{ideal}$.
    \item In the second world, a centralized, well-resourced learner publishes a pvCSV $(h_\mathrm{real},\pi)$ for the algorithm $\A$; the same user from before collects a much more modest amount of data from $\D$, then reads and verifies $(h_\mathrm{real},\pi)$, using the small amount of user-specific data.
\end{itemize}
A pvCSV guarantees that if $(h_\mathrm{real},\pi)$ passes verification, the two worlds produced equally-valid hypotheses $h_\mathrm{ideal} \approx_{\D,\A} h_\mathrm{real}$ according to the user-specific data distribution and the learning algorithm $\A$.
This guarantee must hold even though the user has no knowledge of the data distribution used to produce the pvCSV.
As such, pvCSVs resolve the key shortcomings of interactive proofs for delegation of learning: the learner can execute the expensive training algorithm once and (with little overhead) generate a corresponding pvCSV certificate that any user can subsequently verify.

\paragraph{Certification of Adaptive Data Analysis.}
We make our study of pvCSVs concrete by revisiting the question of Adaptive Data Analysis \cite{dwork2015preserving} in the Statistical Query Model \cite{kearns1998efficient}.
Many tools for learning from data---including workhorse ML algorithms like gradient descent---can be framed as adaptive statistical algorithms.
In such an algorithm, the learner is allowed to ask a sequence of queries of the data distribution (e.g., \emph{What is the gradient of the expected loss over $\D$?}), where each query may depend on the results from prior queries.

Formally, we consider learning algorithms $\A$ that interact with a Statistical Query (SQ) oracle $\O$: given a tolerance $\tau$ and a query $q$, $\O(q)$ responds with a $\tau$-accurate estimate of the expectation of the predicate $q$ over the data distribution.
Critical to our investigation, the algorithm may select its sequence of queries \emph{adaptively} based on the prior responses.
That is, the algorithm's choice of the $i$-th query $q_i$ may depend arbitrarily on prior queries $q_1,\hdots,q_{i-1}$ and responses $\O(q_1),\hdots,\O(q_{i-1})$, (e.g., \emph{What is the gradient at the $i$-th iterate, after taking $i-1$ gradient descent steps?}).

While the paradigm of adaptive data analysis is a versatile and powerful tool for learning, such algorithms are known to be statistically expensive.
About a decade ago, \cite{dwork2015preserving} identified adaptivity as a key issue in statistical algorithms.
To maintain the statistical validity of an adaptive analysis requires the learner to either resample fresh data to support every new query, or to employ sophisticated (differentially-private) algorithms for answering queries in a way to prevent overfitting to the data set \cite{dwork2015preserving,dwork2015generalization,dwork2015reusable,bassily2016algorithmic,feldman2018calibrating,jung2019new,dagan2022bounded,blanc2025subsampling}.
To answer $k$ adaptively selected statistical queries, the best algorithms use a number of samples scaling (roughly) with $\sqrt{k}$, and in fact, this dependence is essentially tight \cite{hardt2014preventing,steinke2015interactive}.

In other words, no matter what techniques are employed, adaptive statistical algorithms require \emph{exponentially} more data than non-adaptive (batch) statistical analyses of similar size.
In this work, we ask when we can certify the results of adaptive data analysis more efficiently than learning.

\subsection{Our Contributions}

We develop proof systems for delegating \emph{arbitrarily-adaptive} statistical algorithms
where the verifier requires samples scaling only with the \emph{non-adaptive} complexity.
In doing so, we realize an \emph{exponential} gap, between the sample complexity required for executing SQ algorithms versus verifying them.
Moving beyond prior work on \emph{interactive} proofs for learning, we build novel \emph{non-interactive} proof systems---namely, \emph{publicly-verifiable Certificates of Statistical Validity}---that enable a new form of distributionally-robust validation of statistical learning.
Along the way, we develop a number of extensions to earlier models of proof systems for delegation of learning.

\paragraph{Publicly-Verifiable Certificates of Statistical Validity.}
In Section~\ref{sec:pvcsv}, we introduce our primary contribution:  a new notion of proof that allows for public, distributionally-robust certification of learning.
A \emph{publicly-verifiable Certificate of Statistical Validity} (pvCSV) is a non-interactive proof system that allows any verifier to certify the results of a statistical computation \emph{with respect to the verifier's own distribution}.
A pvCSV allows a single, well-resourced learner---the prover---to publish the results of a statistical algorithm in a way that can be checked efficiently (with less resources) by any downstream verifier; in particular, the verifiers need not hold the same distribution as the prover.
Instead, the proof system guarantees a \emph{universal soundness} property such that if the verifier accepts the proof, then the results of the statistical algorithm are valid on the verifier's distribution---even if the algorithm was executed using samples from a different distribution.
\begin{newdef}[pvCSV, informal]
    A \define{publicly-verifiable Certificate of Statistical Validity} is a non-interactive proof system, where a prover $P$ with distribution $\D_P$ publishes a hypothesis $h$ paired with certificate $\pi$.
    Any verifier $V$ with distribution $\D_V$ can read the pair $(h,\pi)$ and accept or reject where the following guarantees hold with high probability.
    \begin{compactitem}
        \item \define{Completeness}: if $\D_P = \D_V$, there exists an honest prover pair $(h,\pi)$ such that $h$ is valid for $\D_V$ and $V$ accepts.
        \item \define{Universal Soundness}: for any verifier $V$ with distribution $\D_V$, for any (possibly-cheating) prover pair $(\tilde{h},\tilde{\pi})$, if $V$ accepts, then $\tilde{h}$ is actually valid for $\D_V$.
    \end{compactitem}
\end{newdef}
One way to understand the guarantee of pvCSVs is as a robust proof of statistical validity, \emph{without an explicit assumption about distributional shifts}.
Rather than positing some known relationship between the prover and verifier distributions, the verification procedure works for any $\D_P$ and $\D_V$ and results in acceptance whenever the published certificate---derived from the execution of a learning algorithm $\A$ using the prover's distribution---reflects some legitimate execution on the verifier's distribution.
While our notion of completeness assumes that $\D_P = \D_V$, the guarantee is more subtle than this equality would suggest.
The distributions $\D_P$ and $\D_V$ may differ significantly in composition, but if the verifier accepts, then (by universal soundness) the hypothesis $h$ is valid for $\D_V$ (because the distributions are  indistinguishable according to some invocation of $\A$).
The careful reader will note that this ``universal soundness'' condition is actually implied by standard soundness for delegation of learning,\footnote{The honest prover using $\D_P$ may be viewed as a cheating prover with respect to the verifier holding $\D_V$.} but holds new significance in the context of non-interactive proofs.
Our view on soundness, paired with a non-interactive proof system, allows us to realize certificates of learning that can be verified publicly by any user.

We can use various measures of complexity to evaluate the quality of a pvCSV construction.
Principle to our work is sample complexity: we aim for pvCSVs where the verifier saves considerably in the number of samples needed from $\D_V$ compared to the number of samples required for learning (or proving).
Further, we can track other measures like time complexity (of both the verifier and the honest prover) and proof length as well.
We define pvCSVs formally in Section~\ref{sec:pvcsv} and provide a thorough discussion of the notion and its properties (like validity and universal soundness) therein.

With this key definition in place, the main technical contributions of this work are to construct pvCSVs for adaptive statistical algorithms within the SQ learning framework.
Our pvCSVs achieve an \emph{exponential} gap in the sample complexity between SQ verification and SQ learning.
While the core idea of each of our constructions is similar, the resulting protocols differ in important ways based on properties of the original SQ algorithm.
As we show, the way that the algorithm uses randomness and the way the SQ oracle is allowed to adapt to the algorithm's internal state are key factors in building universally-sound pvCSVs.

\paragraph{pvCSVs for Deterministic SQ Algorithms.}
In Section~\ref{sec:deterministic}, we consider SQ algorithms that do not use internal randomness.
These ``deterministic'' SQ algorithms are considerably simpler to reason about, and as such, we obtain strong, statistically-sound pvCSVs for all such algorithms.\footnote{We note that such ``deterministic'' algorithms may still have non-deterministic behavior, due to the algorithm's adaptivity to the responses from the SQ oracle, which is assumed to be $\tau$-accurate, but otherwise, adversarial.}

\begin{result}
    \label{result:deterministic}
    Let $\A$ be a deterministic SQ algorithm that learns a concept $\G$.
    Suppose $\A$ makes $k$ adaptive queries to a $\tau$-accurate SQ oracle $\O$.
    There exists a pvCSV scheme for certifying $\G$ (with constant failure probability) achievable in the following complexities.
    \begin{compactitem}
        \item Honest prover sample complexity:  $m_P \le \tilde{O}(\sqrt{k}/\tau^2)$
        \item Verifier sample complexity:  $m_V \le O(\log(k)/\tau^2)$
        \item Certificate size:  $\card{\pi} \le O(k \cdot \log(1/\tau))$
    \end{compactitem}    
\end{result}
In other words, even though the best learning algorithms use $\tilde{O}(\sqrt{k})$ samples to support $k$ adaptively-chosen queries, there is a short certificate (namely, the results of the queries) that convinces a verifier with considerably less information about the distribution in $O(\log(k))$ samples.
The bounds of Theorem~\ref{result:deterministic} follow from the prover and verifier collectively executing a direct simulation of the original SQ algorithm with no overhead for the honest prover; in this sense, the delegation scheme can be thought of as \emph{doubly-efficient} in its statistical complexity, \`a la \cite{gkr}.
Computationally, both the prover and the verifier need to simulate the execution of the underlying SQ algorithm, so the time-complexity scales roughly linearly in the running time of the original algorithm.

\paragraph{pvCSVs for Randomized SQ Algorithms and SQ Protocols.}
In Section~\ref{sec:randomized}, we investigate how to build pvCSVs for SQ algorithms that use randomness.
Randomness introduces significant technicality in the resulting proof systems.
Here, we give an informal description of our results, including various definitions we develop to obtain the results.

Towards a pvCSV for randomized SQ algorithms, we need to reason about how the algorithm---and the (possibly-adversarial) SQ oracle---may act adaptively to the randomness.
First, we say that a randomized SQ algorithm operates in $\ell$ ``epochs'' if it can be broken down into a series of $\ell$ deterministic SQ algorithms, which each take a fresh random string as an input.
Next, we say that an SQ oracle is ``public-state'' if it may choose its query responses as a function of the SQ algorithm's internal randomness (in contrast to an ``oblivious'' oracle, which has no knowledge of the algorithm's randomness).
We construct pvCSVs in the Random Oracle Model (ROM) \cite{bellare1993random} for all constant-epoch randomized SQ algorithms that are correct using a public-state SQ oracle.
\begin{result}\label{result:randomized}
    Let $\A$ be a randomized SQ algorithm that learns a goal $\G$ with high probability.
    Suppose $\A$ makes $k$ adaptive queries to a $\tau$-accurate public-state SQ oracle $\O$, over an execution of $O(1)$ epochs.
    There exists a computationally-sound pvCSV scheme in the ROM for certifying $\G$ (with constant failure probability) achievable in the following complexities.
    \begin{compactitem}
        \item Honest prover sample complexity: $m_P \le \tilde{O}(\sqrt{k}/\tau^2)$
        \item Verifier sample complexity: $m_V \le O(\log(k)/\tau^2)$
        \item Certificate size: $\pi \le O(k \cdot \log(1/\tau))$
    \end{compactitem}   
\end{result}

In fact, Theorem~\ref{result:randomized} follows from a much more general result.
We start by adapting the definition of delegation of learning \cite{goldwasser2021interactive} to the statistical query setting, yielding a format for interactive learning which we call \emph{SQ protocols}.
We show that a large class of interactive SQ protocols (of which randomized SQ algorithms are a special case) can be compiled into a \emph{canonical SQ protocol} with desirable properties.
The class consists of SQ protocols where the verifier may send random and non-random challenges to the prover and may make private statistical queries (not sent to the prover) to its public-state oracle; we call these protocols ``mixed-message, private-query'' SQ protocols.
After compilation, we obtain a canonical SQ protocol with the same completeness and soundness guarantees as the original, but which is public-coin (verifier only sends random challenges) and public-query (verifier reveals all SQs to the prover).
Importantly, the canonical verifier still only makes a single batch of statistical queries.
\begin{resultlem}
    \label{result:canonical}
    Suppose $(P,V)$ is a mixed-message, private-query SQ protocol where the verifier makes $k$ adaptive queries to a $\tau$-accurate public-state SQ oracle and verifies concept $\G$.
    There exists a canonical public-coin, public-query SQ protocol $(\mathsf{P}_\mathsf{can},\mathsf{V}_\mathsf{can})$ that verifies $\G$ with the same completeness/soundness as $(P,V)$, with the following properties.
    \begin{compactitem}
        \item $\mathsf{P}_\mathsf{can}$ makes at most $k$ additional statistical queries compared to $P$;
        \item $\mathsf{V}_\mathsf{can}$ makes a single, non-adaptive batch of $k$ statistical queries.
    \end{compactitem}
\end{resultlem}
This canonical SQ protocol can then be compiled into a pvCSV via a Fiat-Shamir transformation \cite{fiat1986prove}, whose blow-up in soundness scales exponentially with the ``epoch complexity'' of the original protocol.
The sample complexities claimed in Theorem~\ref{result:randomized} follow by giving sample-based implementations of an adaptive SQ oracle (for the prover) and non-adaptive SQ oracle (for the verifier).
As a consequence, we obtain computationally-sound pvCSVs for a much broader class of learning goals---those that can be solved by an interactive SQ protocol according to Lemma~\ref{result:canonical}.

As may be evident, to appropriately reason about delegation of learning in this context, we need to reason about a number of novel concepts (e.g., SQ protocols, public-coin, public-query, epoch complexity, etc.).
An additional key contribution of our work is laying out precise definitions for these notions that arise in the study of proof systems for learning, presented formally in Section~\ref{sec:random-prelims}.
We discuss our construction of pvCSVs for randomized algoirthms/SQ protocols in greater detail within the Technical Overview.

\paragraph{Beyond pvCSVs: The Limits of SQ Protocols.}

Building pvCSVs, particularly for randomized SQ algorithms, required investigating aspects of more general interactive protocols for statistical query learning.
We complement our constructions of pvCSVs with a few results about the strengths and limitations of SQ protocols.

We show that, quite generically, verification of SQ protocols can be made statistically non-adaptive, in the sense that the verifier issues a single batch of statistical queries, and thus, has sample complexity scaling logarithmically in the number of queries.
The exact class of protocols which we can delegate soundly in this manner is a bit technical, but it consists of a large class of ``public-query'' SQ protocols.
In particular, the class of protocols includes private-coin protocols, where the verifier maintains secret randomness from the prover, but allows the prover to know which statistical queries the verifier issues.

\begin{resultprop}[Informal]
\label{result:protocol}
For every
public-query SQ protocol where the verifier makes $k$ adaptive queries to a $\tau$-accurate public-state SQ oracle, there is an equivalent SQ protocol where the verifier $V$ makes a single, non-adaptive batch of queries; that is, $V$ has sample complexity $m_V \le O(\log(k)/\tau^2)$.
\end{resultprop}
This result shows that statistical validation of a large class of SQ protocols can be done in non-adaptive sample complexity.
A natural question, then, is whether this statistically-efficient verification scheme can be made computationally-efficient.
Unfortunately, we show a barrier to generic computational savings in the SQ model.
Piggybacking off of a sample complexity lower bound given by \cite{mutreja2023pac} for PAC Verfication, we obtain a lower bound on the verifier's \emph{query complexity} for the same class of SQ protocols from Proposition~\ref{result:protocol}.
\begin{resultcor}[Informal]
    \label{result:lb}
    For every hypothesis class $\H$ of VC dimension $d$, in any SQ protocol (as in Proposition~\ref{result:protocol}) that $\eps$-PAC Verifies $\H$ using a $\tau$-accurate SQ oracle for $\tau \approx \eps$, the verifier makes $2^{\Omega(\sqrt{d})}$ statistical queries.
\end{resultcor}

\paragraph{SQ Verification under Differential Privacy.}
Finally, we show that the verifiers of all of our pvCSVs and interactive SQ protocols can be implemented under Differential Privacy \cite{dwork2006calibrating}.
Differential Privacy (DP) is the de facto notion to protect individuals' data in statistical analyses, but it can be statistically costly: implementing SQ algorithms under DP requires sample complexity akin to adaptive data analysis.
This result shows that a verifier can validate the results of a statistical analysis much more efficiently, while still maintaining privacy over their own data set.

The new private verifiers use essentially the same number of samples as the non-private verifiers.
Concretely, we state the result for the pvCSV verifiers.
\begin{resultprop}[Informal]
    \label{result:dp}
Consider the pvCSV verifier from either Theorem~\ref{result:deterministic} or Theorem~\ref{result:randomized}.
    There exists an $\eps$-DP implementation of the verifier over the samples from $\D_V$
(with constant failure probability) whose sample complexity scales as
    $O(\log(k)/\tau^2 + \log(k)/\tau \eps)$.
\end{resultprop}

\subsection{Technical Overview and Discussion of Results}
\label{sec:overview}

In the remainder of the introduction, we give a more detailed overview of our models and results.
Throughout, we aim to provide pointers into the main text for formal presentation.
We include discussion of the significance of the results, connections to prior works, as well as possible extensions.

The primary goal of our work is to take an arbitrarily-adaptive SQ algorithm $\A$ and turn it into a pvCSV that can be verified efficiently.
At a high-level our approach is simple:  require the prover to run a direct simulation of $\A$; then, check the prover's work.
In particular, the statistically-expensive aspect of adaptive data analysis is generating the sequence of queries $q_1,\hdots,q_k$.
Once the sequence has been generated, however, the answers to the queries $q_1,\hdots,q_k$ can be checked in a single, non-adaptive batch of $k$ statistical queries.

This observation immediately suggests a non-interactive proof system for delegating ``deterministic'' SQ algorithms that do not use any internal randomness, which is the focus of Section~\ref{sec:deterministic}.
\begin{compactitem}
    \item The honest prover $P$ executes $\A$ using their own SQ oracle $\O_P$ to answer any necessary queries.
    Along the way, the prover records the queries and results, and at the end, when $\A$ outputs some hypothesis $h$, $P$ sends $h$ and $\pi = \langle q_1, \O_P(q_1),\hdots,q_k,\O_P(q_k) \rangle$ to the verifier $V$.
    \item To certify that $h$ is valid, the verifier must re-run $\A$, but rather than using its own oracle, it will answer any statistical queries using the answers from $\pi$.
    If at any point, the verifier's execution $\A$ requires a query $q$ that is not provided in $\pi$, then $V$ rejects immediately, since the transcript sent by the prover was not consistent with the execution of $\A$.
    If the transcript is consistent, then the verifier concludes the protocol by issuing the batch of statistical queries from $\pi$ to its own oracle $\O_V$, and checks that $\O_V(q_i)$ is sufficiently close to the reported $\O_P(q_i)$ for each query.
\end{compactitem}
Note that even though we think of $\A$ as deterministic since it does not use randomness, its adaptivity to the SQ oracle responses introduces non-determinism, so verifying the consistency of the transcript is a non-trivial aspect of the verifier's check.

Given the simplicity of this delegation scheme, both the honest prover and verifier can be implemented very efficiently.
The only overhead of the honest prover is to record the results of their statistical queries to be sent to the verifier (or posted for public verification), so the sample and time complexities scale precisely with the original complexities of the SQ algorithm $\A$.
To answer a sequence of adaptive queries to $\tau$-accuracy, the prover can be implemented in $\tilde{O}(\sqrt{k}/\tau^2)$ samples \cite{bassily2016algorithmic,dagan2022bounded, blanc2025subsampling}.
The verifier also has to execute the algorithm $\A$, so there is no computational savings, but the statistical savings are exponential.
By concentration bounds, checking a batch of $k$ statistical queries to $\tau$-accuracy can be achieved from $O(\log(k)/\tau^2)$ samples.

\paragraph{Understanding Universal Soundness.}
One of the key selling points of pvCSVs is the universal soundness condition: that any verifier holding $\D_V$ which may differ significantly from the prover's distribution $\D_P$ can check the certificate while maintaining soundness.
As described above, our pvCSV construction does not explicitly distinguish between the prover's distribution $\D_P$ and the verifier's $\D_V$.
But in a sense that can be made formal, to obtain universal soundness, we only need for standard soundness to hold from the verifier's perspective.
Specifically, even if the pvCSV $(h,\pi)$ was generated honestly by a prover $P$ holding $\D_P$, to the verifer holding $\D_V$, we can imagine $P$ to be a potential cheating prover.
In this case, the verifier may reject outright, or may accept if $h$ actually satisfies the learning goal over $\D_V$.
The verifier's final statistical validation---paired with the correctness guarantee of SQ algorithms---ensures that if the reported query responses from $\pi$ are sufficiently close to the expectations on the verifier's distribution $\D_V$, then $h$ is the result of some valid invocation of the SQ algorithm $\A$ over $\D_V$.

\paragraph{Handling Randomness.}
For algorithms that use randomness, we cannot simply trust the prover to report a direct simulation of the execution of $\A$, using untrusted randomness.
Instead, a natural idea for a pvCSV is to convert the randomized algorithm into a public-coin protocol, then apply a Fiat-Shamir transformation \cite{fiat1986prove}, to obtain a non-interactive proof.
In Section~\ref{sec:randomized}, we show that this approach works to yield pvCSVs, but we need to be careful in how we reason about the algorithm's use of randomness, as well as how the SQ oracle affects the proof of soundness.

Starting from a randomized SQ algorithm $\A$, we imagine breaking the algorithm into a series of ``epochs'' where in the $i$-th epoch, the algorithm samples fresh randomness $r_i$, and then executes a deterministic SQ algorithm $\A_i$ until the end of the epoch.
Every randomized algorithm can be broken into epochs, but different algorithms require more or fewer epochs.
At the low extreme, an algorithm that uses a random initialization, then executes deterministically, would have epoch complexity $1$; at the other extreme, a stochastic optimization algorithm that makes random choices at every iteration based on fresh independent coins will have high epoch complexity.
The epoch complexity of the algorithm $\A$ controls the complexity of the resulting proof system.

In particular, we can turn any SQ algorithm of epoch complexity $\ell$ into a public-coin interactive proof in the SQ model of round complexity $\ell$.
At the start of each epoch $i$, the verifier $V$ sends randomness $r_i$ to the prover $P$.
Then, the honest prover $P$ simulates the deterministic algorithm $\A_i(r_i)$ using its own oracle $\O_P$ to answer any statistical queries.
At the end of the epoch, the prover can return a transcript, similar to in the deterministic case, that summarizes the statistical queries and oracle responses.
At the end of all epochs, the verifier can issue a non-adaptive batch of queries to $\O_V$ to ensure that the responses were all sufficiently accurate.

As described, the SQ protocol is complete, but has a subtle issue with soundness without further assumptions.
In the original algorithm, $\A$ issues its queries to an SQ oracle $\O$, whereas in the protocol, the verifier $V$ simulates $\A$ but delegates the SQs to the prover.
The key distinction between these two setups is that the prover also receives the internal randomness of the algorithm $r_i$, as the public-coin message at the start of each epoch.
In other words, to ensure soundness, the original randomized algorithm $\A$ must be correct even for SQ oracles that have full knowledge of the state of $\A$.
Algorithms whose correctness hinges on the obliviousness of $\O$ to the randomness of $\A$ cannot be delegated in this way while maintaining soundness.
Formally, we define ``public-state'' SQ oracles in Section~\ref{sec:random-prelims}, along with the corresponding correctness notion for SQ algorithms.

Once we restrict our delegation to randomized SQ algorithms which are correct under this more powerful SQ oracle, then we obtain a sound public-coin protocol.
To obtain pvCSVs, we appeal to the Random Oracle Model (ROM), and show how to apply the Fiat-Shamir transform to our protocol.
There is some subtlety in defining the notion of universal soundness for pvCSVs in the ROM and then subsequently arguing that Fiat-Shamir applied to our protocol obtains such soundness.
With the appropriate definitions in place, the soundness proof follows by following the state restoration approach of \cite{ben2016interactive,chiesa2024snargsbook}.

\paragraph{Beyond SQ Algorithms.}
Building pvCSVs for randomized SQ algorithms required us to define and investigate a number of more general models of proof systems in the SQ model.
A key result we show in Section~\ref{sec:canonical} is that a diverse class of interactive SQ protocols can be compiled into a canonical public-coin SQ protocol.
In a sense, this result is an analogue of the established understanding of public-coin protocols for delegation of computation.
We show that, quite generically, if an SQ protocol consists of a verifier that reveals its randomness to the prover, there is a canonical version of the protocol where the only messages the verifier sends to the prover are its randomness.
Further, the verifier need not maintain any private statistical queries, but can delegate all of its queries to the prover, then execute one final non-adaptive validation.
This result holds in the same model of public-state SQ oracle, which may adapt to the state of the verifier.
With this canonical compiler for SQ protocols, we can similarly obtain computationally-sound pvCSVs in the ROM, by applying Fiat-Shamir to the canonical protocol.

\paragraph{Towards Computationally-Efficient Certification of Learning.}
In this work, our focus is on developing statistically-efficient proofs of learning.
We make no effort to optimize the verifier's running time.
Our pvCSV verifier for an SQ algorithm $\A$ runs in time proportional to the running time of $\A$.
A natural question in the study of proof systems is whether we can save on computation time during verification.

In Section~\ref{sec:limits}, we show some barrier to generic speed-ups (at least within the SQ framework).
In particular, Corollary~\ref{result:lb} shows that there are statistically learnable VC classes, for which there is no computationally-efficient SQ verification scheme, even if we allow for interaction.
Our SQ lower bound is actually a consequence of the \emph{efficiency} of our verification schemes (in terms of the number of statistical queries) and an existing lower bound on the samples required for verification from \cite{mutreja2023pac} in terms of the VC dimension.
In other words, a fast SQ verification scheme (which doesn't make many SQ queries) would imply an impossibly-statistically-efficient verification scheme for the VC class studied by \cite{mutreja2023pac}.

Nevertheless, in some sense, our exploration of pvCSVs and SQ protocols provides new design principles for simultaneously statistically and computationally efficient verification.
Recall that the canonical protocol of Lemma~\ref{result:canonical}, which compiles into a pvCSV as in Theorem~\ref{result:randomized}, applies not just to randomized algorithms but to a broad class of SQ protocols.
Further, the verifier in the resulting pvCSV runs in time proportional to (or less than) the verifier in the original protocol---not the full simulation of $(P,V)$.
In other words, if we can design SQ protocols that reduce verification time, then we may get simultaneous statistical and computational efficiency for free (provided the protocol is covered by Lemma~\ref{result:canonical}).

Concretely, our results show that minimizing the epoch complexity of a learning procedure---even if that entails designing an interactive proof system---may actually lead to more efficient pvCSVs than simply focusing on delegating algorithms for learning.
Such observations may also motivate a deeper study of pseudorandomness in machine learning; for instance, if a randomized SQ algorithm can actually be proved correct under a weaker source of randomness, it may reduce the epoch complexity and improve efficiency.

\paragraph{Publicly-Verifiable End-to-End DP.}
Finally, in Section~\ref{sec:dp-verification}, we show Proposition~\ref{result:dp}, which implies a mechanism for verifying pvCSVs statistically efficiently under Differential Privacy with respect to the verifier's samples.
Given such a verifier, it's tempting to wonder whether we can achieve End-to-End DP over both the prover's samples from $\D_P$ and the verifier's from $\D_V$.
The existence of a DP verifier implies an honest prover strategy to achieve such a goal.
Due to the connections between adaptive data analysis and DP, this honest prover strategy is no more expensive than accounting for adaptivity.

That said, in our setting, we'd have no guarantees about the behavior of the cheating prover, who might violate DP arbitrarily.
To handle the possibility of a cheating prover, we could employ a scheme for Certified DP of \cite{bell2024certifying}.
Certified DP ensures that a prover's release of statistical queries follows a DP mechanism with respect to some committed-to, but untrusted database $X \sim \D_P$, in a manner that can be verified publicly.
In this sense, adding our DP statistical validation scheme as post-processing to the output of a Certified DP mechanism would allow the verifier to be convinced of End-to-End DP, while also ensuring statistical accuracy of queries from the untrusted database $X$ under the verifier's distribution $\D_V$.

\subsection{Related Works}

Our study of pvCSVs and SQ protocols lives in the intersection of learning theory and cryptographic proof systems.
We highlight some of the most relevant related lines of work.

\paragraph{Proof Systems for Learning and Statistics.}
Proof systems for statistical algorithms grew out of the literature on interactive (cryptographic) proof systems \cite{gmr,babai1985trading}.
The PAC Verification, introduced in \cite{goldwasser2021interactive}, led to a number of results about strengths and limitations of interactive proofs of learning \cite{mutreja2023pac,gur2024power}.
Most related to our work is the work of Mutreja and Shafer \cite{mutreja2023pac}, who also thought about the issue of delegating SQ algorithms in the context of loss minimization.
In fact, in a subsequent journal version \cite{MSjournal}, the authors independently include a result analogous to Proposition~\ref{result:protocol}, on the interactive delegation of adaptive SQ algorithms.

Other works have investigated certifying properties of learning algorithms.
\cite{bell2024certifying} recently designed mechanisms for public cerification of differential privacy (and other private probabilistic mechanisms) in the release of statistical queries.
Following this work, \cite{bell2025efficient} extended the techniques to give a publicly-verifiable implementation of DP Stochastic Gradient Descent, without appealing to heavy-handed cryptographic primitives for certifying generic computations.

Beyond proofs for learning algorithms, there has been significant work investigating proof systems for properties of distributions.
Initiated by \cite{chiesa2018proofs}, a sequence of works \cite{herman2022verifying,herman2023doubley,herman2024interactive,herman2024verify} has established doubly-efficient proof systems for distribution testing.

\paragraph{Adaptive Data Analysis.}
Since its identification as a key algorithmic challenge \cite{dwork2015reusable,dwork2015generalization,dwork2015preserving}, Adaptive Data Analysis has seen significant developments, in line with the developments of sophisticated tools for Differential Privacy \cite{dwork2006calibrating}.
The majority of work studying adaptive data analysis focuses on the problem of answering statistical queries.
\cite{bassily2016algorithmic} first showed near-optimal $\tilde{O}(\sqrt{k})$ dependence on the number of adaptively chosen queries.
Subsequently, a sequence of papers has simplified their analysis and techniques and improved their bounds \cite{feldman2018calibrating,jung2019new,dagan2022bounded}.
Recently, \cite{blanc2025subsampling} showed that a much simpler subsampling mechanism actually suffices for optimal adaptive data analysis---despite the fact that it does not suffice for DP.
Indeed, the analysis of \cite{blanc2025subsampling} does not go through the standard ``transfer theorem'' that DP implies adaptive generalization, but rather analyzes the effects of subsampling directly.

\section{Preliminaries}
\label{sec:prelims}

We give formal definitions of the statistical query model of learning \cite{kearns1998efficient} that we adopt, as well as proof systems developed in the context of verifying machine learning \cite{goldwasser2021interactive,mutreja2023pac}.
We start with background on the SQ Model.
Then, we discuss our formalisms for algorithms and proof systems that use SQ oracles.
We defer some technical aspects of our learning model (particularly those related to algorithms' use of randomness) to the relevant technical section (Section~\ref{sec:random-prelims}).
Finally, we review background on adaptive data analysis and the complexity of implementing SQ algorithms from samples \cite{dwork2015generalization}.

\subsection{The Statistical Query Model}

The Statistical Query (SQ) Model \cite{kearns1998efficient} abstracts away the notion of learning from samples to the notion of learning from approximate statistics.
In this model, the learning algorithm is allowed to specify predicates $q:\X \to \{0,1\}$ from a collection\footnote{In our work, we take $\Q= \{0,1\}^\X$ to be the set of all boolean functions, so often drop explicit reference to $\Q$. In this work, we will not consider the computational complexity of evaluating functions $q \in Q$, and instead, measure the complexity of learning in terms of the total number/sequence of queries issued.} $\Q \subseteq \{0,1\}^\X$ and receive---from the \emph{statistical query oracle}---the (rough) expectation of the queried predicate on the distribution of interest $\D$.

\begin{definition}[SQ Oracle, idealized]
    \label{def:oracle}
    A \define{Statistical Query (SQ) Oracle} is a stateful algorithm $\O:\Q \to [0,1]$ that takes input queries $q \in \Q$ and responds with evaluations $\O(q) \in [0,1]$.
    For $\tau > 0$, the oracle $\O$ is \define{\mathdefine{\tau}-accurate over \mathdefine{\D}} if for any finite sequence of $k \in \mathbb{N}$ (adaptively-selected) queries $q_1,q_2,\hdots,q_k$, for all $i \in [k]$
    \begin{gather*}
        \card{\O(q_i) - \E_{X \sim \D}[q_i(X)]} \le \tau.
    \end{gather*}
\end{definition}
Many notable learning algorithms can be described in the SQ Model, including most (Agnostic) PAC Learning algorithms\footnote{Learning Parities is the most notable example of a task that is PAC learnable, but not SQ learnable.} \cite{kearns1998efficient} as well as 
more modern ML algorithms like Gradient Descent.

We remark that, per the definition, the SQ oracle $\O$ with $\tau$-accuracy must always report approximate expectations within $\tau$ of the true expectation.
As such, given access to an SQ oracle, we can hope to design algorithms that always succeed.
In Section~\ref{sec:sq-samples}, we discuss concrete, sample-based implementations of the SQ oracle abstraction, which necessarily introduce a failure probability.

\paragraph{Learning Goal.}
We consider an abstract setting of learning over distributions $\D$ supported on a domain $\X$, such as $\X = \{0,1\}^d$ for some finite dimension $d \in \mathbb{N}$.
Additionally, we consider an abstract collection of hypotheses $\H$.
We define the goal of learning in terms of identifying a hypothesis within some ``good'' set of hypotheses $\Good \subseteq \H$, parameterized by the distribution $\D$.

\begin{definition}[Learning goal, abstract]
    \label{def:learning}
    Fix a class of hypotheses $\H$ and a learning goal $\G\subseteq\H$.
    For a distribution $\D$, we say an algorithm $\A$ \define{learns \mathdefine{\G}} if $\A$ outputs some $g \in \Good$ in the target set of hypotheses.
\end{definition}
For instance, in the context of Agnostic PAC learning, we can take $\H$ to be the concept class $\H \subseteq \{h: \{0,1\}^d \to \{0,1\}\}$, and the good subset $\Good \subseteq \H$ to be all hypotheses achieving classification error $\Pr[g(x) \neq y] \le \Pr[h^*(x) \neq y] + \eps$ competitive with the best $h^* \in \H$ over $\D$.
We adopt this abstract notion of learning, rather than a concrete notion like the PAC framework, in order to emphasize the generality of our approach.
At the extreme, we may take the hypotheses $\H$ to be the collection of sequences of statistical queries and expectations, and the good set $\Good$ to be sequences that arise as a valid execution of a statistical algorithm on $\D$.
(We comment further on our choice of abstract Learning Goal after Definition~\ref{def:protocol} of protocols for delegating SQ learning.)

\paragraph{Statistical Query Algorithms.}
Naturally, we define SQ algorithms as algorithms that may make calls to an SQ oracle.
For SQ algorithms that solve a learning goal, we need to specify the approximation parameter $\tau > 0$ necessary to guarantee correctness.
Formally, we distinguish between deterministic and randomized SQ algorithms.

\begin{definition}[SQ Learning, deterministic]
    Fix a learning goal $\G$ and $\tau > 0$.
    A deterministic algorithm $\A$ \define{\mathdefine{\tau}-SQ learns \mathdefine{\G}} if, for any distribution $\D$ and $\tau$-accurate SQ oracle $\O$ for $\D$, $\A^\O$ outputs a hypothesis $g \in \Good$.
\end{definition}

When $\A$ is deterministic, without loss of generality, we can think of the SQ oracle $\O$ as a pre-specified function where $\O(q)$ is defined for all $q \in \Q$ up front, rather than a stateful algorithm responding sequentially.
Specifically, if the queries that $\A$ issues are a deterministic function of the input to $\A$ and the responses given by $\O$ so far, then a pre-specified function can simulate any stateful oracle, by simply running $\A$ ahead of time.
Note the output of running $\A$ with $\tau$-accurate oracle $\O$ may still be nondeterministic based on the choice of $\O$, even if $\A$ doesn't flip coins.
That said, we insist that given a $\tau$-accurate (idealized) oracle $\O$, the algorithm $\A$ always succeeds.

\paragraph{Randomized SQ Algorithms.}
In our study of SQ proof systems, the distinction between deterministic and randomized algorithms is significant.
In contrast to deterministic algorithms, when $\A$ is randomized, it becomes important to think of the SQ oracle $\O$ as a stateful adversary, who responds with knowledge of prior queries/responses, subject to $\tau$-accuracy.
That is, for a sequence of queries $q_1,q_2,\hdots,q_k$, the response $a_i = \O(q_i)$ may depend on $\langle q_1,a_1,q_2,a_2,\hdots,q_{i-1},a_{i-1},q_i\rangle$.
An SQ algorithm that learns $\G$ must output a good hypothesis with high probability, no matter what decisions the oracle makes in response to the sequence of queries.
\begin{definition}[SQ Learning, randomized]
    Fix a learning goal $\G$ and $\tau>0$, $\gamma>0$. A randomized algorithm $\A$ \define{\mathdefine{(\tau,\gamma)}-SQ learns \mathdefine{\G}} if for any distribution $\D$, and any $\tau$-accurate SQ oracle $\O$ for $\D$, $\A^\O$ outputs a hypothesis $g \in \Good$ with probability at least $1-\gamma$ over the random coins of $\A$.
\end{definition}

With this high-level definition in place, we defer significant details and definitions about randomized SQ algorithms to Section~\ref{sec:random-prelims}, which are essential to understanding our pvCSVs for randomized SQ algorithms and SQ protocols.
In particular, we consider two flavors of SQ oracles for randomized algorithms, ``oblivious'' oracles (as in Definition~\ref{def:oracle}) and ``public-state'' oracles (see Definition~\ref{def:publicstate}).

\paragraph{Complexity Measures for SQ algorithms.}

An important complexity measure of SQ algorithms is the number of statistical queries the algorithm makes $B$, as well as the \emph{adaptivity} to prior query responses $k$.
The adaptivity of an SQ algorithm $\A$ is the number of rounds in which batches of queries are issued to $\O$.
At the extremes of adaptivity, $k = B$ is a fully-adaptive algorithm, where the choice of each query $q_i$ is determined as an arbitrary function of the prior queries and responses $\langle q_1,\O(q_1),q_2,\O(q_2),\hdots,q_{i-1},\O(q_{i-1}) \rangle$; whereas $k = 1$ is a \emph{non-adaptive} SQ algorithm and all of the queries are issued in a single batch.
As we discuss in Section~\ref{sec:sq-samples}, the adaptive query complexity determines how efficiently we can implement the SQ oracle from samples.
\begin{definition}[Query Complexity]
    Fix $\tau > 0$.
An SQ algorithm has \define{\mathdefine{\tau}-query complexity \mathdefine{(k,B)}} if for all distributions $\D$ and $\tau$-accurate SQ oracles $\O$, the execution of $\A^\O$ issues at most $B$ queries over $k$ rounds of adaptivity.
\end{definition}

Additionally, we will informally track the running time of SQ algorithms (eliding details of computation over real-valued responses from the SQ oracle).
\begin{definition}[Running Time]
    Fix $\tau > 0$.
An SQ algorithm has \define{running time \mathdefine{T(\tau)}} if over all $\D$ and all $\tau$-accurate SQ oracles $\O$, the execution of $\A^\O$ runs in at most $T(\tau)$ steps.
\end{definition}

For simplicity's sake, in both query and time complexity, we assume that randomized SQ algoirthms provide a deterministic guarantee on the complexity (that is, a BPP-style guarantee).

\subsection{Proof Systems for Statistical Learning}

We consider proof systems for statistical learning based off of the classic notions from complexity theory and cryptography \cite{gmr,babai1985trading}, and more recently the notion of PAC Verification \cite{goldwasser2021interactive}.
In this setting, a prover $P$ and a verifier $V$ interact in order for the verifier to be convinced that a given hypothesis $g \in \Good$ is good for a learning goal.
We model $P$ and $V$ as randomized algorithms and denote their interaction as $(P,V)$.
We adapt the notions from recent works on PAC Verification to define a model of interactive proof system where the parties have access to SQ oracles as follows.

\begin{definition}[SQ Protocol, adapted from \cite{goldwasser2021interactive,mutreja2023pac}]
\label{def:protocol}
    A \define{Statistical Query protocol} is given by the interaction of two randomized SQ algorithms, the honest prover $P$ and the verifier $V$, denoted $(P,V)$.

    Fix a learning goal $\G$, $\tau_P,\tau_V > 0$, and $\gamma > 0$.
    An SQ protocol $(P,V)$ \define{\mathdefine{(\tau_P,\tau_V,\gamma)}-SQ-verifies \mathdefine{\G}} if for any distribution $\D$ and $\tau_P$-accurate SQ oracle $\O_P$ and $\tau_V$-accurate SQ oracle $\O_V$, the following conditions hold:
    \begin{itemize}
        \item \define{Completeness}:  the honest protocol $(P^{\O_P},V^{\O_V})$ outputs $g \in \Good$ with probability at least  $1-\gamma$ over the random coins of $P$ and $V$.
        \item \define{Soundness}:  for any cheating prover strategy $\tilde{P}$, with probability at least $ 1-\gamma$ over the random coins of $V$, the protocol $(\tilde{P}^{\O_V},V^{\O_V})$ outputs $g \in \Good$ or $V$ rejects.
    \end{itemize}

    For \define{failure probability \mathdefine{\delta}} $> 0$,
    the SQ protocol has \define{sample complexity \mathdefine{(m_P,m_V)}} if the honest prover's oracle $\O_P$ can be implemented from at most $m_P$ samples and the verifier's oracle $\O_V$ can be implemented from at most $m_V$ samples, with probability at least $1-\delta$ over i.i.d.\ samples from $\D$.
\end{definition}

In other words, in an SQ protocol, there exists an honest prover strategy that allows the verifier to accept a good hypothesis $g \in \G(\D)$ with high probability, and conversely, if the verifier accepts a hypothesis $\tilde{g}$, then with high probability,\footnote{For simplicity, we use a single parameter $\gamma$ for the completeness and soundness error.
Of course, it may also be interesting to consider protocols that achieve different completeness and soundness parameters.
In fact, all of our protocols achieve completeness $1$ in the SQ oracle model.} the hypothesis $\tilde{g} \in \G(\D)$, even if it was generated through an interaction with a different prover.

Note that, as is standard, we imagine the cheating prover $\tilde{P}$ is all-powerful and may have arbitrary knowledge of the distribution $\D$.
Additionally, we equip the cheating prover \emph{with the verifier's oracle} $\O_V$; that is, for any query that the verifier asks $q \in \Q$, the precise value $a = \O_V(q)$ is known to both $V$ and $\tilde{P}$.
(That said, the verifier may choose to keep queries private from the prover.)

There are many properties of SQ protocols that may be of interest.
For instance, we consider both private- and public-coin protocols (or, as hinted above, \emph{private-} and \emph{public-query} protocols).
We defer these definitions---particularly those related to our construction of pvCSVs---to Section~\ref{sec:random-prelims}.

Implementing the oracles from samples will necessarily incur some additional failure probability $\delta > 0$, distinguished from soundness error $\gamma$ of the original protocol.
Accounting for the failure probability $\gamma$ of the oracle-based protocol separately from the failure probability $\delta$ that arises from sampling actually leads to improved analysis of the soundness of some of our protocols.
We discuss background on the sample complexity of adaptive SQ algorithms in Section~\ref{sec:sq-samples}.

\paragraph{Comparison to PAC Verification.}
Definition~\ref{def:protocol} is directly inspired by the notion of PAC Verification of statistical algorithms \cite{goldwasser2021interactive,mutreja2023pac}, but departs in a few key ways.

First off, our notion of learning is not tied directly to Agnostic PAC learning (or loss minimization as in \cite{mutreja2023pac}).
Instead, we elect to use our abstract learning goal (Definition~\ref{def:learning}) as the basis for SQ Protocols.
We make this choice because the protocols we design are not actually tied to any properties of loss minimization, but instead run a direct simulation of a given SQ algorithm.
So, provided we start with a learning algorithm $\A$ that achieves its learning goal with good probability, we can turn it into a SQ protocol that also learns $\G$.\footnote{One may even be inclined to define delegation of learning as a distributional simulation of a given algorithm $\A$.  In fact, once formalized, many of our protocols would satisfy such a notion, but we encounter issues with soundness when we want to achieve non-interactive protocols (i.e., pvCSVs) for randomized SQ algorithms in Section~\ref{sec:randomized}.}

Second, we define SQ protocols in the SQ oracle model, rather than in terms of direct samples from the distribution.
In reality, we will be interested in understanding the statistical resources (i.e., samples) necessary to implement the verifier and honest prover strategies.
As in the earlier works on PAC verification, the goal is to design SQ protocols where the verifier's oracle $\O_V$ can be implemented more efficiently than the honest prover's $\O_P$.
For instance, quantitatively, perhaps $P$ requies an oracle with much tighter tolerance than $V$,  $\tau_P < \tau_V$; or as in our work, qualitatively, $P$ may require an adaptive oracle, whereas $V$ uses a non-adaptive oracle.
That said, nothing precludes a hybrid definition, where the verifier only has SQ oracle-access to $\D$, but the prover has more refined access through samples or point evaluations..

\paragraph{Computational Soundness and The Random Oracle Model.}
Definition~\ref{def:protocol} assumes that cheating provers are computationally-unbounded.
We refer to this property as \emph{statistical soundness}.
In contrast, we say that \emph{computational soundness} holds when we prove soundness against probabilistic polynomial-time cheating provers $\tilde{P}$.
When we consider computational soundness, we will still assume that $\tilde{P}$ may hold prior, detailed knowledge of $\D$ (so need not be given an SQ oracle), but does not have the computational resources to break cryptographic primitives.

We will be particularly interested in computational soundness when designing pvCSVs for randomized SQ algorithms.
To do so, we leverage a key cryptographic paradigm:
The Fiat-Shamir Transformation \cite{fiat1986prove}.
The heuristic takes a public-coin interactive protocol and removes interaction by emulating the verifier's random messages with calls to a cryptographic hash function.
Classically, the soundness of Fiat-Shamir is proved in the Random Oracle Model (ROM) \cite{bellare1993random}, which models public access to an (idealized) hash function.
\begin{definition}[Random Oracle Model]
    For output size $m \in \mathbb{N}$, the \define{Random Oracle Model} augments the base computational model by assuming that all parties have access to a public, uniformly-random function $f:\{0,1\}^* \to \{0,1\}^m$ that can be evaluated at unit cost.
\end{definition}
Note that when we consider protocols that operate in the ROM, the probability of violating computational soundness is over the verifier's random coins as well as the draw of the random oracle.

While, in full generality, soundness does not transfer from the ROM to realizable models of computation \cite{barak2001go,goldwasser2003security}, proofs of soundness in the ROM are generally considered as strong evidence of security and sufficient for many practical applications.
Our pvCSVs for randomized SQ algorithms will be proved computationally-sound in the ROM.

\subsection{Sample-Based Implementation of SQ Oracles}
\label{sec:sq-samples}

To run SQ learning algorithms/protocols, we need a concrete implementation of the SQ oracle abstraction based on samples drawn from $\D$.
The number of samples required to guarantee statistical validity of the oracle's responses depends on features of the SQ algorithm $\A$.

In particular, the sample complexity depends on the number of queries issued $B$, the desired accuracy $\tau$, as well as the \emph{adaptivity} $k$.
The adaptivity of an SQ algorithm $\A$ is the number of rounds in which batches of queries are issued to $\O$.
At the extremes of adaptivity, $k = B$ is a completely adaptive algorithm, where the choice of each query $q_i$ is determined as an arbitrary function of the prior queries and responses $\langle q_1,\O(q_1),q_2,\O(q_2),\hdots,q_{i-1},\O(q_{i-1}) \rangle$; whereas $k = 1$ is a \emph{non-adaptive} SQ algorithm and all of the queries are issued in a single batch.
The following definition captures these dependencies.

\begin{definition}[SQ Oracle implementation]
    An algorithm $\O$ is a \define{\mathdefine{(k,B,\tau,\delta)}-implementation of an SQ oracle for a distribution \mathdefine{\D}}, if $\O$ can support any sequence of queries $q_1,q_2,\hdots,q_B$ satisfying the following properties:
    \begin{itemize}
        \item \define{Rounds of adaptivity \mathdefine{k}}:  the sequence of queries is issued in at most $k$ batches; queries selected in the $i$th batch may depend arbitrarily on the queries and responses of the first $i-1$ batches
        \item \define{Query budget \mathdefine{B}}:  the total number of queries in the sequence is upper bounded by $B$
        \item \define{Accuracy \mathdefine{\tau}}:  for every query $q$ in the sequence, the response $\O(q)$ is $\tau$-accurate on $\D$.
        \item \define{Failure probability \mathdefine{\delta}}: for all such query sequences of complexity $(k,B)$, with probability at least $1-\delta$, all queries are answered $\tau$-accurately.
    \end{itemize}
    The \define{sample complexity} of an SQ oracle implementation is the number $m \in \mathbb{N}$ of i.i.d.\ samples from $\D$ required to guarantee $(k,B,\tau,\delta)$-implementation.
\end{definition}
We define the notion of an SQ Oracle implementation in terms of the rounds of adaptivity and the total number of queries, because certain mechanisms are able to exploit limited adaptivity to achieve better sample complexity \cite{hardt2010multiplicative,dwork2015preserving}.
Note that, in order to guarantee $\tau$-accurate queries, we can assume that our implementation returns values using $O(\log(1/\tau))$ bits of precision.

Crucially for our work, the sample complexity required to answer a non-adaptive batch of queries is an exponential improvement over the complexity required to answer an adaptively-selected sequence of queries.
For a non-adaptive batch of queries, the empirical SQ oracle that takes $m$ samples $x_1,\hdots,x_m \sim \D$ and reports $\O(q) = \frac{1}{m} \sum_{i=1}^m q(x_i)$ achieves logarithmic dependence on the number of queries.
\begin{proposition}[Non-adaptive Sample Complexity]
    \label{prop:non-adaptive}
    For any $\tau,\delta>0$, the empirical oracle is a $(1,B,\tau, \delta)$-implementation of an SQ oracle for $\D$ with sample complexity $$m \le O\left(\frac{\log(B/\delta)}{\tau^2}\right).$$
\end{proposition}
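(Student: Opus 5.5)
The plan is a standard Hoeffding-plus-union-bound argument. The one structural point to settle first is that a non-adaptive batch is fixed before any samples are seen. With $k=1$, the whole sequence $q_1,\hdots,q_B$ is chosen before the oracle answers anything. It therefore cannot depend on the samples $x_1,\hdots,x_m$, so we may treat it as a fixed sequence that is independent of the sample. This lets us analyze each query against fresh i.i.d.\ data.

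Next, we bound the error on a single query. Fix a query $q_j$ and set $Z_i = q_j(x_i) \in \{0,1\}$. These are i.i.d.\ with mean $\E_{X\sim\D}[q_j(X)]$. The empirical oracle answers with the average $\frac{1}{m}\sum_i Z_i$, so Hoeffding's inequality gives
\begin{gather*}
\Pr\left[\card{\O(q_j) - \E_{X\sim\D}[q_j(X)]} > \tau\right] \le 2\exp(-2m\tau^2).
\end{gather*}

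We then combine the queries. A union bound over the $B$ queries in the batch bounds the probability that any answer is $\tau$-inaccurate by $2B\exp(-2m\tau^2)$. Choosing $m = \lceil \ln(2B/\delta)/(2\tau^2)\rceil = O(\log(B/\delta)/\tau^2)$ makes this at most $\delta$. That meets all four conditions of the $(1,B,\tau,\delta)$-implementation definition.

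No step here is genuinely difficult. The only point needing care is the independence justification in the first step. If the adversary chose queries after seeing the sample, the union bound over a fixed set of $B$ queries would fail, and this is exactly why the adaptive setting costs more.
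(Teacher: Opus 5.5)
Your proposal is correct and takes essentially the same approach as the paper. The paper only remarks that the proposition follows from a standard application of Hoeffding's inequality, and your argument is that standard argument written out: independence of the single batch from the sample, per-query Hoeffding, a union bound over the $B$ queries, and $m = \lceil \ln(2B/\delta)/(2\tau^2)\rceil$.
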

Proposition~\ref{prop:non-adaptive} follows by a standard application of Hoeffding's inequality.
The sample complexity for answering an adaptively-selected sequence of statistical queries, however, requires much more sophisticated analysis.
The complexity was only established in the past decade, after connecting the problem of adaptive data analysis with the technique of Differential Privacy \cite{dwork2006calibrating}.

In the case of fully-adaptive algorithms, where $k = B$ statistical queries are issued adaptively, the (roughly) optimal sample complexity scales with $\sqrt{k}$, rather than $\log(k)$.
Initially, the upper bounds followed by leveraging the stability properties of differentially-private query release \cite{bassily2016algorithmic,dagan2022bounded}, but the most recent result of \cite{blanc2025subsampling} leverages a subsampling technique without DP.\footnote{Note that the bounds achieved by \cite{dagan2022bounded} and \cite{blanc2025subsampling} are technically incomparable, but differ only in poly-logarithmic factors in $k$ and $\tau$.
Either mechanism could be used to implement our honest prover's SQ oracle.}

\begin{proposition}[Theorem 3 of \cite{blanc2025subsampling}, Adaptive Sample Complexity Upper Bound]
    \label{prop:full-adaptive-upper}
    For any $\tau,\delta>0$, there exists a mechanism that is a $(k,k,\tau, \delta)$-implementation of an SQ oracle for $\D$ with sample complexity
    \begin{gather*}
        m\leq O\left(\frac{\sqrt{k\cdot \log(k/\delta)\cdot \log(1/\delta)}}{\tau^2}\right)
    \end{gather*}
\end{proposition}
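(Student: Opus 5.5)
Since Proposition~\ref{prop:full-adaptive-upper} is imported from \cite{blanc2025subsampling}, I would not re-derive their subsampling analysis. Instead I would reprove a bound of the same shape via the classical ``differential privacy implies adaptive generalization'' route \cite{dwork2015preserving,bassily2016algorithmic,jung2019new}; this matches the stated bound up to the polylogarithmic factors that, as footnoted, separate the known mechanisms. The mechanism draws $m$ samples $x_1,\hdots,x_m\sim\D$. To each query $q_i$ it answers $a_i=\frac1m\sum_j q_i(x_j)+Z_i$, where $Z_i\sim\mathcal{N}(0,\sigma^2)$ is fresh noise, and it truncates the result to $[0,1]$. The error of $a_i$ splits into two pieces: the noise error $|Z_i|$, and the generalization error $|\frac1m\sum_j q_i(x_j)-\E_\D[q_i]|$. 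I would make each piece at most $\tau/2$ for all $i\in[k]$ simultaneously, with probability $1-\delta$.

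\textbf{Step 1 (noise error).} Each $Z_i$ is Gaussian, so Gaussian tail bounds and a union bound over the $k$ queries give $\max_i|Z_i|\le\sigma\sqrt{2\log(4k/\delta)}$ with probability $1-\delta/2$. To make this at most $\tau/2$, I take $\sigma\approx\tau/\sqrt{\log(k/\delta)}$.

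\textbf{Step 2 (privacy).} Each empirical mean has sensitivity $1/m$. A single Gaussian-noised answer is therefore $(\eps_0,\delta_0)$-DP with $\eps_0\approx\sqrt{\log(1/\delta_0)}/(m\sigma)$. Adaptive choice of queries is permitted by the adaptive composition theorem. Advanced composition over the $k$ rounds gives overall $(\eps,\delta')$-DP with $\eps\approx\eps_0\sqrt{k\log(1/\delta')}$. Writing the whole interaction as a single DP algorithm that outputs a query is the standard way to handle adaptivity.

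\textbf{Step 3 (transfer theorem).} This is the crux and the main obstacle. I would invoke the transfer theorem of \cite{bassily2016algorithmic}, or the cleaner version of \cite{jung2019new}. It says that if an $(\eps,\delta')$-DP mechanism interacting with any adaptive analyst yields sample accuracy $\alpha$ with probability $1-\beta$, then its answers are $O(\alpha+\eps)$-accurate with respect to $\D$, with failure probability $O(\beta/\eps+\delta'/\eps)$. The argument goes through the ``monitor'' that selects the worst query and the stability of DP in expectation. I need $\eps\approx\tau$ and $\delta'\approx\tau\delta$. With $\sigma$ fixed as in Step 1, the requirement $\eps\lesssim\tau$ from Step 2 becomes
\[
m\gtrsim\frac{\sqrt{k\log(1/\delta')}\sqrt{\log(1/\delta_0)}}{\tau\sigma}\approx\frac{\sqrt{k\log(k/\delta)\log(1/\delta)}}{\tau^2}.
\]
The failure probability of the transfer step needs care. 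Obtaining a clean $\log(1/\delta)$ dependence, rather than a $1/\delta$ or $1/\tau$ blow-up, is exactly where the high-probability refinements of \cite{bassily2016algorithmic,jung2019new}, or alternatively the direct subsampling analysis of \cite{blanc2025subsampling}, are needed.

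\textbf{Step 4 (combine).} A union bound over the noise event of Step 1 and the generalization event of Step 3 gives $\tau$-accuracy for all $k$ adaptive queries with probability $1-\delta$. The sample complexity is as displayed in Step 3, up to constant and polylogarithmic factors. Together with the reliance on \cite{blanc2025subsampling} for the exact stated form, this establishes Proposition~\ref{prop:full-adaptive-upper}.
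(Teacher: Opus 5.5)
The paper gives no proof of this proposition. It imports Theorem~3 of \cite{blanc2025subsampling}, and it stresses that Blanc's argument analyzes a subsampling mechanism directly rather than going through a DP transfer theorem. Your fallback to that citation for the exact form therefore coincides with the paper's treatment. The problem is that your DP sketch is presented as reaching the displayed bound, and it does not.

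Three things go wrong in Steps~1--3.
\begin{enumerate}
\item[(i)] You correctly observe that the transfer step forces $\delta'\approx\tau\delta$. But the display then writes $\log(1/\delta)$ where the composition actually contributes $\log(1/\delta')=\log(1/(\tau\delta))$.
\item[(ii)] Advanced composition of $k$ Gaussian mechanisms with total failure $\delta'$ needs $\delta_0\approx\delta'/k$. The factor $\sqrt{\log(1/\delta_0)}$ is separate from the $\sqrt{\log(k/\delta)}$ coming from $1/\sigma$. So your own expression $\sqrt{k\log(1/\delta')}\sqrt{\log(1/\delta_0)}/(\tau\sigma)$ equals $\sqrt{k\log(1/\delta')\log(1/\delta_0)\log(k/\delta)}/\tau^2$, which has three logarithms, not two.
\item[(iii)] In the form of the transfer theorem you state (failure $O(\beta/\eps+\delta'/\eps)$), the sample-accuracy failure $\beta$ is amplified by $1/\eps\approx 1/\tau$. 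That $\beta$ includes the noise event of Step~1. So Step~1 must be run at failure level about $\tau\delta$ rather than $\delta/2$, and this also puts $\log(1/\tau)$ into $\sigma$. The simple union bound in Step~4 is not how that theorem is applied.
\end{enumerate}

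Carrying these through gives a bound on the order of $\sqrt{k}\,\log^{3/2}(k/(\tau\delta))/\tau^2$. Using zCDP accounting in place of advanced composition saves one logarithm, giving roughly $\sqrt{k\log(k/(\tau\delta))\log(1/(\tau\delta))}/\tau^2$. The $\tau$ inside the logarithms remains in this argument, because of $\delta'\approx\tau\delta$ and the $1/\eps$ loss. That $\log(1/\tau)$ is not absorbed by the $O(\cdot)$: fix $k$ and $\delta$ and let $\tau\to 0$, and the ratio to the stated bound grows like $\log(1/\tau)$. Removing it is precisely the content of Blanc's direct subsampling analysis, which you do not supply.

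So your proposal establishes the proposition as written only through the same citation the paper uses. The DP sketch proves a strictly weaker $\tilde{O}(\sqrt{k}/\tau^2)$ statement and should be presented that way, not as matching the displayed bound. For the paper's purposes the weaker bound is harmless. The honest prover only needs $\tilde{O}(\sqrt{k}/\tau^2)$ samples, and the paper's footnote already says the DP-based mechanism of \cite{dagan2022bounded} would serve equally well.
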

The lower bound holds based on the construction of (interactive) fingerprinting codes \cite{hardt2014preventing,steinke2015interactive}, and holds unconditionally in large-domain settings and assuming the existence of one-way functions in all settings.
\begin{proposition}
[Theorems 1 \& 2 of \cite{steinke2015interactive}, Adaptive Sample Complexity Lower Bound]
    \label{prop:full-adaptive-lower}
    For any $\tau\in(0,0.49)$, any mechanism which is a $(k,k,\tau, 1/2)$-implementation of an SQ oracle for $\D$ has sample complexity \[m \geq \Omega(\sqrt{k})\] if either of the following assumptions hold: one-way functions exist, and the mechanism is computationally bounded; or the space of samples $\X$ is sufficiently large: $|\X|\geq 2^{O(k)}$.
\end{proposition}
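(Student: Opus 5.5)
The plan is to follow the interactive fingerprinting code (IFPC) reduction of \cite{hardt2014preventing,steinke2015interactive}. The idea is to build an adaptive analyst that forces any mechanism answering $k$ queries accurately from $m$ samples to ``leak'' its samples one by one, and to show this is only possible when $m \gtrsim \sqrt{k}$. We argue the contrapositive: if $m \le c\sqrt{k}$ for a small constant $c$, we exhibit a distribution $\D$ and an adaptive sequence of $k$ queries on which the mechanism fails to be $\tau$-accurate with probability more than $1/2$.

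\emph{Step 1: the hard distribution.} Take a population of $N \gg m$ users, each identified with a distinct domain element, and let $\D$ be uniform over them. We need $|\X|$ large enough to hold fresh identities (and, later, per-user secrets), which is where $|\X| \ge 2^{O(k)}$ comes from in the unconditional case. The mechanism receives a sample $S$ of $m$ users, and the analyst does not know which ones.

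\emph{Step 2: queries from an IFPC.} In round $j$ the analyst runs the IFPC generator to produce a column $c_j \in \{0,1\}^N$. Each entry is drawn i.i.d.\ with a hidden bias $p_j$ sampled from a suitably truncated arcsine-type distribution. The analyst issues the query $q_j(x) = c_j(x)$. Since $N$ is huge, $\E_{\D}[q_j] \approx p_j$, so a $\tau$-accurate answer $a_j$ must track $p_j$. The mechanism, however, sees only the entries of $c_j$ on $S$. The IFPC tracing property (a Tardos-style score $\sum_j (a_j - p_j)(c_j(i) - p_j)$ per user, accumulated online) guarantees two things: accurate answers force the total score of users in $S$ to grow linearly in the number of rounds, while users outside $S$ are accused only with negligible probability. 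Each time a user is accused, the analyst ``deactivates'' it by zeroing its entries in all future columns, so that user no longer helps the mechanism. The IFPC theorem gives that with $O(m^2)$ rounds, for $\tau$ a constant below $1/2$, all $m$ colluders are accused. After that the mechanism's view is independent of the remaining $p_j$'s, so its answers cannot be $\tau$-accurate for $p_j$ with constant probability. Hence, if $k \ge C m^2$, the mechanism fails, which gives $m \ge \Omega(\sqrt{k})$.

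\emph{Step 3: the computational version.} To drop the large-domain assumption, replace the ``unknown identity'' trick with cryptography, as in \cite{hardt2014preventing}. Each user's sample contains a secret key, and the columns are published encrypted (or generated by a PRF), so that only holders of the key of user $i$ can evaluate $c_j(i)$. Against a polynomial-time mechanism, the secrecy of unsampled users' keys, guaranteed by one-way functions via PRFs and symmetric encryption, plays the role that the large domain played before. A hybrid argument shows that the tracing analysis goes through up to a negligible loss.

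\emph{Main obstacle.} The crux is the IFPC soundness and completeness analysis in Step 2. We need that any adversary whose answers stay within $\tau$ of $p_j$ on every round accumulates correlation with $S$ fast enough to force every colluder's score over threshold within $O(m^2)$ rounds, while false accusations stay rare. This is also where adaptive deactivation matters: after a colluder is deactivated, its past contributions must not keep inflating the scores of others. I would take this as the black-box IFPC theorem of \cite{steinke2015interactive}, which gives length $O(m^2)$ for constant $\tau<1/2$, and spend the remaining effort on the reduction bookkeeping: choosing $N$ so that the empirical column means are within $o(\tau)$ of $p_j$, and arranging failure probability above $1/2$.
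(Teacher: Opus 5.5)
Your top-level route (interactive fingerprinting codes, deactivation of traced users, $O(m^2)$ rounds, encryption for the computational case) is the one behind the cited Theorems 1 and 2 of \cite{steinke2015interactive}; the paper gives no proof of its own and simply imports them. However, Step 2 asserts without justification the one thing the reduction actually has to engineer: that ``the mechanism sees only the entries of $c_j$ on $S$.'' In the SQ model the mechanism is handed the predicate $q_j$ itself and can evaluate it on any point. With $q_j(x)=c_j(x)$ and users identified with known domain elements, it computes $\frac{1}{N}\sum_i q_j(x_i)=\E_\D[q_j]$ exactly from zero samples, so no lower bound follows. Hiding the identities inside a large domain does not fix this, because you still must define $q_j$ on all of $\X$ and its truth table leaks the column. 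If $q_j\equiv 0$ off the population, the number of $1$'s of $q_j$ is exactly $N\cdot\E_\D[q_j]$. If you instead fill it with Bernoulli$(p_j)$ noise, the global density reveals $p_j$. For the same reason, your account of the hypothesis $|\X|\ge 2^{O(k)}$ is off: fresh identities need only $\log N$ bits.

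The missing idea is that the unconditional case needs the same per-user hiding layer as the computational case, instantiated with one-time pads. Let user $i$'s point be $(i,z_i)$ with $z_i\in\{0,1\}^k$ uniform and secret. Let $q_j$ hard-code $e_{j,i}=c_j(i)\oplus z_{i,j}$ for every $i$, and set $q_j(i,z)=e_{j,i}\oplus z_j$. Then $\E_\D[q_j]$ is still the (deactivation-adjusted) column mean. Because each pad bit is used once, the description of $q_j$ is independent of $c_j(i)$ for every unsampled $i$, so the mechanism's view is exactly the IFPC adversary's view $c_j|_S$. This requires $|\X|\ge N\cdot 2^k$, which is where the $2^{O(k)}$ domain size comes from. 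Step 3 is then the same reduction with the pad replaced by a semantically secure symmetric encryption. A PRF used as a pad XORed with $c_j(i)$ is fine. A PRF used to generate $c_j(i)$ directly by thresholding at $p_j$ is not, since it would put $p_j$ into the query in the clear.
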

In other words, there is a \emph{provable} exponential gap in the sample complexity required to answer $k$ queries adaptively versus non-adaptively.

\section{Publicly-Verifiable Certificates of Statistical Validity}
\label{sec:pvcsv}
\newcommand{\Validate}{\mathsf{Validate}}

In this section, we introduce the main conceptual contribution of this work---\emph{publicly-verifiable Certificates of Statistical Validity}---which allows a prover to publish a digest from the execution of a learning algorithm that \emph{any} verifier can subsequently test for statistical validity.
Importantly, the verifier of a publicly-verifiable Certificate of Statistical Validity (pvCSV) need not hold the same distribution as the prover.
Instead, the verifier can test for statistical validity with respect to their own distribution $\D_V$, through an SQ oracle or samples.

To formalize pvCSVs, we must first introduce the notion of soudness that makes proofs ``publicly-verifiable.''
We define \emph{universal soundness} to ensure soundness holds no matter what distribution the verifier holds.

\begin{definition}[Universal Soundness for SQ protocols]
    Fix a learning goal $\G$, $\tau_V > 0$, and $\gamma>0$.
    An SQ protocol $(P,V)$ has \define{\mathdefine{\gamma}-universal soundness} if for any verifier distribution $\D_V$ and $\tau_V$-accurate SQ oracle $\O_V$ for $\D_V$, and for any prover $\tilde{P}$, with probability at least $1-\gamma$, $(\tilde{P}^{\O_V},V^{\O_V})$ outputs $g \in \GoodV$ or $V$ rejects.
\end{definition}

It is easy to see that universal soundness of SQ Protocols is actually just a restatement of standard soundness; syntactically, our definition simply renames the distribution $\D$ to be in terms of $\D_V$.
But importantly, the restatement of the property allows us to reason rigorously about proof systems when the prover and verifier hold different distributions.
Under universal soundness, when the verifier has access to $\D_V$, running the protocol results in $V$ rejecting or $V$ accepting a hypothesis that is good for the verifier's distribution, regardless of the prover's distribution $\D_P$.

In a bit more detail,
we can imagine an honest prover with oracle access to $\O_P$ for a distribution $\D_P$, while a verifier has access to an oracle $O_V$ for a completely different distribution $\D_V$.
In effect, we can view the honest prover $P^{\O_P}$, as a possible cheating prover for any downstream verifier $V^{\O_V}$.
Still, if the verifier accepts the proof of some non-$\bot$ hypothesis, then universal soundness guarantees that $h \in \GoodV$.
The appeal of universal soundness shows itself when we consider non-interactive proofs, where a prover wants to publish a fixed certificate, which any verifier (holding any $\D_V$) can verify for themselves.
In this context, we can define pvCSVs as a non-interactive SQ protocol satisfying universal soundness.

\begin{definition}[pvCSV]
    \label{def:pvcsv}
    Fix a learning goal $\G$, $\tau_P,\tau_V > 0$, and $\gamma>0$. 
    A \define{publicly-verifiable Certificate of Statistical Validity (pvCSV) scheme} is a non-interactive SQ protocol, where a prover $P$ with 
$\tau_P$-accurate SQ oracle for distribution $\D_P$ publishes a hypothesis $h \in \H$ with certificate $\pi$.
    The pvCSV \define{\mathdefine{(\tau_P,\tau_V,\gamma)}-certifies \mathdefine{\G}} if any
verifier $V$ with
    $\tau_V$-accurate SQ oracle $\O_V$ for
distribution $\D_V$ can read $(h,\pi)$ and accept or reject with the following guarantees:
\begin{itemize}
        \item \define{Completeness}:  if $\D_P = \D_V$, the honest prover, with $\tau_P$-accurate SQ oracle for $\D_P$, can generate a hypothesis-certificate pair $(h,\pi) \gets P^{\O_P}$ such that $V^{\O_V}$ accepts and $h \in \GoodV$ with probability at least $1-\gamma$, over the random coins of $P$ and $V$.
        \item \define{Universal Soundness}:  for any verifier distribution $\D_V$, for any prover strategy $(h,\pi) \gets \tilde{P}^{\O_V}$, $V^{\O_V}$ rejects or $h\in\GoodV$ with probability at least $1-\gamma$ over the random coins of $V$.
    \end{itemize}
\end{definition}

The construction of a pvCSV allows a prover with access to a distribution $\D_P$ to execute an algorithm $\A$ that learns $\G$ once and publish the results $(h,\pi)$.
Then, any party who is interested in running $\A$ on their own distribution $\D_V$ can, instead, run the verification algorithm $V$ on $(h,\pi)$.
If $(h,\pi)$ is accepted, then by universal soundness, $h \in \GoodV$ is guaranteed to be valid for the verifier's distribution.

\paragraph{Remarks.}
A few remarks about the definition of pvCSVs are in order.
\begin{itemize}
    \item \emph{Distributional Access:}~~We describe our pvCSVs as SQ protocols, then give sample-based implementations.
    That said, pvCSVs could equally encompass non-interactive protocols for verification of learning, with more general distributional access, provided we define a more general notion of universal soundness.
    As in \cite{goldwasser2021interactive}, the type of statistical access available may serve as a qualitative difference in the complexity of the prover and verifier.
\item \emph{Complexity Measures:}~~Quantitatively, there are many measures of complexity one could track for pvCSVs.
    We will be most interested in the sample complexity (using i.i.d.\ random draws) of the honest prover and verifier from $\D_P$ and $\D_V$, respectively.
    In general, we may also track other natural complexity measures, such as the time complexity or size of the certificate.
    \item \emph{Computational Soundness:}~~Definition~\ref{def:pvcsv} considers universal soundness with respect to computationally-unbounded provers.
    We also consider computationally-sound pvCSVs, where soundness only holds with respect to PPT algorithms.
    Concretely, we will prove soundness in the Random Oracle Model, where the failure probability is taken over the random draw of random oracle (in addition to the verifier's coins).
\item \emph{On Robust Completeness?}~~
    Note that in our notion of completeness, we assume that the honest prover and verifier have access to exactly the same distribution $\D_P = \D_V$.
    Given the motivating perspective on pvCSVs, we might hope for a weaker notion of distributional closeness required for completeness.
    Such a guarantee may be formally defined in future work.
    We remark that---paired with universal soundness---we still get some notion of robustness.
    If the verifier accepts $(h,\pi)$ then the distributions are indistinguishable, according to the queries asked by (a specific execution of) $\A$, regardless of how close they are in any standard notion of statistical/computational indistinguishability.
\end{itemize}

\subsection{A pvCSV for Every Deterministic SQ Algorithm}

\label{sec:deterministic}

Our first construction establishes that every deterministic SQ algorithm has a statistically-sound pvCSV where the verifier answers $k$ queries non-adaptively and has sample complexity $O(\log(k))$, whereas the best known, efficient prover must answer $k$ possibly adaptive queries with sample complexity scaling with $\tilde{O}(\sqrt{k})$.
We present the pvCSV scheme as a non-interactive SQ protocol, Protocol~\ref{prot:det}.
To generate the certificate, the honest prover first simulates the execution of $\A$ using the prover's SQ oracle $\O_P$, returning a hypothesis $h$.
Along the way, the prover writes down the list of queries $\vec q$ and answers $\O_P(\vec q)$ provided by the oracle as the certificate.
Then, the verifier also simulates the execution of $\A$, but uses the digest of query-answer pairs $(\vec q, \O_P(\vec q))$ provided by the certificate, \emph{rather than making any adaptive statistical queries}.
Finally, the verifier runs a \emph{batch} statistical validation of the answers provided in the certificate using its oracle $\O_V$ for $\D_V$.
The key insight is that the verifier can non-adaptively check every (adaptively-chosen) query produced during the execution of $\A$.

\begin{algorithm}
\renewcommand{\algorithmcfname}{Protocol}\caption{pvCSV for Determinstic SQ Algorithm}
\label{prot:det}
\SetKwInOut{Input}{Input}
\SetKwInOut{Output}{Output}

\textbf{Setup:}
Let $\G,\tau$, $\gamma$ and $\A$ be as described in Theorem~\ref{thm:deterministic}.\\
Let $\O_P$ be a $\tau/3$-accurate SQ oracle for $\D_P$.\\
Let $\O_V$ be a $\tau/3$-accurate SQ oracle for $\D_V$.

\BlankLine
\BlankLine

\BlankLine
\BlankLine

\emph{Simulation of SQ algorithm $\A$} 

\textbf{Prover $P$:}

Initialize an empty transcript $\pi = \langle \rangle$

Simulate $\A$ by executing $\A^{\O_P}$; specifically:
\begin{compactitem}
    \item whenever $\A$ issues an SQ $q$, $P$ queries $\O_P(q)$ and updates $\pi \gets \langle \pi,q,\O_P(q)\rangle$
\end{compactitem}

\uIf{simulation of $\A$ returns $h$}{
    $P$ sends $(h,\pi)$ to $V$}

\BlankLine
\BlankLine

\emph{Verification of transcript}

\textbf{Verifier $V$:}

Read and typecheck $(h,\pi)$.
Parse $\pi$ into $(\vec q, \vec a)$ as follows:
\begin{compactitem}
    \item $\vec q:=(q_1,q_2,\ldots,q_{B_V})$, the list of SQs issued during simulation of $\A$
    \item $\vec a := (a_1,a_2,\ldots, a_{B_V})$, the answers to $\vec q$, where $a_j$ should equal $\O_P(q_j)$
\end{compactitem}

Check that $\pi$ is consistent with $\A$ outputting $h$; that is,
\begin{compactitem}
    \item Simulate $\A$ using $\vec a$ to answer SQs
    \item Check that $\A$ returns $h \neq \bot$
\end{compactitem}

\uIf{$\pi$ is NOT consistent with $\A$}{
Reject and output $\bot$
}

\BlankLine
\BlankLine

\emph{Statistical Validation of simulated oracle queries}

\uIf{NOT $\mathsf{Validate}(\vec q, \vec a, \tau/3, \tau)$}
{Reject and output $\bot$}
Accept and output $h$
\end{algorithm}

\begin{algorithm}
\label{alg:validate}
\caption{$\mathsf{Validate}(\vec q, \vec a, \tau_P, \tau)$~~ i.e., Non-adaptive Statistical Validation}
\label{alg:sq-check}
\SetKwInOut{Input}{Input}
\SetKwInOut{Setup}{Setup}

\Input{Batch of statistical queries $\vec q$;\\ List of candidate answers to queries $\vec a$;\\
Candidate accuracy $\tau_P > 0$ and final accuraccy $\tau > 0$}
\Setup{Non-adaptive $\tau_V$-accurate SQ oracle $\O_V$ s.t.\ $\tau_P + 2\tau_V \le \tau$}

Issue batch of queries $\vec q$ to $\O_V$; let $\hat{a}_j:=\O_V(q_j)$ for all $1\leq j\leq B$

\For{$j = 1,\hdots,B$}{
    \uIf{$|\hat{a}_j-a_j| > \tau_V + \tau_P$}
    {\Return False}
}
\Return True
\end{algorithm}

We prove that Protocol~\ref{prot:det} is a pvCSV where the verifier's sample complexity scales ``non-adaptively'' even when an adaptive algorithm is necessary for learning.
We start with the following lemma.

\begin{lemma}
    \label{lem:deterministic_sq}    
    Fix a learning goal $\G$, $\tau>0$. Suppose $\A$ is a deterministic algorithm that $\tau$-SQ learns $\G$. There is an SQ protocol $(P,V)$, described in Protocol~\ref{prot:det}, with the following properties:
    \begin{enumerate}[(a)]
        \item $(P,V)$ is a non-interactive SQ protocol that $(\tau/3,\tau/3,0)$-SQ verifiers $\G$
        \item $V$ uses a non-adaptive SQ oracle, regardless of the SQ adaptivity of $\A$. That is, suppose $\A$ has $\tau$-query complexity $(k,B)$; then,
        \begin{itemize}
            \item $P$ has $\tau/3$-query complexity $(1,B)$;
            \item $V$ has $\tau/3$-query complexity $(k,B)$
        \end{itemize}
    \item $V$ and $P$ run in linear time in the original algorithm. That is, suppose $\A$ runs in time $T_\A(\tau)$; then both $P$ and $V$ run in time $O(T_\A(\tau))$
    \end{enumerate}
\end{lemma}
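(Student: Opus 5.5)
The plan is to verify the three clauses of the lemma directly from the description of Protocol~\ref{prot:det}. The central device is to reinterpret the certificate's answer vector $\vec a$ as a legitimate $\tau$-accurate SQ oracle for the relevant distribution, and then invoke the correctness of $\A$. Throughout, take $\tau_P=\tau_V=\tau/3$, so that the setup condition of $\mathsf{Validate}$ (Algorithm~\ref{alg:sq-check}), $\tau_P+2\tau_V\le\tau$, holds with equality. One bookkeeping remark: the intended content of part (b) (matching the title ``$V$ uses a non-adaptive SQ oracle'') is that $P$ inherits $\A$'s query complexity $(k,B)$, while $V$ has query complexity $(1,B)$; this is the version I will prove.

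\textbf{Completeness.} Fix $\D$ and a $\tau/3$-accurate oracle $\O_P$. Since $\O_P$ is in particular $\tau$-accurate, $\A^{\O_P}$ outputs some $h\in\Good$, and $P$ records the transcript $\pi=(\vec q,\vec a)$ with $a_j=\O_P(q_j)$. Because $\A$ is deterministic, its $j$-th query is determined by the previous answers. Hence $V$'s replay of $\A$ using $\vec a$ reproduces the same queries $q_1,q_2,\dots$ and the same output $h\ne\bot$, so the consistency check passes. For validation, the triangle inequality through $\E_\D[q_j]$ gives $|\hat a_j-a_j|\le \tau/3+\tau/3=\tau_V+\tau_P$ for every $j$, so $\mathsf{Validate}$ returns True and $V$ accepts $h\in\Good$. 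This holds deterministically, consistent with $\gamma=0$.

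\textbf{Soundness.} Fix $\D_V$, a $\tau/3$-accurate oracle $\O_V$, and any cheating $\tilde P$, which may even be adaptive to $\O_V$. Since $V$ is deterministic, it suffices to argue pointwise about any pair $(h,\pi)$ that $V$ accepts.
\begin{enumerate}[(i)]
\item Passing $\mathsf{Validate}$ gives, for each $j$, $|a_j-\E_{\D_V}[q_j]|\le |a_j-\hat a_j|+|\hat a_j-\E_{\D_V}[q_j]|\le 2\tau/3+\tau/3=\tau$.
\item Define a stateful oracle $\O^*$ that answers the $j$-th query issued by $\A$ with $a_j$ whenever that query equals $q_j$, and answers any other query with its exact expectation under $\D_V$. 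A stateful oracle is needed so that a repeated query may legitimately receive different answers.
\item By (i), $\O^*$ is $\tau$-accurate for $\D_V$.
\item Passing the consistency check says precisely that $\A^{\O^*}$ follows the path $q_1,\dots,q_{B}$ and outputs $h$.
\end{enumerate}
Since $\A$ $\tau$-SQ learns $\G$, it follows that $h\in\GoodV$. Soundness therefore holds with probability $1$.

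I expect the main (if modest) obstacle to be this soundness step: making rigorous that ``consistency'' together with ``validation'' yields a genuine $\tau$-accurate oracle on $\D_V$. This requires care with off-path queries and repeated queries, which is why I define $\O^*$ as stateful and let it answer exactly off the recorded path.

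\textbf{Complexities.} Non-interactivity is immediate, since $P$ sends a single message $(h,\pi)$. $P$'s queries are exactly $\A$'s queries, giving query complexity $(k,B)$. $V$ issues only the single batch $\vec q$ inside $\mathsf{Validate}$, giving query complexity $(1,B)$. For running time, $P$ runs $\A$ and appends to $\pi$ at constant overhead per query. $V$ replays $\A$ in $O(T_\A(\tau))$ steps, reading answers from $\pi$, and then spends $O(B)\le O(T_\A(\tau))$ steps on validation. The replay also aborts once $\A$ diverges from $\pi$, so a malformed certificate cannot force more work.
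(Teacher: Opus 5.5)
Your proposal is correct and follows the paper's proof step for step. Completeness goes by direct replay plus the triangle inequality, and soundness by reading the validated answers as a $\tau$-accurate oracle for $\D_V$. Part (b) is read with $P$ at $(k,B)$ and $V$ at $(1,B)$, exactly as the paper's proof does, and your explicit stateful oracle $\O^*$ just makes rigorous the step the paper states informally.

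One small caveat on (c): aborting when $\A$ diverges from $\pi$ does not by itself bound the replay on inaccurate answers, since the $T_\A(\tau)$ guarantee only covers $\tau$-accurate runs. The verifier should also cap the replay at $T_\A(\tau)$ steps and reject on timeout, which costs nothing for completeness or soundness.
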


\begin{proof}
    First, we analyze the prover and verifier's query complexities.
    Observe that $P$ runs a direct simulation of the queries issued by $\A$, supported by $\O_P$.
    Since $\O_P$ is $\tau/3$-accurate, then $P$ has $\tau/3$-query complexity $(k,B)$.
    In contrast, $V$ calls $\O_V$ once on all $B$ queries through the $\Validate$ subroutine. Since $\O_V$ is $\tau/3$-accurate, $V$ has $\tau/3$-query complexity $(1,B)$. So part (b) holds.

    Next, we analyze run time. 
    Observe that both $P$ and $V$ simulate an execution of $\A$. While the oracles $\O_P$ and $\O_V$ are both $\tau/3$-accurate, the simulations of $\A$ can assume $\tau$-accuracy, so the simulations take time $O(T_\A(\tau))$. $V$ also runs a statistical valdiation that checks the statistical closeness of the $B$ queries, each of which takes constant time to evaluate in the query model, which is also linear in the run time of $\A$. Thus, part (c) holds.

    To complete the lemma, we show that $(P,V)$ is a non-interactive SQ protocol that $(\tau/3,\tau/3,0)$-SQ verifies $\G$.\footnote{The failure probability of the pvCSV will arise when we implement the protocol's oracles using samples.} Observe by construction $(P,V)$ is non-interactive: $P$ sends one message $(h,\pi)$ to $V$. Next, we show $(P,V)$ satisfies perfect completenees and universal soundness for $\G$
    
    \emph{Completeness:}~~ We consider the honest protocol $(P,V)$ where $\D_P=\D_V$. First, we show that $V$ always accepts the honestly generated certificate $(h,\pi)$. Since $(h,\pi)$ is honestly generated, $V$ can parse $\pi$ into queries $\vec q$ and answers $\vec a$. Then it is sufficient to show that $\pi$ is consistent with an execution of $\A$ that results in the output $h$.
    Consider the sequence of queries $\vec q = (q_1,q_2,\hdots,q_B)$ listed in the certificate.
    We claim this is exactly the same sequence of queries that $V$ will produce when simulating $\A$.
    This claim follows (formally by induction on the sequence of queries) because $\A$ is deterministic and the verifier's simulation of $\A$ uses the same query responses $\vec a = (\O_P(q_1),\O_P(q_2),\hdots,\O_P(q_B))$ as the prover's execution.
    By the same argument, the output $h$ of the prover and verifier's execution of $\A$ will be the same (since $\A$ is deterministic and operates on the same input sequence).

    Second, we show that $\pi$ passes the verifier's statistical validation $\Validate(\vec q, \vec a, \tau/3,\tau)$, according to $\O_V$.
    Recall that $\O_P$ and $\O_V$ are both $\frac{\tau}{3}$-accurate oracles for $\D_P = \D_V$, by assumption.
    Since the verifier issues the same queries $q_1,q_2,\hdots,q_B$ to $\O_V$, we see that, for each $1 \le j \le B$, the comparison looks at the difference between $\O_P(q_j)$ and $\O_V(q_j)$.
    Both queries are within ${\tau}/{3}$ of the true expectation $\E_{x \sim \D}[q_j(x)]$, so by the triangle inequality $|\hat{a}_j - a_j| \le {2\tau}/{3}$. Therefore, $\Validate$ always returns true, and $V$ always accepts the honestly generated $(h,\pi)$.

    So it is sufficient to show that $h\in\GoodV$.
    Recall that $h$ is the output of running $\A$ using a ${\tau}/{3}$-accurate oracle for $\D_P$.
    Because $\D_P = \D_V$ (and ${\tau}/{3} \le \tau$), this execution is also the result of running $\A$ using \emph{some} $\tau$-accurate oracle for $\D_V$.
Thus, by the assumption that $\A$ $\tau$-SQ learns $\G$, then $h \in \GoodV$. So, $(P,V)$ achieves completeness $1$.

    \emph{Universal Soundness:}~~
Let $\D_V$ be any distribution, and $\O_V$ be any $\tau/3$-accurate SQ oracle for $\D_V$.
    Take any cheating prover strategy $\tilde{P}$, and consider the interaction $(\tilde{P}^{\O_V},V^{\O_V})$.
    It suffices to argue, as in the above argument for completeness, that if $V$ accepts, then $h \in \GoodV$.
    If $V$ accepts, then it must be the case that $\pi$ is consistent with some execution of $\A$ that results in outputting $h$.
    Further, $\Validate(\vec q, \vec a, \tau/3, \tau)$ passes, which implies the answers $\vec a$ are within ${2\tau}/{3}$ of the responses from the verifier's non-adaptive oracle query $\hat{\vec a}=\O_V(\vec q)$.
Since $\O_V$ is $\tau/3$-accurate with respect to $\D_V$, by the triangle inequality, $|a_j-\E_{x\sim \D_V}[q_j(x)]|\leq \tau$ for all $1 \le j \le B$.

    In combination, $\pi$ gives the verifier a way to simulate $\A$ using $\tau$-accurate answers its queries over $\D_V$.
The learning guarantee of $\A$ implies that the returned hypothesis $h\in\GoodV$ must be valid.
Thus under interaction with any prover cheater, the verifier $V$ either rejects or accepts and outputs $h\in\GoodV$, and $(P,V)$ satisfies universal soundness $0$.
\end{proof}

With this lemma in place, establishing our main theorem of this section is straightforward.
To achieve an improved bound on the size of the certificate, we make the following optimization.
Notice that our proof of correctness does not actually use the fact that the queries $\vec q$ used by the prover are the same as the queries derived by the verifier's simulation of $\A$; instead, we only need that the answers in the certificate, $\vec a$ are valid for the verifier's queries.
Thus, we do not need to include queries in the certificate at all: we need only the certificate be the list of answers, that is, $\pi:=\vec a$.
While including the names of the queries improves the clarity of the proof system,
we derive an equally-valid SQ protocol where $\pi:=\vec a$ in the following theorem.

\begin{theorem}[Formal Statement of Theorem~\ref{result:deterministic}]
    \label{thm:deterministic}
Fix a learning goal $\G$ and $\tau > 0$. Suppose $\A$ is a deterministic SQ algorithm that $\tau$-SQ learns $\G$, which makes $k$ adaptive queries to a $\tau$-accurate oracle $\O$.
    For any $\delta > 0$,
    there exists a pvCSV scheme for $\G$ with failure probability $\delta$ achievable in the following complexities.
    \begin{itemize}
\item Honest prover sample complexity:  $m_P \le O\left({\sqrt{k\cdot \log(k/\delta)\cdot \log(1/\delta)}}/{\tau^2}\right)$
        \item Verifier sample complexity:  $m_V \le O(\log(k/\delta)/\tau^2)$
        \item Certificate size:  $\card{\pi} \le O(k \cdot \log(1/\tau))$
    \end{itemize}
\end{theorem}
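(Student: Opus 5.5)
The proof builds directly on Lemma~\ref{lem:deterministic_sq}. I will first modify Protocol~\ref{prot:det} so that the certificate is only the answer list, $\pi := \vec a$. The verifier simulates $\A$ and answers its $j$-th query with $a_j$. This regenerates the query sequence $\vec q$ deterministically and yields an output $h'$. The verifier rejects if $h' \neq h$, if $\A$ asks more queries than $\pi$ supplies, or if answers are left over. Otherwise it runs $\Validate(\vec q,\vec a,\tau/3,\tau)$ on the regenerated queries.

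Correctness in the oracle model follows the proof of Lemma~\ref{lem:deterministic_sq}. As remarked before the theorem, that proof only needs each $a_j$ to be within $\tau$ of $\E_{\D_V}[q_j]$ for the queries the verifier itself derives. Completeness holds because $\A$ is deterministic: by induction, the verifier's derived queries coincide with the prover's. For universal soundness, passing $\Validate$ with a $\tau/3$-accurate $\O_V$ means $\vec a$ is a $\tau$-accurate response sequence for $\D_V$ along this execution. By the observation after the deterministic SQ learning definition, it therefore extends to a $\tau$-accurate oracle for $\D_V$, so $h\in\GoodV$ because $\A$ $\tau$-SQ learns $\G$. Each $a_j$ can be rounded to $O(\log(1/\tau))$ bits while staying $\tau/3$-accurate; for instance, round a $\tau/6$-accurate prover answer to a grid of width $\tau/6$, adjusting constants. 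With $B=k$ answers (the fully adaptive case the statement intends), this gives $|\pi| = O(k\log(1/\tau))$.

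Next I replace the idealized oracles with sample-based implementations.
\begin{itemize}
\item \emph{Prover.} Proposition~\ref{prop:full-adaptive-upper} gives a $(k,k,\tau/3,\delta)$-implementation for $\D_P$ using $O(\sqrt{k\log(k/\delta)\log(1/\delta)}/\tau^2)$ samples, since constant-factor changes in $\tau$ are absorbed.
\item \emph{Verifier.} The verifier issues only one non-adaptive batch of $k$ queries, so Proposition~\ref{prop:non-adaptive} gives a $(1,k,\tau/3,\delta)$-implementation for $\D_V$ using $O(\log(k/\delta)/\tau^2)$ samples.
\end{itemize}

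The main point needing care is the failure-probability accounting, particularly for universal soundness. The cheating prover's certificate is fixed before the verifier's samples are drawn, or at any rate is independent of them. The regenerated queries $\vec q$ are a deterministic function of $\pi$, so the batch is fixed before sampling. Hence the empirical oracle is $\tau/3$-accurate on that batch with probability $1-\delta$. On this event the oracle-model argument applies, giving soundness error $\delta$. For completeness, the prover's and verifier's implementations each fail with probability at most $\delta$, so a union bound gives total error $2\delta$. Rescaling $\delta$ gives the stated bounds.
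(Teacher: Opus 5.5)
Your proposal is correct and follows the paper's proof essentially step for step. You invoke Lemma~\ref{lem:deterministic_sq} with the certificate trimmed to $\pi := \vec a$, instantiate $\O_P$ and $\O_V$ at tolerance $\tau/3$ via Propositions~\ref{prop:full-adaptive-upper} and~\ref{prop:non-adaptive}, and union bound over the implementation failures. Your two explicit observations refine points the paper leaves implicit: that the verifier's batch is a deterministic function of $\pi$, hence independent of the verifier's samples, which is what licenses the non-adaptive bound for soundness; and that answers can be rounded to $O(\log(1/\tau))$ bits.
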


\begin{proof}
Let $(P,V)$ be  Protocol~\ref{prot:det}.
By Lemma~\ref{lem:deterministic_sq}, to run a fully-adaptive SQ algorithm using $k$ queries, it suffices to have a $(k,k,{\tau}/{3})$-implementation of $\O_P$ and a $(1,k,{\tau}/{3})$-implementation of $\O_V$.
The honest prover and verifier sample complexities follow from Proposition~\ref{prop:full-adaptive-upper} and Proposition~\ref{prop:non-adaptive}, respectively.
The original SQ verification protocol contributes $\gamma = 0$ failure probability, so by union bounding over the failure probability of implementing $\O_P$ and $\O_V$, completeness and universal soundness hold with probability at least  $1-\delta$.

Finally, we analyze the size of the certificate $\pi$.
Without loss of generality, we modify the certificate so that $\pi:=\vec a$, and the verifier so that it doesn't check for $\tilde{q}_i=q_i$. As mentioned above, this does not change the validity of Lemma~\ref{lem:deterministic_sq}.
Every answer to a statistical query requires $O(\log(1/\tau))$ bits of space. Since $\pi=\vec a=(a_1,a_2,\ldots,a_B)$, then $|\pi|=O(B\log(1/\tau))$.    
\end{proof}

Finally, we remark informally on the time complexities of the honest prover and verifier.
Both parties run a direct simulation of the original SQ algorithm $\A$.
Additionally, the honest prover must implement the mechanism of \cite{blanc2025subsampling} for adaptive data analysis to answer the queries to $\O_P$, whereas $\O_V$ may respond with the empirical statistics from its sample.
In most settings, the overall running time for both will be dominated by the simulation of $\A$.

\section{Delegation of Randomized SQ Learning}
\label{sec:randomized}

In this section, we address delegating SQ learning algorithms that use randomness.
Our investigation leads us to develop a number of new definitions and tools for understanding SQ protocols.
With these tools in place, we obtain (computationally-sound) pvCSVs, not only for randomized SQ algorithms, but for a large class of SQ protocols.

We start by giving essential definitions, highlighting important properties of randomized SQ algorithms and protocols.
Using these definitions, we then show that for a large class of SQ protocols, there exists a reformulation into a Canonical SQ Protocol.
This canonical protocol allows for non-adaptive verification of the statistical queries, leading to exponential savings in sample complexity for the verifier, for any original SQ protocol (including randomized algorithms that make adaptive statistical queries).
We conclude the section with a construction of pvCSVs for this class, by showing how to apply a Fiat-Shamir transformation to the canonical protocol.

\subsection{Definitions for Randomized SQ Learning and SQ Protocols}
\label{sec:random-prelims}

To begin, we fix a standard, mechanical format for describing randomized SQ algorithms that proceed in \emph{epochs}.
In the $t$-th epoch,
\begin{itemize}
    \item $\A$ samples randomness $r_t \in \{0,1\}^{d_t}$, for some (efficiently-bounded) $d_t \in \mathbb{N}$
    \item $\A$ then executes a deterministic SQ algorithm $\A_t$ as a subroutine, with $r_t$ as fixed input, which may issue adaptive statistical queries.
\end{itemize}
The choice of the $t$-th subroutine $\A_t$ may be chosen adaptively as a function of prior randomness, as well as any statistical queries issued.
In terms of expressivity, formatting randomized algorithms into epochs is not a restriction, as we can simulate any algorithm in this format.
But the number of epochs turns out to be a key quantity that affects the complexity of our proof systems.
\begin{definition}[Epoch Complexity]
Fix $\tau > 0$ and $\ell\in\Nbb$.
A randomized SQ algorithm $\A$ has \define{epoch complexity \mathdefine{\ell}} if for any $\tau$-accuate SQ oracle $\O$, the execution of $\A^\O$ can be broken into $\ell$ epochs.
\end{definition}
Extending the notion of epoch complexity to SQ protocols, we standardize the verifier's execution similarly.
In the $t$-th epoch,
\begin{itemize}
    \item $V$ samples randomness $r_t\in\{0,1\}^{d_t}$ for some (efficiently-bounded) $d_t\in\N$
    \item $V$ executes a deterministic, interactive SQ algorithm $V_t$ as a subroutine, with $r_t$ as a fixed input, which may issue adaptive statistical queries and communicate with $P$.
\end{itemize}
Again, the choice of the $t$-th subroutine $V_t$ can be chosen adaptively based on prior randomness, $V$'s internal state, and messages from the prover.

\begin{definition}[Epoch Complexity of SQ Protocols]
Fix $\tau_V > 0$ and $\ell\in\Nbb$.
An SQ verifier $V$ has \define{epoch complexity \mathdefine{\ell}} if for any prover $\tilde{P}$ and any $\tau_V$-accurate SQ oracle $\O_V$, the execution of $V$ within the interaction $(\tilde{P}^{\O_V},V^{\O_V})$ can be broken into $\ell$ epochs.
\end{definition}

Intuitively, formatting randomized algorithms (and SQ verifiers) into epochs allows us to separate ``adaptivity'' of $\A$ based on fresh randomness and adaptivity based on the results of statistical queries (and prover messages).
In the extreme, each epoch could issue a single statistical query, so adaptivity to the randomness and queries are essentially the same.
At the other extreme, we might consider algorithms that start with a random initialization, but then proceed deterministically.
Such algorithms may still make adaptive SQ queries, but could be implemented in a single epoch.
The number of epochs serves as a natural complexity measure of the algorithm's use of its randomness.

\paragraph{Oblivious vs.\ Public-State SQ Oracles.}
Recall that we model an SQ oracle as a stateful algorithm, who may respond adversarially (within $\tau$-accuracy) based on the state of the algorithm $\A$.
When our algorithms leverage randomness, we need to reason about what knowledge the oracle $\O$ has about the algorithm's internal state---namely, the randomness sampled at the current epoch.

One standard notion of SQ oracle allows the algorithm to maintain private state from the oracle.
In this case, the algorithm need not reveal any of its random coins to the oracle.
We refer to such oracles as \emph{oblivious} SQ oracles, which follow the earlier notion given in Definition~\ref{def:oracle}.

For randomized SQ algorithms, we mainly focus on oracles, who may respond adversarially with full knowledge of the algorithm's randomness.
This notion is captured by a \emph{public-state} SQ oracle.
\begin{definition}[Public-State SQ Oracle]
\label{def:publicstate}
    A \define{public-state SQ oracle} is a stateful algorithm $\O:\Q\times \{0,1\}^* \to [0,1]$ that takes as input a query $q\in \Q$ and a string $r\in\{0,1\}^*$ representing random coins, and responds with an evaluation $\O(q,r)\in [0,1]$.

    For $\tau > 0$, the oracle $\O$ is \define{\mathdefine{\tau}-accurate over \mathdefine{\D}} if for any finite, adaptively chosen sequence of $k \in \mathbb{N}$ queries and randomness $(q_1,r_1), (q_2,r_2),\hdots, (q_k,r_k)$, for all $i \in [k]$,
    \begin{gather*}
        \card{\O(q_i,r_i) - \E_{X \sim \D}[q_i(X)]} \le \tau.
    \end{gather*}
    Concretely, consider a sequence issued by a randomized SQ algorithm $\A$.
    In the $t$-th epoch, whenever $\A^t$ issues a query $q \in \Q$, the randomness $r_1 \hdots r_t$ is sent to the public-state oracle $\O(q,r_1 \hdots r_t)$.

    Further, a randomized SQ algorithm \define{\mathdefine{\tau}-SQ learns \mathdefine{\G} with public state}, if correctness holds for any $\tau$-accurate public-state SQ oracle $\O$; an SQ protocol $(P,V)$ \define{\mathdefine{(\tau_P,\tau_V,\gamma)}-SQ verifies \mathdefine{\G} with public verifier state}, if completeness and soundness hold for any $\tau_V$-accurate public-state SQ oracle $\O_V$.
\end{definition}
In other words, a randomized algorithm equipped with a public-state SQ oracle may, as usual, specify the query $q \in \Q$ of interest, but necessarily reveals the randomness sampled so far.
By the standardization into epochs, the randomness captures all of the state of $\A$.

Note that any algorithm/protocol that succeeds with access to a public-state oracle also works with an oblivious oracle, but the converse is not true.
Correctness with respect to a public-state oracle is a stronger guarantee: no matter what information is leaked to the oracle through interaction, the algorithm will work.
That said, proving correctness may be more challenging, so designing correct algorithms with a public-state oracle may be harder than with an oblivious oracle.

\paragraph{Public-Coin SQ Protocols.}
As with traditional interactive protocols, we can distinguish SQ protocols based on the verifier's communication of its randomness.
Key to our study of non-interactive proof systems for randomized SQ algorithms, we first consider \emph{public-coin} interactive SQ protocols.
Defining public-coin protocols when the verifier is equipped with an SQ oracle (which may introduce non-determinism) is nuanced.
We consider two alternative definitions.

\begin{definition}[Public-Coin SQ Protocols]\label{def:public-coin}
    An SQ protocol $(P,V)$ is a (standard) \define{public-coin} SQ protocol if in every round of communication, $V$ samples a uniformly random string $r$, independent of all prior randomness and messages, and sends $r$ to $P$.
    Concretely, for $V$ of epoch complexity $\ell$, the protocol consists of $\ell$ rounds, where in the $t$-th round, $V_t$ sends its randomness $r_t$ to $P$.
\end{definition}

\begin{definition}[Mixed-Message SQ Protocols]
    An SQ protocol $(P,V)$, where $V$ has epoch complexity $\ell$.
    $(P,V)$ is a \define{mixed-message} (public-coin) SQ protocol if for all $t = 1,\hdots,\ell$, the first message of the $t$-th epoch is from $V_t$ to $P$ and includes the epoch's randomness $r_t$.
\end{definition}

In other words, a standard public-coin SQ protocol adopts the formalism that the only messages the verifier sends to the prover are its randomness, whereas a mixed-message SQ protocol must reveal its randomness to the prover, but may also send non-random challenges.
Without an SQ oracle, the distinction is moot: in the $t$-th epoch, $V_t$ is a deterministic algorithm, so the prover can simulate any challenges it would receive and respond accordingly.
But with an SQ oracle---which may respond adversarially within its tolerance---$V_t^{\O_V}$ may have non-deterministic behavior, even conditioned on $r_t$.
Thus, in principle, mixed-message SQ protocols could be more expressive than public-coin SQ protocols.
Despite this distinction, our main result in Section~\ref{sec:canonical} (Protocol~\ref{prot:canon-epoch}) implies that (as in standard interactive proofs) any mixed-message SQ protocol can be compiled into a public-coin SQ protocol with the same completeness and soudness guarantees.

\paragraph{Public-Query SQ Protocols.}
A final consideration in classifying SQ protocols is whether the verifier's queries are kept private or made public to the prover.
A public-query protocol reveals its queries to the prover at every round of communication.
\begin{definition}[Public-Query SQ Protocols]
    An SQ protocol $(P,V)$ is a \define{public-query} SQ protocol if
    in every round $i$ of communication, the $i$th message $m_i$ from $V$ to $P$ includes every statistical query $q_{i_1},\hdots,q_{i_j}$ issued by $V$ to $\O_V$ since the previous message $m_{i-1}$.
\end{definition}
Note that, per the discussion above, whether the verifier's queries are public or private is orthogonal to whether the verifier's coins are public or private.
That said, Protocol~\ref{prot:canon-epoch} also implies that any public-coin, private-query protocol can be made public-query.
In this sense, we use the term ``public-query'' to refer to private-coin, public-query SQ protocols.
We explore public-query protocols further when we consider the strength and limits of SQ proof systems beyond pvCSVs in Section~\ref{sec:limits}.

\subsection{A Canonical Public-Coin SQ Protocol}
\label{sec:canonical}

With the preliminaries on randomized SQ algorithms and SQ protocols in place, we are ready to describe protocols for delegating randomized SQ algorithms.
We describe, in Protocol~\ref{prot:canon-epoch}, an interactive public-coin SQ protocol for delegating any randomized SQ algorithm that SQ-learns a concept $\G$ with public state.
In fact, our protocol is much more general, and can take any mixed-message, private-query SQ protocol that SQ-verifies $\G$ with public verifier state (which include all randomized SQ algorithms with public state) and compile it into a canonical public-coin SQ protocol, where the verifier issues a single non-adaptive batch of statistical queries.

\newcommand{\can}{\mathsf{can}}
\newcommand{\Pcan}{\mathsf{P}_\can}
\newcommand{\tildePcan}{\tilde{\mathsf{P}}_\can}
\newcommand{\Vcan}{\mathsf{V}_\can}
\newcommand{\OPcan}{\O_{\Pcan}}
\newcommand{\OVcan}{\O_{\Vcan}}

\begin{algorithm}
\renewcommand{\algorithmcfname}{Protocol}\caption{Canonical SQ Protocol}
\label{prot:canon-epoch}
\SetKwInOut{Input}{Input}
\SetKwInOut{Output}{Output}

\textbf{Setup:}
Let $\G,\tau_P,\tau_V,\gamma$ and $(P,V)$ be as described in Lemma~\ref{lem:canonical}.\\
Let $\OPcan$ be a $\min\{\tau_P,\tau_V/3\}$-accurate SQ oracle for $\D_P$.\\
Let $\OVcan$ be a $\tau_V/3$-accurate SQ oracle for $\D_V$.

\BlankLine
\BlankLine

\BlankLine
\BlankLine

\textbf{Phase 1:}  \emph{Interactive simulation of SQ protocol $(P,V)$} 

Initialize an empty transcript $\pi_0 = \langle \rangle$

\For{$i=1,\ldots,\ell$}{
    \textbf{Prover} $\Pcan$ and \textbf{Verifier} $\Vcan$ determine $V_i$  for the $i$-the epoch of $V$, based on $\pi_{i-1}$

    \BlankLine

    \textbf{Verifier} $\Vcan$ samples the $i$-th epoch's randomness $r_i$ and sends to $\Pcan$

    \BlankLine

    \textbf{Prover} $\Pcan$ updates $\pi_{i} \gets \langle \pi_{i-1},r_i \rangle$
    
    $\Pcan$ simulates the $i$-th epoch by executing $(P^{\OPcan},V_i^{\OPcan})(\pi_{i})$; specifically:
    \begin{compactitem}
\item when $V_i$ issues SQ $q$, $\Pcan$ queries $\OPcan(q)$ and updates $\pi_i \gets \langle \pi_i,q,\OPcan(q)\rangle$
        \item when $V_i$ sends message $s$ to $P$, $\Pcan$ simulates prover $P^{\OPcan}(\pi_i)$
        \item when $P$ sends message $m$ to $V$, $\Pcan$ updates $\pi_i \gets \langle \pi_i, s, m \rangle$
\end{compactitem}

    \BlankLine
    
    \uIf{simulation of $V$ returns $h$}{$\Pcan$ updates $\pi \gets \pi_i$ and sends $(h,\pi)$ to $\Vcan$
    
    $\Pcan$ breaks from loop}
    \uElseIf{simulation of $V$ ready for next epoch}{
    $\Pcan$ sends updates to transcript $\pi_i$ to $\Vcan$ for next epoch
    }
}

\BlankLine
\BlankLine

\textbf{Phase 2:}  \emph{Verification of interactive transcript}

\textbf{Verifier $\Vcan$:}

Read and typecheck $(h,\pi)$.
Parse $\pi$ into $(\vec q_V, \vec a_V, \tilde{\vec r}, \vec m, \vec s)$ as follows:
\begin{compactitem}
    \item $\vec q_V:=(q_1,q_2,\ldots,q_{B_V})$, the list of verifier SQs issued during simulation of $(P,V)$
    \item $\vec a_V := (a_1,a_2,\ldots, a_{B_V})$, the answers to $\vec q_V$, where $a_j$ should equal $\O_P(q_j)$
    \item $\tilde{\vec r} := (\tilde{r}_1,\hdots,\tilde{r}_\ell)$, the public coins of $V$
    \item $\vec s := (s_1,\hdots,)$, the non-random messages of $V$ to $P$ 
    \item $\vec m := (m_1,\hdots,)$, the messages of $P$ to $V$
\end{compactitem}

Check that $\pi$ is consistent with $(P,V)$ outputting $h$; that is,
\begin{compactitem}
    \item Check that $\tilde{\vec r} = r_1,\hdots,r_\ell$
    \item Simulate $V$ with randomness $\tilde{\vec r}$; use $\vec a_V$ to answer SQs, and $\vec m$ for messages from $P$
    \item Check that $V$ returns $h \neq \bot$
\end{compactitem}

\uIf{$\pi$ is NOT consistent with $(P,V)$}{
Reject and output $\bot$
}

\BlankLine
\BlankLine

\textbf{Phase 3:}  \emph{Validation of simulated oracle queries}

\textbf{Verifier $\Vcan$:}

\uIf{NOT $\Validate(\vec q_V,\vec a_V,\tau_V/3,\tau_V)$}{
    Reject and output $\bot$
}

Accept and output $h$

\end{algorithm}

\paragraph{Protocol Description.}
Protocol~\ref{prot:canon-epoch} works as follows.
We start with a mixed-message SQ protocol $(P,V)$ where the verifier may make private statistical queries, and each player is arbitrarily-adaptive in their queries.
Note that randomized SQ algorithms are the special case of such protocols, where the algorithm must accept $h$ or output $\bot$ without assistance from any prover.
We want to build a public-coin, public-query SQ protocol $(\Pcan,\Vcan)$ where the verifier issues a single batch of statistical queries.
As in Protocol~\ref{prot:det}, the verifier will delegate its queries to the prover and check them at the end.

Because $(P,V)$ is mixed-message, we know that the verifier $V$ starts each epoch by sending its randomness to the prover.
This random message will be the only message $\Vcan$ sends to $\Pcan$ per epoch; then, $\Pcan$ will be responsible for simulating the remaining interactive execution of that epoch.
$\Pcan$ will use its own SQ oracle $\OPcan$ to make any queries.
This aspect of the protocol---where the prover simulates the queries of the verifier with knowledge of its randomness---is where we need to leverage the assumption that $(P,V)$ learns $\G$ with public verifier state; that is, even if the oracle (or in this case the prover) can respond adversarially based on the state of the verifier, the protocol is still sound.
Once the epoch finishes, $\Pcan$ will send the transcript of the execution to $\Vcan$, who can update state and move to the next epoch.

At the end, the canonical verifier $\Vcan$ checks to ensure that the prover faithfully simulated the execution of $P$ and $V$.
Finally, $\Vcan$ runs a statistical check to make sure all of the query values reported by the prover are actually within the required tolerance of the original protocol.

In all, we obtain the following guarantee on our canonical public-coin SQ protocol.

\begin{lemma}[Formal Statement of Lemma~\ref{result:canonical}]
    \label{lem:canonical}
    Fix a learning goal $\G$, $\tau_P,\tau_V > 0$, and $\gamma > 0$, and let $\tau_P' = \min\{\tau_P,\tau_V/3\}$ and $\tau_V' = \tau_V/3$.
    Suppose $(P,V)$ is a mixed-message, private-query SQ protocol that $(\tau_P,\tau_V,\gamma)$-verifies $\G$ with public verifier state.
    There is an SQ protocol $(\Pcan,\Vcan)$, described in Protocol~\ref{prot:canon-epoch}, with the following properties:
\begin{enumerate}[(a)]
        \item $(\Pcan,\Vcan)$ is a public-coin SQ protocol that $(\tau_P',\tau_V',\gamma)$-verifies $\G$
        \item For $V$ of epoch complexity $\ell$, $(\Pcan,\Vcan)$ has at most $\ell$ rounds of interaction.
        \item $\Vcan$ uses a non-adaptive SQ oracle, regardless of the SQ adaptivity in $(P,V)$.
        That is,\\ suppose $V$ has $\tau_V$-query complexity $(k_V,B_V)$ and $P$ has $\tau_P$-query complexity $(k_P,B_P)$; then,
\begin{itemize}
\item $\Vcan$ has $\tau_V'$-query complexity $(1,B_V)$;
\item $\Pcan$ has $\tau_P'$-query complexity $(k_P+k_V,B_P+B_V)$
        \end{itemize}
        \item $\Vcan$ and $\Pcan$ run in linear time in the original protocol.
        That is,\\ suppose $V$ runs in time $T_V(\tau_V)$ and $P$ runs in time $T_P(\tau_P)$; then
        \begin{itemize}
            \item $\Vcan$ runs in time $O(T_V(\tau_V))$
            \item $\Pcan$ runs in time $O(T_V(\tau_V) + T_P(\tau_P))$
        \end{itemize}
        \item For $P$ of communication complexity $c_P$, $\Pcan$ has communication complexity $O(B_V\log(1/\tau_P')) + c_P$.
    \end{enumerate}
\end{lemma}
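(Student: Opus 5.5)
The plan mirrors the proof of Lemma~\ref{lem:deterministic_sq}, with the per-epoch public randomness as the new ingredient. I would first dispatch the structural properties (b)--(e), since they are read off Protocol~\ref{prot:canon-epoch} directly.

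For (b), the only messages from $\Vcan$ are the epoch seeds $r_i$, so the interaction has at most $\ell$ rounds. These seeds are fresh and uniform, which also gives the public-coin part of (a).

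For (c), $\Vcan$ touches $\OVcan$ only inside $\Validate$, in a single batch of the $B_V$ simulated verifier queries, so its query complexity is $(1,B_V)$. The prover $\Pcan$ answers both $P$'s queries and $V$'s delegated queries with $\OPcan$, and the rounds of adaptivity add, giving $(k_P+k_V,B_P+B_V)$.

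For (d), $\Vcan$ re-simulates $V$ using the transcript answers and then runs $\Validate$, for time $O(T_V(\tau_V))$. The prover simulates both parties, for time $O(T_V(\tau_V)+T_P(\tau_P))$. Since $\tau_P'\le\tau_P$ and $\tau_V'\le\tau_V$, the simulated oracles are at least as accurate as the originals assume, so the running-time bounds apply.

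For (e), the transcript contains the $B_V$ answers, each with $O(\log(1/\tau_P'))$ bits, plus $P$'s original messages. The non-random messages $\vec s$ and the queries can be recomputed by $\Vcan$ and dropped, exactly as in the proof of Theorem~\ref{thm:deterministic}.

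\emph{Completeness.} Take $\D_P=\D_V$ and fix any run. By induction over epochs and queries, the transcript sent by $\Pcan$ coincides with $\Vcan$'s re-simulation of $V$, because each $V_i$ is deterministic given $r_i$, the answers, and the prover messages. Hence the consistency checks pass. Each reported answer is $\tau_V/3$-close to the truth and each $\OVcan$ answer is also $\tau_V/3$-close, so every difference is at most $2\tau_V/3$ and $\Validate$ passes.

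Next I must argue that the simulated run is distributed as an honest run of $(P^{\O_P},V^{\O_V})$ for some admissible oracles. The answers $\OPcan$ gives to $P$'s queries are $\tau_P$-accurate. The answers it gives to $V$'s queries form a $\tau_V$-accurate public-state oracle for $V$: they may depend on $r_1\dots r_t$ and on the history, which public-state oracles are allowed to do. The coins $r_i$ are uniform and chosen by $\Vcan$. Completeness of $(P,V)$ then gives $h\in\GoodV$ with probability at least $1-\gamma$.

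\emph{Soundness.} This is the main obstacle. Fix any $\D_V$, any $\tau_V/3$-accurate $\OVcan$, and any cheating $\tildePcan$. I will build a cheating prover $\tilde P$ together with a $\tau_V$-accurate public-state oracle $\O_V$ for the original protocol. In epoch $t$, after seeing $r_1\dots r_t$ and the history, $\tilde P$ runs $\tildePcan$ on the same coins to obtain its claimed epoch transcript. It then replays that transcript's prover messages $\vec m$. The oracle $\O_V(q,r_1\dots r_t)$ returns the transcript's claimed answer whenever that answer is within $\tau_V$ of $\E_{\D_V}[q]$, and returns the true expectation otherwise.

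Formally this is one coupled adversary acting on the shared coins. I need to check that a public-state oracle may be correlated with a prover strategy in this way, since soundness is quantified over every $\tilde P$ and every public-state $\O_V$; I would argue that jointly choosing them is equivalent to fixing $\tildePcan$'s coins inside both.

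On the event that $\Vcan$ accepts, $\Validate$ passed, so by the triangle inequality every claimed answer satisfies $|a_j-\E_{\D_V}[q_j]|\le\tau_V$ and $\O_V$ returns exactly $a_j$. The consistency check guarantees that $V$'s execution under $(\tilde P,\O_V)$ with coins $r$ matches the checked transcript and outputs $h$. Since the coins are distributed identically, Soundness of $(P,V)$ with public verifier state bounds $\Pr[\Vcan \text{ accepts } h\notin\GoodV]$ by $\gamma$.

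The delicate point is that the transcript for epoch $t$ must be fixed by the time $V_t$ queries, before the later coins $r_{t+1},\dots$ exist. This holds because $\tildePcan$ commits to epoch $t$ before receiving $r_{t+1}$, matching the round structure of the protocol.
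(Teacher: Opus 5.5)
Your proposal is correct and follows essentially the same route as the paper. Properties (b)--(e) are read directly off Protocol~\ref{prot:canon-epoch}. Completeness comes from the triangle-inequality argument in $\Validate$, together with viewing $\Pcan$'s simulation as a run of $(P,V)$ against a public-state oracle. Soundness uses the same reduction: a cheating prover $\tilde P$ and a $\tau_V$-accurate public-state oracle both replay a (w.l.o.g.\ deterministic) $\tildePcan$ on the shared coins, substituting accurate answers wherever the claimed ones are too far off. Your explicit observation that each epoch's transcript must be committed before the next coins are revealed is a point the paper's proof uses only implicitly.
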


\begin{proof}
    Before we prove that the canonical protocol verifies $\G$, we will analyze the other properties first. First, we analyze round complexity. Observe by construction that each round of communication corresponds to one epoch of $V$. Since $V$ has epoch complexity $\ell$, $(\Pcan,\Vcan)$ has round complexity $\ell$. So part (b) holds. 
    
    Next, we analyze statistical complexity. Observe that $\Pcan$ simulates the interaction $(P^{\OPcan},V^{\OPcan})$, so $\OPcan$ must answer at most $B_P$ queries from $P$ and $B_V$ queries from $V$, and compose the rounds of adaptivity for $k_P+k_V$ rounds. Thus $\Pcan$ has $\tau_P'$-query complexity $(k_P+k_V, B_P+B_V)$. Meanwhile, $\Vcan$ only checks the statistical validity of the proposed verifier's queries in a single batch to $\OVcan$, so it has $\tau_V'$-query complexity $(1, B_V)$. Therefore part (c) holds.

    Next, we analyze run time. $\Pcan$ just simulates the entire interaction $(P^{\OPcan},V^{\OPcan})$. While $\OPcan$ operates at $\tau_P'$ accuracy, in order to meet the $(\tau_P,\tau_V,\gamma)$-learning guarantee of the original protocol, $\Pcan$ simulates $P$ running at $\tau_P$ accuracy and $V$ running at $\tau_V$ accuracy. So $\Pcan$ runs in time $O(T_V(\tau_V)+T_P(\tau_P))$. Similarly, $\Vcan$ simulates $V$ but using transcript $\pi$ and running at $\tau_V$ accuracy. So $\Vcan$ runs in time $O(T_V(\tau_V))$, and part (d) holds.

    Now, we analyze the $\Pcan$'s communication complexity. In the description of Protocol~\ref{prot:canon-epoch}, $\Pcan$ implicitly sends the entire transcript $\pi$ of the simulation of $(P,V)$, which comprises of the queries $\vec q_V$ and answers $\vec a_V$ of $V$, public coins $\tilde{\vec r}$ of $V$, non-random messages $\vec s$ from $V$ to $P$, and messages $\vec m$ from $P$ to $V$. However, strictly speaking $\Pcan$ does not need to relay $\vec q_V$, $\tilde{\vec r}$, and $\vec s$. $\Vcan$ already has the true public coins $(r_1,\ldots,r_\ell)$, and can simulate $V$ using that, $\vec a_V$ and $\vec m$ to generate the missing $\vec q_V$ and $\vec s$. Therefore, we can simplify $\Pcan$'s communication complexity to just $\vec a_V$ and $\vec s$. Then note $|\vec a_V| =O(B_V\log(1/\tau_P'))$. And $|\vec s|=c_P$, the communication complexity of $P$. So $\Pcan$'s communication complexity is $O(B_V\log(1/\tau_P')+c_P$, and part (e) holds.

    To complete the lemma, we show that \((\Pcan,\Vcan)\) is a public-coin SQ protocol that $(\tau_P',\tau_V',\gamma)$-SQ verifies $\G$. First, observe that $\Vcan$ sends all random coins generated for the $\ell$ rounds of communication, so it is public-coin. As noted in part (c), $\Pcan$ employs a $\tau_P'$-accurate SQ oracle, and $\Vcan$ employs a $\tau_V'$-accurate SQ oracle. So what remains to be shown is that the canonical transformation preserves $1-\gamma$ completeness and $\gamma$ soundness.

    \emph{Completeness:}~~ Assume $\D_P=\D_V$, and consider the honest protocol $(\Pcan,\Vcan)$ whose interaction generates $(h,\pi)$. First, we note that in the honest protocol, $\pi$ is well-formatted, so the canonical verifier can parse $\pi$ into $(\vec q_V, \vec a_V,\tilde{\vec r},\vec s,\vec m)$ and successfully check that $\pi$ is consistent with an execution of $(P,V)$ that outputs $h$. That is, $V$, using randomness $\tilde{\vec r}$ receiving evaluations $\vec a_V$ and messages $\vec m$, will make queries $\vec q_V$, send messages $\vec s$, and output the hypothesis $h$. This works since $\Pcan$ will properly simulate $(P,V)$ and relay the transcript to $\Vcan$. 

    Second, we show that $\vec a$ passes the canonical verifier's statistical validation $\Validate(\vec q_V,\vec a_V,\tau_V/3,\tau_V)$, according to $\OVcan$. Recall that $\OPcan$ is $\tau_P'$-accurate for $\D_P$ (which is equal to $\D_V$ by assumption) and $\tau_P'\leq \tau_V'$. Similarly, $\OVcan$ is $\tau_V'$-accurate for $\D_V$ and $\tau_V'=\tau_V/3$. Thus, by the specification of $\Validate$ (which uses an argument via triangle inequality), statistical validation passes. Additionally since $\tilde {\vec r}$ is generated via $\Vcan$'s random coins $(r_1,\ldots,r_\ell)$, in all $\Vcan$ will always accept unless $h=\bot$. 

    Thus, it is sufficient to argue that, under the honest protocol, $h\in\GoodV$ with probability at least $1-\gamma$. Recall that $h$ is the output of simulating $(P^{\OPcan},V^{\OPcan})$ on randomness $\tilde{\vec r}$. Since $\OPcan$ is a $\tau_P'$-accurate for $\D_P$ and $\tau_P'\leq\tau_P$, then the simulated $P$ has access to a $\tau_P$-accurate oracle for $\D_P$. Similarly, since $\D_P=\D_V$ by assumption, $\tau_P'\leq \tau_V$, and $\Pcan$ knows all random coins sent by $\Vcan$, then the simulated $V$ has access to a public-state $\tau_V$-accurate oracle for $\D_V$. Since randomness is uniform and generated in epochs according to the canonical verifier, the simulation of $(P^{\OPcan},V^{\OVcan}(\tilde{\vec r}))$ is indistinguishable from a true execution of $(P,V)$. Since $(P,V)$ $(\tau_P,\tau_V,\gamma)$-SQ verifies $\G$, then $h\in\GoodV$ with probability at least $1-\gamma$. Thus, completeness of $(\Pcan,\Vcan)$ is $1-\gamma$.

    \emph{Universal Soundness:}~~  It suffices to show that universal soundness of $\Vcan$ reduces to the universal soundness $V$. Take any cheating prover strategy $\tilde P_\can$ for $\Vcan^{\OVcan}$. 
    We construct a cheating prover $\tilde P$ and oracle $\tilde O_V$ for $V$ as follows. 
    We will argue anytime $\tilde P_\can$ cheat, then $(\tilde P, \tilde O_V)$ cheats as well. 
    Without loss of generality, let $\tilde P_\can$ be deterministic. 
    Let $\tilde P$ and $\tilde \O_V$ each simulate $\tilde P_\can$, where the randomness of each epoch of $V$ is used as input. 
    If the simulation of $\tilde P_\can$ produces a valid transcript $\pi=(\vec q_V, \vec a_V,\tilde{\vec r},\vec s,\vec m)$, then $\tilde \O_V$ responds as the answers $\vec a_V = (a_{1},a_2,\ldots,a_{B_V})$ and $\tilde P$ responds as the messages $\vec m = (m_1,m_2,\ldots)$. 
    To ensure that $\tilde \O_V$ is a $\tau_V$-accurate oracle for $\D_V$, any answer $a$ which is $\tau$-far from the true expectation for $\D_V$ will be replaced with an arbitrary $\tau_V$-accurate answer for $\D_V$.
    In the case where $\tilde P_\can$ does not produce a valid transcript, then $\tilde{P}$ will always reply with a well-formed dummy message and $\tilde \O_V$ will always reply with a $\tau_V$-accurate answer.

    We now reduce from between $(\tilde P_\can,\Vcan^{\OVcan})$ and  $(\tilde P,V^{\tilde \O_V})$. 
    Let the random coins of $\Vcan$ and $V$ be the same random coins. Let us condition on the event that $(\tilde P_\can,\Vcan^{\OVcan})$ accepts. 
    Then $\tilde P_\can$ must have produced a valid  transcript $\pi=(\vec q_V, \vec a_V,\tilde{\vec r},\vec s,\vec m)$ and a hypothesis $h\neq\bot$ where $\pi$ corresponds to a simulation of $V$ that produces $h$. 
    Further, $\Validate(\vec q_V,\vec a_V,\tau_V/3,\tau_V)$ passed. 
    Then $\vec a_V$ are $2\tau_V/3$-close to the answers from $\OVcan(\vec q_V)$. 
    Since $\OVcan$ is $\tau_V/3$-accurate for $\D_V$, then by triangle inequality, $\vec a_V$ are $\tau_V$-accurate for $\D_V$.
    
    Since the random coins are shared between $\Vcan$ and $V$, $(h,\pi)$, as generated above, is also the output of $\tilde P$ and $\tilde \O_V$'s simulation of $\tilde P_\can$.
    Recall that we are conditioning on the event that the $\Vcan$ accepts. Thus, all answers $\vec a_V$ are $\tau_V$-accurate for $\D_V$, and all messages $\vec m$ are well-formed. Thus, $\tilde P$ sends messages $\vec m$, and $\tilde O_V$ provides answers $\vec a_V$. 
    Therefore, the view of the verifier in $(\tilde P, V^{\O_V})$ is indistinguishable from $\Vcan$ simulation of $V$ using $\pi$. Since $\Vcan$ accepted and output $h$, then $V$ must have accepted and also output $h$.

    Suppose that $(\tilde P_\can,\Vcan^{\OVcan})$ accepts and outputs $h\notin\GoodV$ with probability $\gamma'$. Then by monotonicity, $(\tilde P,V^{\tilde \O_V})$ accepts and outputs $h\notin\GoodV$ with probability at least $\gamma'$. But $(\tilde P, V^{\tilde\O_V})$ accepts and outputs $h\notin\GoodV$ with probability at most $\gamma$ by universal soundness of $(P,V)$. Therefore, $\gamma'\leq \gamma$. Since this holds over all $\D_V$ and $\tau_V/3$-accurate oracles $\O_V$ for $\D_V$, then $(\tilde P_\can,\Vcan^{\OVcan})$ accepts and outputs $h\notin\GoodV$ with probability at most $\gamma$. So $(\Pcan,\Vcan)$ satisfies universal soundness $\gamma$.
\end{proof}

\subsection{pvCSVs from the Canonical Protocol and Fiat-Shamir}
\label{sec:FS}
\newcommand{\rnd}{\textsf{rnd}} \newcommand{\sr}{\textsf{SR}} \newcommand{\rom}{\textsf{ROM}}
\newcommand{\SQV}{\textsf{SQV}} \newcommand{\fs}{\mathsf{FS}} \newcommand{\Pfs}{\mathsf{P}_\fs}
\newcommand{\tildePfs}{\tilde{\mathsf{P}}_\fs}
\newcommand{\Vfs}{\mathsf{V}_\fs}
\newcommand{\sfc}{\mathsf{c}} \newcommand{\csv}{\mathsf{csv}} \newcommand{\Pcsv}{\mathsf{P}_\csv}
\newcommand{\Psr}{\mathsf{P}_\sr}
\newcommand{\tildePcsv}{\tilde{\mathsf{P}}_\csv}
\newcommand{\tildePsr}{\tilde{\mathsf{P}}_\sr}
\newcommand{\Vcsv}{\mathsf{V}_\csv}
\newcommand{\gammacsv}{\gamma_\csv}

Next, we construct pvCSVs for SQ computations that use randomness.
Specifically, we compile the canonical Protocol~\ref{prot:canon-epoch} from above into non-interactive SQ protocols using a Fiat-Shamir transformation \cite{fiat1986prove}.
We prove soundness of the transformation in the Random Oracle Model (ROM), which allows us to replace the public coins of the verifier with non-interactive calls to the random oracle.
We begin with preliminaries defining correctness and soundness in the ROM.
Then, we describe our Fiat-Shamir transformation over canonical SQ protocols in Protocol~\ref{prot:fiat-shamir}. 

\paragraph{pvCSVs in the ROM.}

In the Random Oracle Model \cite{bellare1993random}, both the prover and verifier have query access to a shared random function called the random oracle $f:\{0,1\}^*\to\{0,1\}^m$ for some output size $m\in\Nbb$.
Completeness and soundness take probabilities over the sampling of the random oracle which is denoted $f\gets\U$.
Furthermore, we restrict the cheating prover to at most $t\in\Nbb$ queries of the random oracle, called the random oracle budget, typically taken to be polynomial in some security parameter.
We adapt Definition~\ref{def:pvcsv} of pvCSVs to define computationally-sound pvCSVs in the ROM.

\begin{definition}[pvCSV in the ROM]
    \label{def:pvcsv-rom}
    Let $\G$, $\tau_P,\tau_V>0$, $\gammacsv>0$, and $(\Pcsv,\Vcsv)$ be defined in the setup of Definition~\ref{def:pvcsv}. The pvCSV prover $\Pcsv$ and verifier $\Vcsv$ are given query-access to a random oracle $f$. The pvCSV \define{\mathdefine{(\tau_P,\tau_V,\gamma,\gamma')}-certifies \mathdefine{\G} in the ROM} if the following guarantees hold:
    \begin{itemize}
        \item \define{\mathdefine{\gamma}-Completeness in the ROM}:  if $\D_P = \D_V$, then for any verifier distribution $\D_V$, $\tau_V$-accurate oracle $\O_V$ for $\D_V$, $\tau_P$-accurate oracle $\O_P$ for $\D_V$, over sampling of the random oracle $f$, random coins of $\Pcsv$ and $\Vcsv$, the honest prover can generate a hypothesis-certificate pair $(h,\pi) \gets \Pcsv^{\O_P,f}$ such that $\Vcsv^{\O_V,f}$ accepts and $h \in \GoodV$ with probability at least $1-\gamma$. That is, 
        $$
\forall \D_V,\O_V,\O_P.\ \Pr\left[\begin{array}{l}
             \Vcsv^{\O_V,f}(h,\pi)=1  \\
             \land\ h\in\GoodV
        \end{array} \left| 
        \begin{array}{l}
            f\gets\U, \\
            \text{random coins of $\Pcsv$ and $\Vcsv$},\\
            (h,\pi)\gets \Pcsv^{\O_P,f}
        \end{array}\right.\right] \geq 1- \gamma$$
        \item \define{\mathdefine{\gamma'}-Universal Soundness in the ROM}:
        for any verifier distribution $\D_V$ and $\tau_V$-accurate oracle $\O_V$ for $\D_V$, for any random oracle budget $t\in\Nbb$, for any $t$-query prover strategy $\tilde{P}_\csv$,
        for $(h,\pi) \gets \tilde{P}_\csv^{\O_V,f}$, then $V^{\O_V}$ rejects or $h\in\GoodV$ with probability at least $1-\gamma'(t)$
        over sampling of the random oracle $f$, random coins of $\Vcsv$. That is, 
        $$\forall \D_V,\O_V,\tilde{P}_\csv.\ \Pr\left[\begin{array}{l}
             \Vcsv^{\O_V,f}(h,\pi)=0  \\
             \lor\ h\in\GoodV
        \end{array} \left| 
        \begin{array}{l}f\gets\U, \\
            \text{random coins of $V$},\\
            (h,\pi)\gets  \tilde{P}_\csv^{\O_V,f}
        \end{array}\right.\right] \geq 1- \gamma'(t)$$
    \end{itemize}
\end{definition}

We denote ROM soundness against $t$-query provers as $\gamma'(t)$ where $\gamma'$ is a function and $t\in\Nbb$ is the random oracle query budget. Additionally, since soundness is parameterized whereas completeness is not, we split completeness and soundness error into $(\gamma,\gamma')$.

\paragraph{Applying Fiat-Shamir to the Canonical Protocol.}

We transform the Canonical SQ protocol $(\Pcan,\Vcan)$ into a pvCSV in the ROM by applying the Fiat-Shamir transformation to the initial phase of public-coin interaction.
As is standard in Fiat-Shamir, the canonical verifier's randomness in the $i$-th round is the query of the random oracle with the partial transcript of interaction up to round $i$.
We describe the resulting protocol $(\Pfs,\Vfs)$ in Protocol~\ref{prot:fiat-shamir}.
The honest prover $\Pfs$ simulates the interaction $(\Pcan,\Vcan)$ where $\Vcan$'s randomness is sampled via the random oracle $f$.
Then, the verifier $\Vfs$ runs the second and third phase of the canonical protocol as before, checking the correctness of the computational simulation and the statistical validity of the relevant SQs via a batch evaluation of the queries of $\Vcan$.

\begin{algorithm}[t!]
\renewcommand{\algorithmcfname}{Protocol}\caption{Non-Interactive SQ Protocol in Random Oracle Model}
\label{prot:fiat-shamir}
\SetKwInOut{Input}{Input}
\SetKwInOut{Output}{Output}

\textbf{Setup:} Let $(\Pcan,\Vcan)$ be a canonical SQ protocol that $\tau$-SQ verifies $\G$, where
\begin{compactitem}
    \item $\Pfs$ accesses a $\tau$-accurate SQ Oracle $\O_P$ for $\D_P$. \item $\Vfs$ accesses a $\tau$-accurate SQ Oracle $\O_V$ for $\D_V$.
    \item $\ell$ is the round complexity of $(\Pcan,\Vcan)$.
    \item Both $\Pfs$ and $\Vfs$ have access to a random oracle $f$.
\end{compactitem}

\BlankLine
\BlankLine

\textbf{Prover} $\Pfs^{\O_P,f}$:

\Indp
    
    Initialize an empty transcript $\pi_0 = \langle \rangle$
    
    Simulate the interaction $(\Pcan,\Vcan)$ as follows:
    
    \For{$i=1,\ldots,\ell$}{
        $\Pcan^{\O_P}$ sends $\Vcan$ a message $m_i$

        Generate $\Vcan$'s randomness at the $i$-th round as $r_i := f(\langle m_1,\ldots,m_i\rangle )$
        
        $\Vcan$ sends $r_i$ to $\Pcan$     
        
       	Update $\pi_i:=\langle\pi_{i-1},m_i,r_i\rangle$
    }
    Let $\pi:=\pi_{\ell}$
    
    Send $\pi$ to \textbf{Verifier} $\Vfs$.

\Indm

\BlankLine
\BlankLine

\textbf{Verifier} $\Vfs^{\O_V,f}$:

\Indp
	
	Parse the transcript $\pi$ as 
    \begin{compactitem}
        \item $\vec m = (m_1,m_2,\ldots,m_\ell)$ purported messages from $\Pcan$
        \item $\vec r = (r_1,r_2,\ldots,r_\ell)$ purported generated randomness for $\Vcan$
    \end{compactitem}

    \For{$i=1,\ldots,\ell$}{
    		\uIf {$r_i\neq f(\langle m_1,\ldots,m_{i}\rangle)$}
    		{Reject and output $\bot$}
    }

    Execute
    \textbf{Phase 2} and 
    \textbf{Phase 3} of $\Vcan$

    \uIf{either phase fails}{
        Reject and output $\bot$.
    }
    Accept and output $h$.
    
\Indm
 \end{algorithm}

The correctness of the resulting pvCSV follows from the state restoration argument of \cite{ben2016interactive}, with some subtleties that arise in the SQ protocol setting.
The structure of the Canonical protocol simplifies the analysis considerably: the protocol naturally divides into a phase of public-coin interaction, followed by (computational and statistical) verification of the transcript.
Completeness of the protocol follows from completeness of Protocol~\ref{prot:canon-epoch}, since the honest prover can execute the same sequence of queries and computations.
Thus, we focus on establishing soundness.

There are two key aspects of Protocol~\ref{prot:fiat-shamir} that require us to be careful in establishing soundness of the transformation.
First, in our setting, the cheating prover has considerable powers related to the statistical learning problem, with full knowledge of the underlying distribution $\D$ and access to the verifier's SQ oracle.
But, importantly, the cheating prover in the Fiat-Shamir protocol and Canonical protocol are afforded the same powers.
The approach of establishing Fiat-Shamir soundness via state restoration is a black-box reduction, so we can apply the same argument even in our setting where the provers have non-standard computational and statistical powers.
Second, our application of Fiat-Shamir is used to generate a legitimate prover-verifier transcript, which produces a hypothesis $h$, rather than directly certifying a known predicate (e.g., certifying a given $h \in \GoodV$).
That said, once we have a sound transcript, Phase 2 and Phase 3 of the original protocol allow us to validate that the $h$ is actually good for the verifier's distribution.

Specifically, we invoke the following guarantee about the Fiat-Shamir transformation applied to public-coin protocols.

\begin{theorem}[Corollary of Lemma 13.2.7 and Theorem 14.3.1 of \cite{chiesa2024snargsbook}]\label{thm:fs-soundness-upperbound}
    Let $(P,V)$ be a public-coin interactive protocol with round complexity $\ell$ and soundness $\gamma$.
    Let $(\Pfs,\Vfs)$ be the Fiat-Shamir transformed protocol, and suppose it has $\gamma_\fs$ soundness in the ROM.
    There is a black-box reduction that establishes the following upper bound on soundness of $(\Pfs,\Vfs)$ in terms of soundness of $(P,V)$, round complexity $\ell$, and random oracle budget $t\in\Nbb$.
    $$\gamma_\fs(t)\leq \binom{t+\ell}{\ell}\gamma$$
\end{theorem}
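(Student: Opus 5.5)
The plan is to prove the bound in two black-box steps, following the state-restoration paradigm of \cite{ben2016interactive,chiesa2024snargsbook}. First, reduce Fiat-Shamir soundness to \emph{state-restoration soundness} of the public-coin protocol $(P,V)$ (the content of Theorem 14.3.1). Then bound state-restoration soundness by ordinary soundness, losing a factor of $\binom{t+\ell}{\ell}$ (the content of Lemma 13.2.7).

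\textbf{Step 1: from Fiat-Shamir to state restoration.} Take any $t$-query cheating prover $\tildePfs$ in the ROM. Build a state-restoration prover $\tildePsr$ that runs $\tildePfs$ and answers its random-oracle queries lazily. When $\tildePfs$ queries $f$ on a new partial transcript $\langle m_1,\ldots,m_i\rangle$, $\tildePsr$ moves the verifier to the node reached by $\langle m_1,\ldots,m_{i-1}\rangle$, submits $m_i$, and receives fresh verifier randomness $r_i$, which it returns as $f(\langle m_1,\ldots,m_i\rangle)$. Repeated queries are answered consistently from a table. This simulation is perfect, so the joint distribution of $(f,\text{output of } \tildePfs)$ matches the state-restoration game. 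At the end, $\tildePsr$ outputs the final transcript $\pi$ that $\tildePfs$ produced, plus at most $\ell$ extra queries to ensure every prefix of $\pi$ has been queried. Hence $\Vfs$ accepts $\pi$ exactly when the state-restoration verifier accepts the corresponding path, so $\gamma_\fs(t)\le\gamma_\sr(t+\ell)$ up to the bookkeeping in the cited theorem.

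\textbf{Step 2: from state restoration to ordinary soundness.} Consider the tree of partial transcripts that $\tildePsr$ explores. The transcript it finally outputs is one of its at most $t$ explored paths. A standard guessing argument handles this. The single-execution cheating prover $\tilde{P}$ guesses in advance which of its restoration queries will form the rounds of the final accepted path; there are at most $\binom{t+\ell}{\ell}$ ways to place $\ell$ rounds among $t$ queries. At those positions, $\tilde{P}$ forwards the messages to the real verifier and uses the real randomness. At all other positions, it samples fresh randomness itself. Because each round's coins are uniform and independent, the view of $\tildePsr$ is identically distributed. With probability at least $1/\binom{t+\ell}{\ell}$, the guess matches the output path, and then $\tilde{P}$ wins the ordinary game. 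Therefore $\gamma_\sr\le\binom{t+\ell}{\ell}\gamma$.

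\textbf{Main obstacle.} The crux is that here ``winning'' means that the verifier accepts and outputs $h\notin\GoodV$, not that a fixed statement is false. Soundness of the SQ protocol is also over an oracle $\O_V$ that the prover may use. Both reductions have to be checked to carry this event and the prover's oracle access along unchanged. Fortunately, the event is determined by the final transcript together with $\O_V$, which is fixed and shared by $\tilde{P}$, $\tildePsr$, and $\tildePfs$. Since both reductions are black-box and preserve the final transcript, the event transfers directly, and combining the two steps gives $\gamma_\fs(t)\le\binom{t+\ell}{\ell}\gamma$.
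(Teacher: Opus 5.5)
Your proposal follows the same route the paper relies on in citing \cite{chiesa2024snargsbook}: bound Fiat--Shamir soundness by state-restoration soundness via lazy simulation of $f$, then bound that by ordinary soundness by guessing which moves form the output path, with the SQ setting going through because the bad event is determined by the final transcript and the shared oracle $\O_V$. Only the bookkeeping needs tightening: guessing $\ell$ increasing positions among $t'$ moves costs $\binom{t'}{\ell}$, so the stated bound $\binom{t+\ell}{\ell}$ comes from taking $t'=t+\ell$ from Step~1 (your $\binom{t+\ell}{\ell}$ for $t$ moves would compose to $\binom{t+2\ell}{\ell}$); and because here $r_i=f(\langle m_1,\ldots,m_i\rangle)$ depends only on prover messages, Step~1 must also handle a query issued before its prefixes have been queried, when the node reached by $\langle m_1,\ldots,m_{i-1}\rangle$ does not yet exist, which is cleanest to do by guessing directly in the Fiat--Shamir run the $\ell$ positions at which the output path's prefixes are first queried.
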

Specifically, the reduction goes through a state restoration game and demonstrates how, given a cheating prover for the Fiat-Shamir proof system $\tildePfs$, there exists a cheating prover $\tilde P$ for the original interactive protocol that makes calls to $\tildePfs$, at a $\binom{t+\ell}{\ell}$-factor loss in success probability.
With this fact, we establish the correctness of Protocol~\ref{prot:fiat-shamir}.

\begin{lemma}[Fiat-Shamir for Canonical Protocol]\label{lem:fiat-shamir}
    Fix $\G$, $\tau_P,\tau_V>0$, $\gamma_\can>0$, and canonical SQ protocol $(\Pcan,\Vcan)$ that $(\tau_P,\tau_V,\gamma_\can)$-verifies $\G$ with round complexity $\ell$.
    Then the Fiat-
    Shamir transformed protocol $(\Pfs,\Vfs)$, described in Protocol~\ref{prot:fiat-shamir}, is a pvCSV that $(\tau_P,\tau_V,\gamma_\can,{t + \ell\choose \ell}\gamma_\can)$-certifies $\G$ in the ROM where $t\in\Nbb$ is the random oracle budget.
\end{lemma}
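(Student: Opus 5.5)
The plan is to prove completeness and universal soundness in the ROM separately. Soundness will be reduced to the universal soundness of $(\Pcan,\Vcan)$ using Theorem~\ref{thm:fs-soundness-upperbound}.

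\emph{Completeness.} Fix $\D_P=\D_V$, a $\tau_P$-accurate $\O_P$, and a $\tau_V$-accurate $\O_V$. Since $f$ is drawn uniformly, each derived coin $r_i=f(\langle m_1,\ldots,m_i\rangle)$ is uniform. It is also independent of everything $\Pcan$ has computed before round $i$, because the honest prover queries $f$ at a fresh prefix in each round. Prefixes are distinct across rounds since they have different lengths.

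Hence the transcript that $\Pfs^{\O_P,f}$ produces has the same distribution as an honest execution of $(\Pcan^{\O_P},\Vcan)$ with uniformly random public coins. The checks $r_i=f(\langle m_1,\ldots,m_i\rangle)$ pass by construction. Phases 2 and 3 of $\Vfs$ are exactly those of $\Vcan$, and $\Vfs$ uses the same oracle $\O_V$. So the probability that $\Vfs$ accepts and outputs $h\in\GoodV$ equals the completeness probability of $(\Pcan,\Vcan)$, which is at least $1-\gamma_\can$.

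\emph{Universal soundness.} Fix an arbitrary $\D_V$, a $\tau_V$-accurate $\O_V$ for $\D_V$, and a $t$-query cheating prover $\tildePfs$ that is given $\O_V$. Define the predicate on complete public-coin transcripts
\[
\mathsf{Bad}(\pi,h) := \Vcan^{\O_V}\text{ accepts in Phases 2--3 and outputs } h\notin\GoodV .
\]
The canonical protocol is public-coin, and after the interaction $\Vcan$ only runs a deterministic check together with a single batch query to $\O_V$. Universal soundness of $(\Pcan,\Vcan)$ therefore says the following: for every interactive prover (equipped with $\O_V$ and unbounded), the probability over the public coins that the resulting transcript satisfies $\mathsf{Bad}$ is at most $\gamma_\can$.

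This is exactly the form of soundness Theorem~\ref{thm:fs-soundness-upperbound} requires: a public-coin protocol whose ``accept'' predicate is evaluated on the full transcript. I would apply that theorem with acceptance redefined as $\mathsf{Bad}$. It gives $\Pr[\Vfs \text{ outputs } h\notin\GoodV]\le\binom{t+\ell}{\ell}\gamma_\can$.

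The main obstacle is justifying that the black-box state-restoration reduction of \cite{chiesa2024snargsbook} applies in this setting, and I would address it in three steps.

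\begin{enumerate}
\item The interactive cheating prover $\tilde P$ that the reduction builds runs $\tildePfs$, answering its random-oracle queries by lazy sampling and forwarding the relevant ones to the real verifier's coins. It needs $\O_V$ access and knowledge of $\D_V$. These are exactly the powers a cheating prover is granted in Definition~\ref{def:protocol}, so $\tilde P$ is a legitimate prover for $(\Pcan,\Vcan)$ and inherits the bound $\gamma_\can$.
\item The final decision involves the oracle $\O_V$, which may be adversarial or stateful. The definition of soundness already quantifies over all $\tau_V$-accurate $\O_V$, so I would fix $\O_V$ and treat $\Vcan$'s decision as a fixed function of the transcript. This requires the batch query to $\O_V$ to depend only on $\pi$. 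If $\O_V$ is stateful, I would fix its responses by the non-adaptive nature of Phase 3, since all queries are issued in one batch determined by $\pi$.
\item The soundness event must concern ``$h\notin\GoodV$'' rather than mere acceptance. Since the reduction preserves transcripts, any event that is a function of the transcript transfers, and $h$ is determined by $\pi$.
\end{enumerate}

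Combining completeness with error $\gamma_\can$ and soundness with error $\binom{t+\ell}{\ell}\gamma_\can$ gives the $(\tau_P,\tau_V,\gamma_\can,\binom{t+\ell}{\ell}\gamma_\can)$-certification claimed in Lemma~\ref{lem:fiat-shamir}.
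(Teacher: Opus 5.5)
Your proposal is correct and follows the same route as the paper's proof sketch. Completeness holds because the random-oracle outputs are distributed exactly like the public coins. Soundness comes from the black-box state-restoration bound of Theorem~\ref{thm:fs-soundness-upperbound}, since the reduction's interactive prover needs only the same $\O_V$-access and knowledge of $\D_V$ as the Fiat--Shamir cheating prover. Folding Phases 2--3 into the winning predicate ``accept and $h\notin\GoodV$'' is a more precise way of stating the paper's claim that those phases add no soundness error beyond the $\binom{t+\ell}{\ell}$ loss.
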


\begin{proof}[Proof sketch]
    Completeness is immediate, by the fact that the random oracle calls are identically distributed to the challenges sent by the public-coin verifier.
    Thus, the honest prover generates a transcript from the same distribution as the honest execution of Protocol~\ref{prot:canon-epoch}.
    
    Soundness follows by Theorem~\ref{thm:fs-soundness-upperbound} applied to Phase 1 of Protocol~\ref{prot:canon-epoch}.
    In particular, the black-box reduction allows us to convert any cheating prover for Protocol~\ref{prot:fiat-shamir} $\tildePfs$ into a cheating prover for Protocol~\ref{prot:canon-epoch} $\tildePcan$.
    While the provers are afforded non-standard SQ oracles and knowledge of the distribution, the Canonical interactive prover $\tildePcan$ has the same oracle access as the Fiat-Shamir prover $\tildePfs$, so $\tildePcan$ can implement the black-box calls to $\tildePfs$ in the reduction.
    Thus, converting the interactive SQ protocol into a non-interactive protocol is sound up to the $\gamma_\fs(t)\leq \binom{t+\ell}{\ell}\gamma_\can$ loss as stated in Theorem~\ref{thm:fs-soundness-upperbound}.
    In particular, the Fiat-Shamir protocol generates a legitimate transcript of the original Canonical protocol, with all but  $\gamma_\fs(t)$ soundness error.

    Finally, we argue that in the SQ model, the verifier's additional checks establish that $h \in \GoodV$ or result in rejection.
    After the non-interactive simulation of the interactive Phase 1,
    the verifier additionally executes Phase 2 and Phase 3 of Protocol~\ref{prot:canon-epoch}.
    Phase 2 certifies that the transcript is a legitimate execution of the SQ protocol (based on the responses to statistical queries) and Phase 3 validates that the query responses are statistically correct. In the SQ model, these checks contribute zero additional soundness error.
    Thus, the upper bound on soundness holds as claimed.
\end{proof}

\paragraph{pvCSVs for all mixed-message protocols with public-verifier state.}
We conclude with a statement of the overall pvCSV guarantee established in this section within the ROM.
In Lemma~\ref{lem:canonical}, we argue that for the class of mixed-message, private-query SQ protocols with public verifier state can be transformed into a Canonical public-coin SQ protocol.
Then, in Lemma~\ref{lem:fiat-shamir}, we apply the Fiat-Shamir transform to turn any Canonical protocol into a pvCSV in the ROM.
Chaining these lemmas together, we obtain the following theorem.

\begin{theorem}\label{thm:pvcsv}
    Fix $\G$, $\tau_P,\tau_V>0$, $\gamma>0$, and a mixed-message, private-query SQ protocol $(P,V)$ with public verifier state. Let $\tau_P'=\min\{\tau_P,\tau_V/3\}$ and $\tau_V'=\tau_V/3$. Suppose $(P,V)$ $(\tau_P,\tau_V,\gamma)$-verifies $\G$. Applying the canonical transformation described in Protocol~\ref{prot:canon-epoch}, and then the Fiat-Shamir transformation, described in Protocol~\ref{prot:fiat-shamir}, yields a pvCSV $(\Pfs,\Vfs)$ that certifies $\G$ with the following characteristics:
    \begin{enumerate}[(a)]
        \item For $V$ with epoch complexity $\ell$, $(\Pfs,\Vfs)$  $(\tau_P',\tau_V',\gamma, {t+\ell\choose\ell}\gamma)$-certifies $\G$ in the ROM.
        \item $\Vfs$ uses a non-adaptive SQ oracle, regardless of the SQ adaptivity in $(P,V)$.
        That is,\\ suppose $V$ has $\tau_V$-query complexity $(k_V,B_V)$ and $P$ has $\tau_P$-query complexity $(k_P,B_P)$; then,
        \begin{itemize}
            \item $\Vfs$ has $\tau_V'$-query complexity $(1,B_V)$;
            \item $\Pfs$ has $\tau_P'$-query complexity $(k_P+k_V,B_P+B_V)$
        \end{itemize}
        \item $\Vfs$ and $\Pfs$ run in linear time in the original protocol.
        That is,\\ suppose $V$ runs in time $T_V(\tau_V)$ and $P$ runs in time $T_P(\tau_P)$; then
        \begin{itemize}
            \item $\Vcan$ runs in time $O(T_V(\tau_V))$
            \item $\Pcan$ runs in time $O(T_V(\tau_V) + T_P(\tau_P))$
        \end{itemize}
        \item For $P$ with communication complexity $c_P$, the certificate size $|\pi|=O(B_V\log(1/\tau_P')) + c_P$. 
\end{enumerate}
\end{theorem}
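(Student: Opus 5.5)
The proof chains Lemma~\ref{lem:canonical} and Lemma~\ref{lem:fiat-shamir}, then tracks the complexity measures through both transformations.

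First, apply Lemma~\ref{lem:canonical} to the given mixed-message, private-query protocol $(P,V)$ with public verifier state. This yields a public-coin canonical protocol $(\Pcan,\Vcan)$ that $(\tau_P',\tau_V',\gamma)$-verifies $\G$, with the following properties:
\begin{itemize}
\item at most $\ell$ rounds;
\item $\Vcan$ has $\tau_V'$-query complexity $(1,B_V)$, and $\Pcan$ has $\tau_P'$-query complexity $(k_P+k_V,B_P+B_V)$;
\item running times $O(T_V(\tau_V))$ for $\Vcan$ and $O(T_V(\tau_V)+T_P(\tau_P))$ for $\Pcan$;
\item prover communication $O(B_V\log(1/\tau_P'))+c_P$.
\end{itemize}

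Second, feed $(\Pcan,\Vcan)$ into Lemma~\ref{lem:fiat-shamir}, with $\gamma_\can=\gamma$, round complexity $\ell$, and accuracies $\tau_P',\tau_V'$. This immediately gives that $(\Pfs,\Vfs)$ $(\tau_P',\tau_V',\gamma,\binom{t+\ell}{\ell}\gamma)$-certifies $\G$ in the ROM, which is part (a). Since round complexity is at most $\ell$ and $\binom{t+\ell'}{\ell'}$ is monotone in $\ell'$, fewer rounds only improves the bound.

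Parts (b)--(d) are complexity bookkeeping: Fiat-Shamir changes neither the statistical queries nor the computations.
\begin{itemize}
\item For (b), $\Vfs$ only recomputes $\ell$ random-oracle evaluations and then runs Phases 2 and 3 of $\Vcan$. Its only SQ access is the single batch issued by $\Validate$ in Phase 3, so it has query complexity $(1,B_V)$. $\Pfs$ simulates $\Pcan$ exactly, so it inherits $(k_P+k_V,B_P+B_V)$.
\item For (c), the only additional work beyond $\Vcan$ and $\Pcan$ is $\ell$ unit-cost random-oracle calls. Since $\ell\le T_V(\tau_V)$ (each epoch of $V$ takes at least a step), the running times remain $O(T_V(\tau_V))$ and $O(T_V(\tau_V)+T_P(\tau_P))$.
\item For (d), the certificate is the Phase 1 transcript of $\Pcan$'s messages together with the claimed coins. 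As in part (e) of Lemma~\ref{lem:canonical}, the coins need not be sent, since $\Vfs$ recomputes them from $f$. The certificate therefore has size $O(B_V\log(1/\tau_P'))+c_P$, plus the description of $h$.
\end{itemize}

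The main subtlety is making sure that universal soundness, and not just ordinary soundness, survives the composition. Lemma~\ref{lem:canonical} delivers soundness for an arbitrary $\D_V$ and $\tau_V'$-accurate $\O_V$. Lemma~\ref{lem:fiat-shamir} must then be invoked pointwise for each such fixed $(\D_V,\O_V)$, giving the cheating canonical prover the same oracle access as the Fiat-Shamir prover.

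A second point to check is that the hypothesis of Lemma~\ref{lem:fiat-shamir} is satisfied: it requires a canonical protocol that $(\tau_P,\tau_V,\gamma_\can)$-verifies $\G$, and here this is instantiated at the primed accuracies. The $\tau_V'$ oracle appearing in the pvCSV definition is exactly the oracle $\OVcan$ used by $\Vcan$, so the accuracies align without further loss.
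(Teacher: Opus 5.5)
Your proposal is correct and follows the paper's proof. You chain Lemma~\ref{lem:canonical} into Lemma~\ref{lem:fiat-shamir} to get part (a), and you observe that Fiat-Shamir, with unit-cost random-oracle calls, preserves the canonical protocol's query complexity, running time, and prover communication for parts (b)--(d). Your extra remarks (monotonicity of $\binom{t+\ell}{\ell}$ in the round count, and that the coins need not appear in $\pi$ since $\Vfs$ recomputes them from $f$) are small refinements the paper's proof leaves implicit.
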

\begin{proof} Let $(\Pcan,\Vcan)$ be the intermediate, canonical SQ protocol. By Lemma~\ref{lem:canonical}, $(\Pcan,\Vcan)$ $(\tau_P',\tau_V',\gamma)$-verifies $\G$. Since $V$ has epoch complexity $\ell$, $(\Pcan,\Vcan)$ has round complexity $\ell$. Therefore, by Lemma~\ref{lem:fiat-shamir}, $(\Pfs,\Vfs)$ $(\tau_P',\tau_V',\gamma,{t+\ell\choose \ell}\gamma)$-certifies $\G$ in the ROM. So part (a) holds.

Observe that the Fiat-Shamir transformation from $(\Pcan,\Vcan)$ to $(\Pfs,\Vfs)$ preserves many properties including the precision, query complexity, and runtime of the prover and verifier, (at least while assuming unit cost for evaluating the random oracle). Therefore, parts (b) and (c) follow directly from Lemma~\ref{lem:canonical}. 

Let $c_P$ be the communication complexity of $P$. By Lemma~\ref{lem:canonical}, $\Pcan$'s communication complexity is $O(B_V\log(1/\tau_P'))+c_P$. After applying the Fiat-Shamir transformation, the communication complexity is the canonical prover's communication complexity. So, part (d) holds.
\end{proof}

By application of known implementations for adaptive and non-adaptive SQ oracles in Proposition~\ref{prop:full-adaptive-upper} and Proposition~\ref{prop:non-adaptive}, we have the following.

\begin{corollary}
    Let $\G$, $\tau_P$, $\tau_V$, $\gamma$, $(P,V)$, pvCSV $(\Pfs,\Vfs)$ be defined as in the above theorem. Let $\tau_P'=\min\{\tau_P,\tau_V/3\}$. Suppose that both $P$ and $V$ make fully-adaptive queries, that is, $k_P=B_P$ and $k_V=B_V$. Then for all failure probability $\delta>0$, the pvCSV has sample complexity $(m_{\Pfs},m_{\Vfs})$ is as follows:
    \begin{mathpar}
        m_{\Pfs}= O\left(\frac{\sqrt{(k_V+k_P)\cdot\log((k_V+k_P)/\delta)\cdot\log(1/\delta)}}{\tau_P'^2}\right)
        \and 
        m_{\Vfs}=O\left(\frac{\log(k_V/\delta)}{\tau_V^2}\right)
    \end{mathpar}
\end{corollary}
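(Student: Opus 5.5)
The corollary should follow by plugging the concrete oracle implementations into Theorem~\ref{thm:pvcsv}(b), in the same way Theorem~\ref{thm:deterministic} was derived from Lemma~\ref{lem:deterministic_sq}. The plan has three steps.

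First, read off the query complexities. By Theorem~\ref{thm:pvcsv}(b), $\Pfs$ needs a $\tau_P'$-accurate oracle for $\D_P$ with query complexity $(k_P+k_V, B_P+B_V)$. Under the full-adaptivity assumption $k_P=B_P$ and $k_V=B_V$, this is a fully-adaptive sequence of $K:=k_P+k_V$ queries. The interleaving of prover and verifier queries inside $\Pcan$'s simulation is just one adaptive sequence, and the randomness $r_i$ revealed to the prover is part of what determines later queries. This is harmless, because the mechanism of Proposition~\ref{prop:full-adaptive-upper} handles arbitrary adaptively chosen queries, including ones depending on extra public information. Meanwhile, $\Vfs$ needs a $\tau_V'=\tau_V/3$-accurate oracle for $\D_V$ with query complexity $(1,B_V)=(1,k_V)$.

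Second, instantiate the oracles from samples, splitting the failure probability as $\delta/2$ to each side.
\begin{itemize}
\item For the prover, Proposition~\ref{prop:full-adaptive-upper} gives a $(K,K,\tau_P',\delta/2)$-implementation using $O\big(\sqrt{K\log(K/\delta)\log(1/\delta)}/\tau_P'^2\big)$ samples.
\item For the verifier, Proposition~\ref{prop:non-adaptive} gives a $(1,k_V,\tau_V/3,\delta/2)$-implementation using $O(\log(k_V/\delta)/(\tau_V/3)^2)=O(\log(k_V/\delta)/\tau_V^2)$ samples. The constant $9$ is absorbed into the $O(\cdot)$.
\end{itemize}

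Third, translate back to guarantees. On the event that both oracles are accurate, which has probability at least $1-\delta$ by a union bound, the oracle-model guarantees of Theorem~\ref{thm:pvcsv}(a) apply verbatim. One subtlety needs to be stated explicitly. For universal soundness, only the verifier's oracle matters, since the cheating prover is arbitrary. The statement "$\O_V$ is $\tau_V'$-accurate" must hold for the batch $\vec q_V$ determined by the prover's certificate. Because that certificate is fixed before $\Vfs$ draws its samples, the batch is non-adaptive with respect to the verifier's sample. So Proposition~\ref{prop:non-adaptive} applies, with the $\delta$ failure taken over the verifier's samples for each fixed certificate.

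This is also the step I expect to be the main obstacle: making sure the cheating prover cannot choose queries depending on the verifier's samples. In the ROM, the prover's view is independent of $\Vfs$'s fresh samples, so this holds. The rest is bookkeeping of the accuracy parameters.
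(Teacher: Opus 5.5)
Your proposal is correct and follows the paper's own argument, which plugs the $(k_P+k_V,B_P+B_V)$ prover and $(1,B_V)$ verifier query complexities from Theorem~\ref{thm:pvcsv}(b) into Propositions~\ref{prop:full-adaptive-upper} and~\ref{prop:non-adaptive} and takes a union bound. Your extra remark, that the cheating prover's batch must not depend on the verifier's samples, is a sound point the paper leaves implicit; note, though, that it follows from the certificate being published before verification (with the cheating prover not seeing those samples), not from the ROM itself.
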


\section{Beyond pvCSVs:  Strengths and Limits of SQ Protocols}
\label{sec:limits}
\newcommand{\del}{\mathsf{del}}
\newcommand{\Pdel}{\mathsf{P}_\del}
\newcommand{\Vdel}{\mathsf{V}_\del}
\newcommand{\OPdel}{\O_{\Pdel}}
\newcommand{\OVdel}{\O_{\Vdel}}

In this section, we consider the power and limitations of SQ protocols.
First, we reiterate that, statistically, SQ protocols are very powerful.
In even more generic settings than our pvCSV constructions, SQ algorithms/protocols can be delegated via interactive SQ protocol such that the verifier's sample complexity scales logarithmically in the number of queries.
Then, we show that SQ protocols---despite their sample efficiency---do not generically provide a computationally efficient verification scheme.
In particular, by a lower bound of \cite{mutreja2023pac} on the sample complexity required to PAC Verify certain VC classes, we show a subexponential lower bound on the query complexity of SQ verification for the same class.

\paragraph{Interactive Non-Adaptive SQ Verification.}

One of the key limitations in our construction of pvCSVs is the reliance on correctness under a public-state SQ oracle.
We show that it is possible to achieve non-adaptive statistical verification of SQ algorithms that are only correct under an oblivious SQ oracle, albeit with interaction.
This result is analogous to a result showed in the recent journal version of \cite{mutreja2023pac}.
Given any SQ algorithm, the verifier simply executes the algorithm using the prover as its oracle.
The prover responds interactively to each query.
And then at the end, the verifier checks the answers of the prover non-adaptively.

In fact, this simple idea also establishes that a large class of SQ protocols---even more general than those covered by Lemma~\ref{lem:canonical}---can be verified using non-adaptive statistical complexity.
Specifically, for any public-query (private-coin) SQ protocol that verifies $\G$ with public verifier state, there is an implementation of the protocol that only requires non-adaptive verifier sample complexity.

\begin{proposition}
    \label{prop:public-query}
    Fix a learning goal $\G$, $\tau_P,\tau_V > 0$, and $\gamma > 0$, and let $\tau_P' = \min\{\tau_P,\tau_V/3\}$ and $\tau_V' = \tau_V/3$.
    Suppose $(P,V)$ is a public-query SQ protocol that $(\tau_P,\tau_V,\gamma)$-verifies $\G$ with public verifier state, where $P$ has $\tau_P$-query complexity $(k_P,B_P)$ and $V$ has $\tau_V$-query complexity $(k_V,B_V)$.
    There is an SQ protocol $(P',V')$ that $(\tau_P',\tau_V',\gamma)$-verifies $\G$ with public verifier state, where $P'$ has $\tau_P'$-query complexity $(k_P + k_B,B_P + B_V)$ and $V'$ has non-adaptive $\tau_V'$-query complexity $(1,B_V)$.
\end{proposition}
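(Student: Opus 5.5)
The plan is to build $(P',V')$ by having the verifier delegate every one of its statistical queries to the prover, interactively, in the order it would have issued them, and then check all delegated answers in one non-adaptive batch at the end. Concretely, $V'$ runs $V$ with its private coins. Whenever $V$ would issue a query $q$ to $\O_V$, the new verifier $V'$ sends $q$ to the prover and waits for a claimed answer $a$. Since $(P,V)$ is public-query, this reveals nothing beyond what $V$'s next message would reveal anyway; the only difference is the timing. $V'$ feeds $a$ to $V$ as if it were $\O_V(q)$, and relays the ordinary messages between $V$ and the prover unchanged.

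The honest prover $P'$ runs $P$ on a $\tau_P'$-accurate oracle $\O_{P'}$ for $\D_P$ and answers each delegated query by calling $\O_{P'}(q)$. This gives $P'$ query complexity $(k_P+k_V,B_P+B_V)$. When $V$ halts with some $h\ne\bot$, $V'$ calls $\Validate(\vec q,\vec a,\tau_V/3,\tau_V)$ on the recorded queries and answers, using a single batch to a $\tau_V/3$-accurate oracle; this gives $V'$ the non-adaptive query complexity $(1,B_V)$.

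Completeness follows the argument of Lemma~\ref{lem:canonical}. Since $\D_P=\D_V$ and $\tau_P'\le\tau_V/3$, every delegated answer is within $\tau_P'\le\tau_V$ of the truth. So the simulated $V$ sees a legitimate $\tau_V$-accurate oracle, one that may depend on the transcript, which is allowed because the guarantee is stated with public verifier state. Meanwhile $P$ sees a $\tau_P$-accurate oracle. Hence $h\in\GoodV$ with probability at least $1-\gamma$. Separately, $\Validate$ always passes by the triangle inequality, as in Lemma~\ref{lem:deterministic_sq}.

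Soundness is the main step, and I would prove it by a reduction as in Lemma~\ref{lem:canonical}. Given any cheating $\tilde P'$, I would build a cheating prover $\tilde P$ and a $\tau_V$-accurate oracle $\tilde\O_V$ for $(P,V)$. Both simulate $\tilde P'$ on the public transcript (the public queries together with $V$'s messages). $\tilde\O_V$ answers $q$ with $\tilde P'$'s claimed answer if that answer is $\tau_V$-accurate for $\D_V$, and with the true mean otherwise; $\tilde P$ forwards $\tilde P'$'s ordinary messages.

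The obstacle is that $\tilde\O_V$ must be a legal oracle that can compute its answers. Here the public-state guarantee is what makes this work. The oracle may depend on $V$'s state, so $\tilde\O_V$ may even use $V$'s coins. It is also needed because $\tilde P'$ sees each query before answering, so $\tilde\O_V$ and $\tilde P$ must jointly reproduce an adaptive adversary that knows the public queries. The protocol's public-query property ensures that $\tilde P$ has exactly the information $\tilde P'$ had when it produced its messages. The public-query property is what licenses this: the soundness guarantee of $(P,V)$ is against provers who see the queries.

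Coupling the coins, on the event that $V'$ accepts, $\Validate$ certifies that every claimed answer is $\tau_V$-accurate for $\D_V$. So no answer was replaced, the two views coincide, and $V$ accepts the same $h$. Therefore $\Pr[V'\text{ accepts }h\notin\GoodV]\le\gamma$.

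(Note that $k_B$ in the statement should read $k_V$.)
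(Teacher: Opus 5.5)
Your proposal is correct and is essentially the paper's own argument. The paper proves this in one line: the prover answers the verifier's statistical queries interactively, the verifier runs a single non-adaptive validation batch at the end, and completeness and soundness follow as in Lemma~\ref{lem:canonical}. Your reduction building $\tilde P$ and $\tilde\O_V$ from $\tilde P'$ is exactly that argument written out for the private-coin setting, using public verifier state so the oracle can rebuild the transcript and public queries so $\tilde P$ can simulate $\tilde P'$. You are also right that $k_B$ in the statement is a typo for $k_V$.
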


Proposition~\ref{prop:public-query} follows, again, by asking the prover to make the verifier's statistical queries.
The completeness, soundness, and resulting bounds are analogous to those established by Lemma~\ref{lem:canonical}.

\paragraph{Query Lower Bound for SQ Verification of a VC Class.}
We show that the statistical upper bound we achieve for the SQ verifier actually implies a computational lower bound for SQ protocols.
Specifically, there exists a hypothesis class of VC dimension $d$ that cannot be SQ verified, even using the most general interactive SQ protocols, using polynomially many queries in $d$.
Our lower bound piggybacks off of the lower bound proved in \cite{mutreja2023pac} for PAC Verification.\footnote{Informally, PAC Verification is the problem of delegation of learning, for the specific learning goal of Agnostic PAC learning.
We refer the unfamiliar reader to \cite{goldwasser2021interactive}.}

\begin{theorem*}[Restatement of Theorem~2.1 of \cite{mutreja2023pac}]
Fix $\eps > 0$, $\delta = 1/3$, and let $(P,V)$ be an interactive proof system for learning.
For any hypothesis class $\H$ of VC Dimension $d$, if $(P,V)$ PAC verifies $\H$ (for all distributions $\D$) with accuracy $\eps$ and failure probability $\delta$, then the verifier $V$ must use at least $m_V \ge \Omega(\sqrt{d}/\eps^2)$ i.i.d.\ samples from $\D$.
\end{theorem*}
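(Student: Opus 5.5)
The plan is a reduction from a uniformity-testing (Paninski-style) lower bound. I build a family of agnostic instances that are pairwise far in terms of which hypotheses are good. From any PAC verifier using $m_V = o(\sqrt{d}/\eps^2)$ samples I extract a cheating prover that violates soundness.

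\emph{Hard instances.} Let $S=\{x_1,\dots,x_d\}$ be a set shattered by $\H$, and take the marginal on $\X$ uniform over $S$. For a sign vector $z\in\{\pm1\}^d$, let $\D_z$ label $x_i$ with $1$ with probability $\tfrac12 + c\eps z_i$, for a suitable constant $c$. Let $\D_0$ label every point uniformly at random. Under $\D_z$, the Bayes-optimal hypothesis (which lies in $\H$ restricted to $S$, by shattering) is $h_z$. Any $h$ disagreeing with $h_z$ on more than a constant fraction of $S$ has excess error greater than $\eps$, once $c$ is chosen appropriately. Hence for $z'$ uniform and independent of $z$, with high probability no hypothesis is simultaneously $\eps$-good for both $\D_z$ and $\D_{z'}$, since $h_z$ and $h_{z'}$ differ on about $d/2$ points.

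\emph{Indistinguishability.} This is the main technical step. For $m=o(\sqrt d/\eps^2)$ I claim
\[
\mathrm{TV}\bigl(\E_z[\D_z^{\otimes m}],\,\D_0^{\otimes m}\bigr)=o(1).
\]
I would prove it by bounding the $\chi^2$-divergence. Write $1+\chi^2 = \E_{z,z'}\bigl[\E_{\D_0^{\otimes m}}[\tfrac{d\D_z^{\otimes m}}{d\D_0^{\otimes m}}\tfrac{d\D_{z'}^{\otimes m}}{d\D_0^{\otimes m}}]\bigr]$. Poissonize the number of samples per point to decouple coordinates. The expression then becomes $\E\exp\bigl(O(m\eps^2/d)\sum_i z_iz'_i\bigr)\le \exp\bigl(O(m^2\eps^4/d)\bigr)$, which is $1+o(1)$ exactly when $m\ll\sqrt d/\eps^2$. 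The delicate part is getting the collision-style cancellation so that the first-order terms in $\eps$ vanish; this is the standard identity-testing calculation.

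\emph{Cheating prover and conclusion.} Draw $z$ and $z'$ independently and uniformly. Consider two worlds. In world A, the verifier's samples come from $\D_{z'}$ and the prover honestly runs $P$ on $\D_{z'}$. In world B, the verifier's samples come from $\D_z$, while the prover still runs the honest $P$ for $\D_{z'}$; this is a legitimate cheating strategy, since the prover is all-powerful and knows $z'$. In world A, completeness gives that with probability at least $1-\delta$ the verifier accepts some $h$ that is $\eps$-good for $\D_{z'}$.

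The joint law of (verifier samples, $z'$) in each world is within $\mathrm{TV}\bigl(\E_z\D_z^{\otimes m},\D_0^{\otimes m}\bigr)=o(1)$ of the product law $\D_0^{\otimes m}\times\mathrm{Unif}(z')$. For world B this holds because $z$ is independent of $z'$; for world A it holds because the samples' marginal is the mixture. By the triangle inequality the two worlds are $o(1)$-close. So in world B the verifier also accepts an $h$ that is good for $\D_{z'}$ with probability at least $1-\delta-o(1)$, and such an $h$ is bad for $\D_z$ with probability $1-o(1)$.

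This contradicts soundness $\delta=1/3$ for some fixed $z$, by averaging over $z$. Therefore $m_V\ge\Omega(\sqrt d/\eps^2)$.
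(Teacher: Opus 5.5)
The paper does not prove this statement; it imports it from Mutreja--Shafer. So I am judging your argument on its own terms. Your hard instances are fine. So is your $\chi^2$ bound $\mathrm{TV}(\E_z[\D_z^{\otimes m}],\D_0^{\otimes m})=o(1)$ for $m=o(\sqrt d/\eps^2)$.

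\textbf{The gap: worlds A and B are not close.} In world A the verifier's samples are drawn from $\D_{z'}$. So the joint law of (samples, $z'$) is at total variation distance $\E_{z'}\,\mathrm{TV}(\D_{z'}^{\otimes m},\D_0^{\otimes m})$ from $\D_0^{\otimes m}\times\mathrm{Unif}(z')$. It is not at distance $\mathrm{TV}(\E_{z'}[\D_{z'}^{\otimes m}],\D_0^{\otimes m})$. Closeness of the mixture to $\D_0^{\otimes m}$ says nothing about the joint law with the mixing variable.

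Concretely, fix $z'$ and look at the fraction of samples whose label agrees with $h_{z'}$:
\begin{itemize}
\item under $\D_{z'}$ it concentrates at $1/2+c\eps$;
\item under $\D_0$ it concentrates at $1/2$;
\item under $\D_z$ with $z$ independent of $z'$ it concentrates at $1/2\pm O(\eps/\sqrt d)$.
\end{itemize}
So the distance above is close to $1$ once $m\gg 1/\eps^2$, which is exactly the range $1/\eps^2\ll m\ll\sqrt d/\eps^2$ your bound needs.

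This is not just a technicality. Your cheating prover runs $P$ on $\D_{z'}$, so its messages carry information about $z'$; in particular they steer the verifier toward hypotheses close to $h_{z'}$. The verifier can compare that information against its own samples using only $O(1/\eps^2)$ draws. Worlds A and B are therefore distinguishable, and nothing transfers the $1-\delta$ acceptance probability from A to B.

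\textbf{How to repair it.} Choose a cheating prover whose behavior does not depend on $z$: let it simulate the honest prover for the pure-noise distribution $\D_0$. It can do this because $\D_0$ is explicit.
\begin{itemize}
\item Under $\D_0$ every hypothesis has error exactly $1/2=\mathrm{OPT}$, so completeness forces acceptance with probability at least $2/3$.
\item The simulated prover is independent of $z$, so the verifier's output is a fixed randomized function of its $m_V$ samples. Your $\chi^2$ bound now applies legitimately.
\end{itemize}

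To finish, turn this into a uniformity tester rather than arguing about disjoint sets of good hypotheses. Run $V$ against the simulated $\D_0$-prover. If it accepts some $h$, estimate $\mathrm{err}(h)$ on $O(1/\eps^2)$ fresh samples.
\begin{itemize}
\item Under $\D_0$ the estimate is near $1/2$.
\item Under any fixed $\D_z$, take $c\ge 2$. Soundness guarantees that with probability at least $2/3$ the verifier either rejects or outputs $h$ with $\mathrm{err}_{\D_z}(h)\le 1/2-(c-1)\eps$. The error estimate detects this.
\end{itemize}
This distinguishes $\D_0^{\otimes m}$ from $\E_z[\D_z^{\otimes m}]$ with constant advantage using $m=m_V+O(1/\eps^2)$ samples. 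Your $\chi^2$ bound then forces $m_V\ge\Omega(\sqrt d/\eps^2)$ for $d$ larger than a constant.
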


Stringing this lower bound on the sample complexity of PAC verifying a VC class with our upper bound on the sample complexity of SQ verification, we immediately get a query complexity lower bound for SQ verification.
Taking our upper bound of $\log(k)/\tau^2$ from Proposition~\ref{prop:public-query}, in terms of the number of statistical queries $k$, and their lower bound of $\sqrt{d}/\eps^2$ in terms of the VC dimension $d$, we obtain the following corollary.
\begin{corollary}
    \label{cor:LB}
    Fix $\gamma = 1/3$ and fix the verifier tolerance $\tau_V = \Omega(\eps) > 0$ in terms of the agnostic learning accuracy $\eps$.
    For every hypothesis class $\H$ of VC dimension $d$, and for any $\tau_P > 0$, any public-query SQ protocol that $(\tau_P,\tau_V,\gamma)$-SQ verifies $\eps$-Agnostic PAC learning of $\H$ with public verifier state requires the verifier to make $k = 2^{\Omega(\sqrt{d})}$ statistical queries.
\end{corollary}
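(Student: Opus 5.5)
We argue by contradiction, chaining the upper bound of Proposition~\ref{prop:public-query} with the sample-complexity lower bound of \cite{mutreja2023pac}. Suppose $(P,V)$ is a public-query SQ protocol that $(\tau_P,\tau_V,1/3)$-SQ verifies $\eps$-agnostic PAC learning of $\H$ with public verifier state, where $V$ makes $k$ statistical queries and $\tau_V = \Omega(\eps)$. First apply Proposition~\ref{prop:public-query}. This yields $(P',V')$ that $(\tau_P',\tau_V/3,1/3)$-verifies the same goal, in which $V'$ issues a single non-adaptive batch of at most $k$ queries.

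Next, implement the oracles from samples so that $(P',V')$ becomes a genuine PAC verification protocol in the sense of \cite{goldwasser2021interactive}.
\begin{itemize}
\item For the verifier, use the empirical oracle of Proposition~\ref{prop:non-adaptive} with $m_V = O(\log(k/\delta)/\tau_V^2)$ samples, where $\delta$ is a small constant such as $1/100$.
\item The honest prover has unbounded resources, so its oracle can be implemented from any number of samples (for example via Proposition~\ref{prop:full-adaptive-upper}, or exactly, since the prover may know $\D$). This does not matter for the lower bound, which only constrains $m_V$.
\end{itemize}
Union-bounding the oracle failure $\delta$ with the protocol error $\gamma$ gives completeness and soundness error at most $1/3 + O(\delta)$. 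This is a constant below $1/2$. We expect that error amplification, or the fact that the lower-bound proof of \cite{mutreja2023pac} works for any constant error below $1/2$, lets us match their $\delta = 1/3$ setting. If amplification is needed, we repeat the protocol $O(1)$ times and take a majority, or an accepted hypothesis passing a final check; this changes $m_V$ by only a constant factor.

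Soundness must be handled with care. The PAC-verification cheating prover sees only messages, not the verifier's samples, whereas the SQ soundness of $(P',V')$ holds even against a prover holding $\O_{V'}$. Our sample-based verifier is therefore a special case, so soundness transfers on the event that the empirical oracle is $\tau_V/3$-accurate. That event has probability at least $1-\delta$ regardless of the prover, because the batch $\vec q$ is chosen by the prover before $V'$ looks at its samples. This independence is the main point to verify, and it holds because $V'$'s only sample access is the final batch validation.

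Finally, compare the bounds. By Theorem~2.1 of \cite{mutreja2023pac}, any such verifier needs $m_V \ge \Omega(\sqrt d/\eps^2)$. Hence
\[
\frac{C\log k}{\tau_V^2} \;\ge\; c\,\frac{\sqrt d}{\eps^2}.
\]
With $\tau_V = \Theta(\eps)$ this gives $\log k \ge \Omega(\sqrt d)$, that is, $k \ge 2^{\Omega(\sqrt d)}$. If only $\tau_V = \Omega(\eps)$ is assumed, the inequality is stronger whenever $\tau_V$ is larger. We expect the main obstacle to be the bookkeeping in the second step: the translation from the SQ oracle model, with its separate error parameters $\gamma$ and $\delta$, into the sample-based PAC-verification model with failure probability $1/3$. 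The algebra at the end is routine.
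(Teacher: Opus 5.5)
Your proposal is correct and follows the same route as the paper. You apply Proposition~\ref{prop:public-query} to collapse the verifier's queries into one non-adaptive batch, implement that batch with $O(\log(k)/\tau_V^2)$ samples via Proposition~\ref{prop:non-adaptive}, and compare against the $\Omega(\sqrt{d}/\eps^2)$ lower bound of \cite{mutreja2023pac} with $\tau_V = \Omega(\eps)$ to get $\log k = \Omega(\sqrt{d})$. The paper states only this one-line chaining; your extra bookkeeping fills in details it leaves implicit, namely that the batch is fixed before the verifier uses its samples (so SQ soundness transfers to the sample-based verifier) and the $1/3$ versus $1/3+O(\delta)$ error constant.
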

That is, even for the strongest model of SQ protocol we consider---interactive, public-query SQ protocols that use a public-state SQ oracle---when the tolerance of the SQ oracle is fixed $\tau_V \approx \eps$, verification may be computationally inefficient, despite statistical efficiency.

\section{Differential Privacy and SQ Verification}

\label{sec:dp-verification}

In many statistical analyses, Differential Privacy (DP) \cite{dwork2006calibrating} is a desirable property to satisfy to maintain the privacy of individuals' data within a database.\footnote{DP is the gold standard notion for privacy-protections in statistical analyses.  We refer the unfamiliar reader to introductory materials on DP, including \cite{DworkRoth2014,Vadhan2017,Kamath2020}.  For the sake of presentation, we omit background on the Exponential Mechanism of \cite{McSherryTalwar2007}.}
DP protects individuals' privacy by requiring stability in a randomized algorithm's behavior on neighboring databases $D,D'$ that differ on a single element.
\begin{definition}[Differential Privacy \cite{dwork2006calibrating}]
    Fix a domain $\X$ and range $\R$, and $\eps,\delta > 0$.
    A randomized algorithm $M:\X^* \to \R$ is \define{\mathdefine{(\eps,\delta)}-differentially private} if for all neighboring databases $D,D' \in \X^*$ and for all measurable subsets $S \subseteq \R$,
\[
  \Pr[M(D) \in S] \le e^{\eps} \cdot \Pr[M(D') \in S] + \delta.
\]
\end{definition}
DP provides rigorous protections to individuals, but comes at a cost.
Even for non-adaptive statistical queries, in high-dimensional settings ($d \ge k$), answering $k$ queries requires sample complexity scaling polynomially in $k$.
Concretely, to answer $k$ queries with $\tau$ tolerance under pure $(\eps,0)$-DP requires $\Omega(k/\tau\eps)$ i.i.d.\ samples from $\D$ \cite{HardtTalwar2010}; under approximate $(\eps,\delta)$-DP requires $\Omega(\sqrt{k \cdot \log(1/\delta)}/\tau\eps)$ \cite{SteinkeUllman2016}.
Both of these results are tight (up to poly-logarithmic factors); in fact, the upper bound on answering adaptive statistical queries is tightly connected to the upper bound on answering queries under approximate DP.

Certifying the results of a statistical analysis, while maintaining DP with respect to the verifier's samples, offers a potential for significant savings.
We show, generically, that the verifiers for our pvCSVs (in fact, for all of our SQ protocols) can be implemented under DP using essentially the same non-adaptive sample complexity.

\begin{algorithm}[t]
\label{alg:dpvalidate}
\caption{$\mathsf{DPValidate}(\vec v, \vec a, \tau_P, \tau)$~~ i.e., Differentially-Private Statistical Validation}
\SetKwInOut{Input}{Input}
\SetKwInOut{Setup}{Setup}

\Input{Batch of statistical queries $\vec q$;\\ List of candidate answers to queries $\vec a$;\\
Candidate accuracy $\tau_P > 0$ and final accuracy $\tau > 0$, where $\tau_P \le \tau/3$}
\Setup{$m_V$ i.i.d.\ samples drawn from $\D_V$; $x_1,\hdots,x_{m_V}$\\
Sensitiviy of empirical queries $\Delta = 1/m_V$}

\BlankLine
\BlankLine

\For{$j = 1,\hdots,B$}{
    Define $\hat{a}_j = \frac{1}{m_V} \sum_{i=1}^{m_V} q_j(x_i)$
    
    Define $\nu_j = |\hat{a}_j - a_j|$
}
Use the Exponential Mechanism to sample a noisy maximum $\nu^*$ over $\{\nu_j\}$ according to:
\begin{gather*}
    \Pr[\nu^* = \nu_j] \propto \exp\left(\frac{\eps \cdot \nu_j}{2 \Delta }\right)
\end{gather*}

\uIf{$\nu^* > \tau/2$}
{\Return False}
\Return True
\end{algorithm}

\begin{proposition}
    \label{prop:dp-verifier}
There exists an $(\eps,0)$-DP implementation of the
    non-adaptive statistical validation, given in Algorithm~\ref{alg:dpvalidate},
with failure probability $\beta$ that uses $m_V$ i.i.d.\ samples from $\D_V$, where for any number of adaptive queries $k$,
    $$m_V \le O\left(\frac{\log(k/\beta)}{\tau^2} + \frac{\log(k/\beta)}{\tau\eps}\right).$$
\end{proposition}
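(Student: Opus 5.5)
The plan is to analyze Algorithm~\ref{alg:dpvalidate} directly, treating privacy and accuracy separately. Throughout, we fix the batch $\vec q = (q_1,\hdots,q_B)$ and candidate answers $\vec a$, which come from the (public) certificate and do not depend on the verifier's samples. Here $B$ is the number of queries checked, so $B = k$ in the fully-adaptive case.

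\emph{Privacy.} Every $\hat a_j$ is an empirical mean of a $\{0,1\}$-valued predicate. Changing one sample moves $\hat a_j$ by at most $1/m_V = \Delta$. By the reverse triangle inequality, each score $\nu_j = |\hat a_j - a_j|$ therefore has sensitivity at most $\Delta$. Sampling $\nu^*$ with probability $\propto \exp(\eps\nu_j/2\Delta)$ is exactly the Exponential Mechanism with utility $\nu_j$. By the standard guarantee of \cite{McSherryTalwar2007}, it is $(\eps,0)$-DP. The final threshold comparison is post-processing, so the Boolean output of $\mathsf{DPValidate}$ is $(\eps,0)$-DP. The rest of the verifier (the Phase 2 simulation) depends only on the certificate and not on the samples, so it does not affect privacy.

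\emph{Accuracy.} We need two facts, each holding with probability $1-\beta/2$.
\begin{enumerate}
\item By Proposition~\ref{prop:non-adaptive} with $m_V \ge C\log(B/\beta)/\tau^2$, every $|\hat a_j - \E_{\D_V}[q_j]| \le \tau/12$ (any small constant fraction of $\tau$ works).
\item By the Exponential Mechanism utility bound, $\nu^* \ge \max_j \nu_j - \frac{2\Delta}{\eps}\ln(2B/\beta)$. Taking $m_V \ge C'\log(B/\beta)/(\tau\eps)$ makes this additive loss at most $\tau/12$. Since $\nu^*$ is always one of the $\nu_j$, we also have $\nu^* \le \max_j \nu_j$ deterministically.
\end{enumerate}

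\emph{Completeness.} Suppose every $a_j$ is $\tau_P$-accurate with $\tau_P \le \tau/3$. Then each $\nu_j \le \tau/3 + \tau/12 < \tau/2$, so $\nu^* < \tau/2$ and the check returns True. This direction needs no utility bound, only fact 1.

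\emph{Soundness.} Suppose the check returns True. Then $\max_j \nu_j \le \nu^* + \tau/12 \le \tau/2 + \tau/12$. Adding the sampling error from fact 1, every $|a_j - \E_{\D_V}[q_j]| \le \tau/2 + \tau/6 < \tau$. So the answers are $\tau$-accurate for $\D_V$, which is all that the soundness arguments of Lemma~\ref{lem:deterministic_sq} and Lemma~\ref{lem:canonical} use from $\Validate$.

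A union bound over the two events gives failure probability $\beta$. Summing the two sample requirements gives the claimed $m_V$, with $B = k$.

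\emph{Main obstacle.} The delicate step is the constant bookkeeping around the threshold $\tau/2$. The honest gap $\tau_P + \tau/12$ must stay below $\tau/2$, while the cheating bound $\tau/2 + (\text{EM loss}) + (\text{sampling error})$ must stay at most $\tau$. This forces splitting the slack roughly as above and fixes the constants in both sample terms.

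A secondary point is that the $\hat a_j$ are computed on the fixed sample $x_1,\hdots,x_{m_V}$ while $\vec q$ could be chosen by a cheating prover. The queries must be fixed before the verifier draws samples, which holds because the certificate is published first. This keeps the non-adaptive bound of Proposition~\ref{prop:non-adaptive} valid.
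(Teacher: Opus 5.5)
Your completeness and soundness bookkeeping is correct and matches the paper's argument; the paper uses slack $\tau/16$ per error source instead of $\tau/12$ and argues soundness contrapositively. The privacy step, however, fails. The Exponential Mechanism's $(\eps,0)$-DP guarantee covers only the release of the \emph{selected index} $j^*$, not the exact data-dependent score of that index. $\mathsf{DPValidate}$ thresholds $\nu^* = \nu_{j^*} = |\hat a_{j^*} - a_{j^*}|$. This value is not a function of $j^*$ and the public pair $(\vec q,\vec a)$ alone, so the comparison $\nu^* > \tau/2$ is not post-processing.

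Concretely, take $B=1$, or all $q_j$ and $a_j$ identical. Then $\nu^* = \nu_1$ with probability $1$, and the output is the deterministic predicate $|\hat a_1 - a_1| \le \tau/2$. Two neighboring samples on which $\hat a_1$ straddles $a_1+\tau/2$ produce True and False, each with probability $1$. This violates $(\eps,\delta)$-DP for every finite $\eps$ and every $\delta<1$. The fact you exploit to get completeness for free---that $\nu^* \le \max_j \nu_j$ deterministically because $\nu^*$ is always an exact score---is precisely the leak. The paper's sketch invokes the same mechanism and never checks privacy explicitly, so it shares this defect: the algorithm as written is not private.

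The existence claim still holds once the score itself is privatized. Each $\nu_j$ has sensitivity $\Delta$, so $\max_j \nu_j$ does too. Release $\tilde\nu = \max_j \nu_j + Z$ with $Z \sim \mathrm{Lap}(\Delta/\eps)$, and reject iff $\tilde\nu > \tau/2$. This is $(\eps,0)$-DP by the Laplace mechanism plus post-processing. With probability $1-\beta/2$ we have $|Z| \le \ln(2/\beta)/(m_V\eps)$, which is at most $\tau/16$ once $m_V \ge 16\ln(2/\beta)/(\tau\eps)$; this is within the stated bound. Alternatively, spend $\eps/2$ on the Exponential Mechanism to choose $j^*$ and $\eps/2$ on Laplace noise added to $\nu_{j^*}$, and use basic composition.

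Your arithmetic then carries over, except that the noise is now two-sided, so completeness also needs the noise bound. With slack $\tau/16$ per source:
\begin{itemize}
\item Honest answers give $\tilde\nu \le \tau/3 + \tau/16 + \tau/16 < \tau/2$, so the verifier accepts.
\item Acceptance implies $|a_j - \E_{\D_V}[q_j]| \le \tau/2 + \tau/8 < \tau$ for all $j$.
\end{itemize}
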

\begin{proof}[Proof sketch]
The proposition follows from a standard application of the Exponential Mechanism \cite{McSherryTalwar2007}.
The original non-adaptive statistical validation step evaluates each query on the samples from $\D_V$.\footnote{This is the simplest implementation of Algorithm~\ref{alg:validate} given i.i.d.\ samples.}
Namely, the original verifier computes $\hat{a}_1,\hdots,\hat{a}_k$ where $\hat{a}_j = \frac{1}{m_V}\sum_{i=1}^{m_V} q_j(x_i)$.
Instead, the new verifier checks the (noisy) maximum difference between the statistics reported by the prover and the empirical statistics on their samples, $|a_j - \hat{a}_j|$.

To compute the new verifier's sample complexity, we need to reason about the accuracy of the empirical statistics $\hat{a}_1,\hdots,\hat{a}_k$, as well as the accuracy of the exponential mechanism.
For target tolerance $\tau$ and prover tolerance $\tau/3$ (as in our protocols), we will insist that each of these components achieves $\tau/16$ additive error with all but $\beta/2$ probability each.

First, the accuracy of the empirical statistics:
per Proposition~\ref{prop:non-adaptive}, $m_V$ can scale as $O(\log(k/\beta)/\tau^2)$.
Then, the accuracy of the release of the maximum difference $\max_{1\le j \le k}|a_j - \hat{a}_j|$:
per \cite{McSherryTalwar2007}, with all but $\beta/2$ probability, the exponential mechanism achieves error $\alpha$ where
\begin{gather*}
    \alpha \le \frac{2\Delta}{\eps} \cdot \log(2k/\beta)
\end{gather*}
where $\Delta = 1/m_V$ is the sensitivity of the release from a database of $m_V$ samples.
Thus, the exponential mechanism guarantees $\tau/16$-accuracy to the empirical statistics for some $m_V \le O(\log(k/\beta)/\tau\eps)$.
With these accuracies fixed with all but $\beta$ total failure probability, we can define the new verifier and analyze Completeness and Soundness.

For a sequence of $k$ fully-adaptive SQs, Algorithm~\ref{alg:dpvalidate} runs the exponential mechanism to release the noisy maximum difference $\max_{1\le j \le k} |a_j - \hat{a}_j|$.
If the difference is less than $\tau/2$, it accepts; otherwise, it rejects.

\emph{Completeness:}~~Per our earlier protocols, we assume the honest prover reports each $a_j$ within $\tau_P = \tau/3$ of the true query value.
The verifier's empirical statistics $\hat{a}_j$ are within $\tau/16$ of the true query value, and their difference is released with $\tau/16$-accuracy, so in sum, the reported maximum difference will be strictly less than $\tau/2$, with all but $\beta$ probability.

\emph{Soundness:}~~Suppose there is some statistic $a_j$ that was reported with more than $\tau$ additive error from the true value, so the SQ guarantee is violated.
Again, the verifier's accuracy ensures that the difference $|a_j - \hat{a}_j|$ is at least $15\tau/16$, which is released with at most $\tau/16$ additive error.
So, with all but $\beta$ probability, the reported maximum difference is strictly more than $\tau/2$.
\end{proof}

\clearpage
\section*{Acknowledgements}
The authors thank Noah Stephens-Davidowitz for significant conversations at the start of this work and feedback throughout the project.
We also thank Robert Kleinberg, Jonathan Shafer, and Nick Spooner for helpful discussions.

\bibliographystyle{alpha}
\bibliography{refs}

\end{document}